\newcommand{\dataset}{{\cal D}}
\let\argmin\relax\DeclareMathOperator*{\argmin}{arg\,min}
\renewcommand{\P}{\mathbb{P}}
\newcommand{\Exp}{\mathrm{E}}
\newcommand{\R}{\mathbb{R}}
\newcommand{\epsi}{\epsilon}
\newcommand{\1}{\mathbbm{1}}
\newcommand{\ind}{\perp\!\!\!\!\perp} 
\newcommand{\rank}{\mathrm{rank}}
\newcommand{\vspan}{\mathrm{span}}
\renewcommand{\norm}[1]{\left\lVert#1\right\rVert}
\newcommand{\train}{\mathrm{tr}}
\newcommand{\test}{\mathrm{te}}
\newcommand{\supp}{\mathrm{supp}}
\newcommand{\col}{\mathrm{im}}
\newcommand{\interior}[1]{%
  {\kern0pt#1}^{\mathrm{o}}%
}
\newcommand{\modelname}{SIMDG}
\newcommand{\cfname}{BCF}
\newcommand{\risk}{\mathcal{R}}
\newcommand{\imp}{{\mathcal{I}_0}}
\newcommand{\cf}{\star}
\newcommand{\ols}{{\operatorname{LS}}}
\renewcommand{\d}{\mathrm{d}}
\newcommand{\N}{\mathbb{N}}
\newcommand{\cF}{\mathcal{F}}
\newcommand{\cM}{\mathcal{M}}
\newcommand{\cN}{\mathcal{N}}
\newcommand{\BlackBox}{\rule{1.5ex}{1.5ex}}
    \renewenvironment{proof}{\par\noindent{\bf Proof\ }}{\hfill\BlackBox\\[2mm]}
    \newenvironment{proof}{\par\noindent{\bf Proof\ }}{\hfill\BlackBox\\[2mm]}
\newtheorem{example}{Example} 
\newtheorem{theorem}{Theorem}
\newtheorem{lemma}[theorem]{Lemma} 
\newtheorem{proposition}[theorem]{Proposition} 
\newtheorem{remark}[theorem]{Remark}
\newtheorem{cor}[theorem]{Corollary}
\newtheorem{definition}[theorem]{Definition}
\newtheorem{assumption}{Assumption}
\newtheorem{setup}{Setting}
\definecolor{myRubineRed}{rgb}{0.82, 0.0, 0.34}
\definecolor{myDeepGreen}{rgb}{0.0, 0.4, 0.0}
\title{\bf Boosted Control Functions:\\
Distribution generalization and invariance\\
in confounded models}
\author{
Nicola Gnecco\textsuperscript{1}\thanks{Part of this work was conducted while NG and JP were at the University of Copenhagen.} ,\,\,
Jonas Peters\textsuperscript{2}\footnotemark[1],\,
Sebastian Engelke\textsuperscript{3},\, 
and 
Niklas Pfister\textsuperscript{4}
}
\date{\textsuperscript{1}Imperial College London, \textsuperscript{2}ETH Zürich,  \textsuperscript{3}University of Geneva, \\ \textsuperscript{4}University of Copenhagen,  
}
\begin{document}

\maketitle

\begin{abstract}

  Modern machine learning methods and the availability of large-scale data have significantly advanced our ability to predict target quantities from large sets of covariates. 
  However, these methods often struggle under distributional shifts, particularly in the presence of hidden confounding. While the impact of hidden confounding is well-studied in causal effect estimation, e.g., instrumental variables, its implications for prediction tasks under shifting distributions remain underexplored.
  This work addresses this gap by introducing a strong notion of invariance that, unlike existing weaker notions, allows for distribution generalization even in the presence of nonlinear, non-identifiable structural functions. Central to this framework is the Boosted Control Function (BCF), a novel, identifiable target of inference that satisfies the proposed strong invariance notion and is provably worst-case optimal under distributional shifts. The theoretical foundation of our work lies in Simultaneous Equation Models for Distribution Generalization (SIMDGs), which bridge machine learning with econometrics by describing data-generating processes under distributional shifts.
  To put these insights into practice, we propose the ControlTwicing algorithm to estimate the BCF using nonparametric machine-learning techniques and 
  study its generalization performance on synthetic and real-world datasets compared to robust and empirical risk minimization approaches.
  \\
  \\
  {\it Keywords:} 
  distribution generalization,
  invariance,
  causality,
  under-identification,
  simultaneous equation models.
\end{abstract}

\section{Introduction}
Prediction and forecasting methods are fundamental in
describing how a target quantity behaves in the future or under
different settings. With recent advances in machine learning and the
availability of large-scale data, prediction has become reliable in many applications, such as
macroeconomic forecasting \citep{stock2016}, and
predicting effects of policies
\citep{hill2011,kleinberg2015, atheyimbens2016, kunzel2019}. However, it is also well-known
that focusing solely on prediction when reasoning about a system
under changing conditions can be misleading,
especially in the presence of unobserved confounding.
While there are
several well-established methods
for
dealing with unobserved confounding in causal effect
estimation, less research has focused on
comparable approaches for prediction tasks.
In this work, we consider the problem of predicting a response when the training and testing distributions differ in the presence of unobserved confounding.

In the literature on causal effect estimation from observational data,
the main approaches to deal with unobserved confounding are instrumental
variables \citep{angrist1996identification, ng1995nonparametric,newey1999}, regression discontinuity \citep{angrist1999using}, and difference-in-differences \citep{angrist1991}.
Instrumental variable approaches, for example, use specific
types of exogenous variables (called instruments) that can be seen
as natural experiments to identify and estimate causal effects.
Most of the existing methods
require
the causal
effect to be identifiable.
In simple scenarios such as linear or binary models, these 
identifiability
conditions are well understood \citep{angrist1996identification,amemiya1985advanced}.
Identification strategies in more complex scenarios impose  specific structures on the causal effect, e.g., sparsity \citep{pfister22a}, independence between the instruments and the residuals \citep{dunker2014iterative,dunker2021adaptive, saengkyongam2022exploiting, loh2023}, or the milder mean-independence condition \citep{newey2003instrumental}.
However, when there are many endogenous covariates, finding a sufficiently large number of
valid instruments to identify and estimate the causal effect is often not feasible -- even when the effect is sparse.

In this work, we show that even when the causal function is not
identifiable, it is possible to exploit the heterogeneity induced by a
set of exogenous variables to learn a prediction function that yields
valid predictions under a class of distributional shifts.
Formally, we consider the task of
predicting a real-valued outcome $Y \in \R$ from a large set of
(possibly) endogenous covariates $X \in \R^p$ when the training and
testing data follow a different distribution.
Distribution generalization has received much attention particularly
in the machine learning community
and is usually tackled from a
worst-case point of view, which is particularly relevant in
high-stakes applications.
Given a set of
potential distributions $\mathcal{P}$ of a random vector that contains
the components $(X, Y)$, we aim
to find a predictive function
$f^*$ minimizing the worst-case risk over the set of distributions
$\mathcal{P}$, that is
\begin{align*}%
	\sup_{P \in \mathcal{P}}\Exp_P[(Y - f^*(X))^2] = \inf_{f \in \mathcal{F}}\sup_{P \in \mathcal{P}}\Exp_P[(Y - f(X))^2],
\end{align*}
where $\mathcal{F}$ is a class of measurable functions.

The task of distribution generalization is intractable without characterizing the set
of potential distributions $\mathcal{P}$.
For example, one can model $\mathcal{P}$ by changing the marginal $P_\train^{X} \neq P_\test^{X}$, known as covariate shift \citep{shimodaira2000,sugiyama2007covariate},
the conditional $P_\train^{Y\mid X} \neq P_\test^{Y \mid X}$, known as concept shift \citep{quinonero2009dataset,gama2014survey}, or a combination of both, \citep{arjovsky2020invariant,krueger21a}.
Alternatively, one can model $\mathcal{P}$ as a set of distributions
that are within a neighborhood of the training distribution
$P_\train^{X, Y}$ with respect to the Wasserstein distance
\citep{sinha2017certifying} or $f$-divergence \citep{bagnell2005robust, hu2018does}.
Furthermore, one can model $\mathcal{P}$ as the convex hull of different distributions $P_{\train, 1}^{X, Y}, \dots, P_{\train, m}^{X, Y}$ that are observed at training time \citep{meinshausen2015, sagawa2019distributionally}.

Here,
we model the heterogeneity in the distributions
$\mathcal{P}$ via a vector of exogenous variables $Z \in \R^r$ which
are only observed at training time and induce shifts in the
conditional mean of the response given the covariates.
Our framework builds upon the literature of invariance-based methods for distribution generalization.
Invariance (also known as autonomy or modularity \citep{haavelmo1944probability, aldrich1989,pearl2009causality}) is one of the key ideas in causality and states that predictions made with causal models are unaffected by interventions on the covariates. Invariance has been used for causal structure search
\citep[e.g.,][]{peters2016causal} but is also implicitly used in causal effect
estimation methods such as instrumental variables. More recently,
invariance has also been employed as a concept to achieve distribution generalization via the invariant most predictive (IMP)
function \citep{magliacane2018domain,
rojas2018invariant,arjovsky2020invariant,buhlmann2020invariance,
christiansen2020causal,
krueger21a,jakobsen2022distributional,saengkyongam2022exploiting}.

In this work, we propose a strong notion of invariance that, unlike weaker notions \citep[e.g., ][]{arjovsky2020invariant,buhlmann2020invariance,
christiansen2020causal, krueger21a,jakobsen2022distributional}, ensures distribution generalization when the structural function is nonlinear and possibly not identifiable.
Building upon the control function approach
\citep{ng1995nonparametric,newey1999}, we propose the boosted control
function (BCF), 
a novel, identifiable target of inference that satisfies the proposed strong invariance notion.
The BCF aligns with the
invariant most
predictive (IMP)
function and  is provably worst-case optimal under distributional shifts induced by the exogenous variables.
Moreover, we provide necessary and sufficient conditions for identifying the BCF under continuous and discrete exogenous variables.
Our theoretical results rest upon the simultaneous equation models for distribution generalization (SIMDGs), a novel framework to describe distributional shifts induced by the exogenous variables. SIMDGs establish a connection between distribution generalization in machine learning and simultaneous equation models
\citep{haavelmo1944probability,amemiya1985advanced} in econometrics.

We further develop the ControlTwicing algorithm which estimates the BCF
using the `twicing'
idea from \citet{tukey1977exploratory}.
Our method can be
applied using
various
machine learning methods such as ridge regression,
lasso, random
forests,
boosted trees and neural networks.
In a set of numerical experiments,
we study the
generalizing
properties of the BCF estimator
and observe its
advantage over standard prediction methods
when
the training and testing distributions differ.
Moreover,
on the California housing dataset \citep{pace1997sparse}, we show that the BCF is
robust to previously unseen distributional shifts and is competitive with existing robust methods.

\section{SIMs for Distribution Generalization}\label{sec:setup}

Given a vector of covariates $X \in \R^p$, 
our goal is
to identify
a function $f:\R^p\rightarrow\R$ 
that predicts
a response $Y \in \R$ 
from $X$,
under distributional shifts 
induced
by exogenous variables $Z \in \R^r$.
Let $\mathcal{P}$ 
denote
a
collection of
distributions of
a random vector that contains the components $(X, Y, Z)$
and let $P_\train \in \mathcal{P}$ denote the training distribution,
from which we observe
$(X, Y, Z) \sim P_\train^{X, Y, Z}$. To evaluate the performance of
a candidate function
$f \in \mathcal{F}$,
where $\mathcal{F}$ is a class of measurable functions,
we define the risk function
$\mathcal{R}: \mathcal{P} \times \mathcal{F} \to [0, \infty]$
for all $P \in \mathcal{P}$ and $f \in \mathcal{F}$ by
\begin{align}\label{eq:risk}
	\risk(P, f) \coloneqq \Exp_P[(Y - f(X))^2].
\end{align}
We assume that we observe independent copies of
$(X, Y, Z)\sim P_{\train}^{X, Y, Z}$ and want to predict $Y$ from $X$ under
a
potentially different but unknown testing distribution
$P_{\test}^{X, Y}$, where $P_{\test} \in\mathcal{P}$. %
In this work, our estimand (or target) is
a function $\tilde{f}$ that minimizes the risk~\eqref{eq:risk} under the worst-case distribution
in $\mathcal{P}$.
\begin{definition}[Distribution generalization]\label{def:dist-gen}
	Denote by $\mathcal{F}$ a subset of measurable functions $f: \R^p \to \R$ and by $\mathcal{P}$ a class of distributions of the random vector $(X, Y, Z)$. Let $\tilde{f}: \R^p \to \R$ be a function satisfying
	\begin{equation}
		\label{eq:minimax_target}
		\sup_{P\in\mathcal{P}} \risk(P, \tilde{f})
		= \inf_{f\in\mathcal{F}}\sup_{P\in\mathcal{P}}\risk(P, f)< \infty.
	\end{equation}
	Then, we say that
	\emph{distribution generalization is achievable}
	and
	$\tilde{f}$
	is a
	\emph{generalizing function}.
\end{definition}

Without any constraints on the class of
distributions $\mathcal{P}$, distribution generalization is generally
not achievable.
One way to constrain the class of
distributions $\mathcal{P}$ is via causal models as done e.g.\
by \citet{christiansen2020causal}
who consider structural causal models
with independent error terms.
Here, we consider semi-parametric simultaneous equation models where

\noindent\begin{subequations}
	\begin{minipage}{.4\textwidth}
		\begin{align}\label{eq:sem}
			\begin{split}
				Y &= f_0(X) + U,\\
				X &= M_0Z + V,
			\end{split}
		\end{align}
		\vspace*{1pt}
	\end{minipage}%
	\begin{minipage}{.61\textwidth}
		\begin{align}
			 & (U, V) \ind Z, \label{eq:sem-2}                                               \\
			 & \Exp[(U, V)] = 0,\ \Exp\big[\norm{(U, V)}_2^2\big] < \infty, \label{eq:sem-3}
		\end{align}
		\vspace*{1pt}
	\end{minipage}
\end{subequations}
(see Figure~\ref{fig:graphs} in Appendix~\ref{sec:scm_equations} for a visualization)
and assume that $Z$ and $X$ are centered to mean zero under $P_\train$ (see also Setting~\ref{set:setting-1}).
The first equation in~\eqref{eq:sem} is in structural form and  describes the
mechanism between the
dependent variable $Y$ and the 
endogenous
covariates $X$ under distributional shifts of the exogenous variable $Z$.
The function
$f_0: \R^p \to \R$ 
captures a possibly nonlinear
relationship between $X$ and $Y$.
The second equation in~\eqref{eq:sem} is in reduced form, that is, the
dependent variable $X$ depends only on the exogenous variables
$Z$ and $V$. The matrix $M_0 \in \R^{p\times r}$
describes the linear dependence between $X$ and $Z$
and is not required to be full-rank.
The
hidden variables $U$ and $V$ are not required to be independent, 
allowing
for
unobserved
confounding between $X$ and $Y$.
This confounding, together with shifts in $Z$, can induce both covariate shifts ($P_\train^X \neq P_\test^X$) and concept shifts  ($P_\train^{Y \mid X} \neq P_\test^{Y \mid X}$) within model~\eqref{eq:sem}.

\begin{remark}[Linear dependence between $X$ and $Z$]
	For the sake of simplicity, in~\eqref{eq:sem} we assume a linear
	functional dependence between $Z$ and $X$.  In principle, however,
	it is possible to consider nonlinear maps of the form
	$z \mapsto G(z) \coloneqq \Theta_0 \phi(z)$, for some real-valued
	matrix $\Theta_0 \in \R^{p \times q}$ with a (known) basis
	$\phi: \R^r \to \R^q$.
  Without parametric restrictions on the function 
  $\phi$,
  however, generalization to shifts in $Z$ is impossible \citep{christiansen2020causal}.
\end{remark}

\begin{remark}[$Z$ categorical]\label{rmk:cat-z}
	If $Z$ is an exogenous categorical variable (which often occurs in practical
	applications), our model may still apply after encoding $Z$
	appropriately. Suppose that $Z$ takes values in $\{a_0, \dots, a_r\}$
	with probabilities $\pi_0, \dots, \pi_r$. We then encode for all
	$j\in\{1,\dots,r\}$ the category $a_j$ as $e_j$, where
	$e_j \in \R^r$ is the $j$-th standard basis vector and $a_0$ as
	$-[\frac{\pi_1}{\pi_0}, \dots, \frac{\pi_r}{\pi_0}]^\top$.  The
	newly encoded variable $Z \in \R^r$ satisfies $\Exp[Z] = 0$ by
	construction. Moreover, under the assumptions that (a) $\Exp[X]=0$
	and that (b) the conditional distributions of $X$ given the
	categories are shifted versions of each other, we can express the
	relation between $X$ and $Z$ as in~\eqref{eq:sem}. More concretely,
	define $\mu_j \coloneqq \Exp[X \mid Z = j]$ for all
	$j \in \{0, \dots, r\}$ and the matrix
	$M_0 \coloneqq [\mu_1, \dots, \mu_r] \in \R^{p \times r}$. Then, (b)
	formally requires for all
	$a,b\in\text{supp}(P^Z_{\train})$\footnote{Let
		$X: \Omega \to \mathcal{X}$ be a random variable where $\Omega$
		denotes the sample space and $\mathcal{X}$ is a Euclidean space. Let
		$P^X$ denote the distribution of $X$. The support of $X$, denoted
		by $\supp(P^X)$, is the set of all $x \in \mathcal{X}$ such that every open neighborhood $N_x \subseteq \mathcal{X}$ of $x$ has positive
		probability.} that $X - \mu_a$ conditioned on $Z=a$ has the same
	distribution as $X-\mu_b$ conditioned on $Z=b$. Hence, together with
	(a) using $V=X - M_0Z$ implies that $\Exp[V] = 0$ and
	$X = M_0 Z + V$.
\end{remark}
When the exogenous variable is categorical, there exist different representations of $Z$ to express the predictor vector as in~\eqref{eq:sem}. The different representations of $Z$, however, are all equivalent with respect to the generalization guarantees (see Section~\ref{subsec:distgen}) since they describe the same linear span of the conditional means $\Exp[X \mid Z = a_j]$, where $j \in \{0, \dots, r\}$.

The following remark shows that
under certain conditions,
the proposed model~\eqref{eq:sem}--\eqref{eq:sem-3} allows the exogenous variable $Z$
to
directly affect the response~$Y$.

\begin{remark}[Allowing $Z$ to affect $Y$]\label{rmk:projectability}
	In this work, instead of identifying the structural function
	$f_0$, we aim at predicting $Y$ from $X$ under
	distributional shifts on $Z$.
  In contrast to widely-used assumptions on IVs, our model~\eqref{eq:sem}--\eqref{eq:sem-3} allows $Z$ to have a direct effect on $Y$,  provided that $Z$ can be expressed as a function of $X$ and $V$. More precisely, we allow for $Y = g_0(X) + \beta_0^\top Z + U$, for some $g_0 : \R^p \to \R$ and $\beta_0\in\col(M_0^\top)$.
	The assumption that $\beta_0\in\col(M_0^\top)$ has been
	termed \emph{projectability condition}
	\citep{rothenhausler2021anchor} and ensures that
	$\beta_0^\top Z$ can be expressed as a linear combination of the covariates $X$ and
	hidden variables $V$.
	This condition is automatically satisfied
	when $\rank(M_0) = r$ because then
	$\col(M_0^\top) = \R^r$.
	When $\rank(M_0) < r$,
	however, the projectability condition constrains the allowed
	models.
	Given the projectability condition, the exogenous variable 
  satisfies
  $Z = M_0^{\dagger}(X - V)$ via  equations~\eqref{eq:sem},
	where $M_0^{\dagger} \in \R^{r \times p}$ is the Moore--Penrose
	inverse of $M_0 \in \R^{p \times r}$.  Therefore, we can rewrite the structural equation $ Y = g_0(X) + \beta_0^T Z + U$ as
	\begin{align*}%
		Y & = \left( g_0(X) + \beta_0^\top M_0^{\dagger} X \right) + \left( U - \beta_0^\top M_0^{\dagger} V \right) \eqqcolon f_0(X) + \tilde{U}.
	\end{align*}
	By construction, the two structural equations
	for $Y$
	induce the same distribution $P^{X, Y, Z}$
	for any distributional shift on $Z$, and, therefore, they
	are equivalent for our purpose to solve~\eqref{eq:minimax_target}.
\end{remark}

To construct the set $\mathcal{P}$ of potential distributions,
we introduce the following  simultaneous equation model (SIM) for
distribution generalization (SIMDG).
\begin{definition}[SIM for Distribution Generalization (\modelname)]\label{def:ivg}
	Let
	$\mathcal{Q}_0$ be a set of distributions  over $\R^r$ and
	$\Lambda_0$ be a
	distribution over $\R^{1 + p}$ such that if
	$(U,V) \sim \Lambda_0$ then
	\eqref{eq:sem-3} holds.
	Let further
	$f_0: \R^p \to \R$ be a measurable function
	and
	$M_0\in\R^{p\times r}$
	a matrix.
	We call the tuple $(f_0, M_0, \Lambda_0, \mathcal{Q}_0)$ a
	\emph{\modelname}.
	For all $Q\in\mathcal{Q}_0$ the model
	$(f_0, M_0, \Lambda_0,Q)$
	induces a unique distribution over $(U, V, X, Y, Z)$ via
	$Z \sim Q$,
	$(U,V) \sim \Lambda_0$, \eqref{eq:sem-2}, and the
	simultaneous equations~\eqref{eq:sem}.
	We define the set of induced distributions
	by
	\begin{align}\label{eq:mathcalP}
		\mathcal{P}_0 :=
		\left\{P \text{ distr.\ over } \R^{2p+r+2} \mid
		\exists Q \in \mathcal{Q}_0\ \text{s.t. } (f_0, M_0,
		\Lambda_0, Q)  \text{ induces }P \text{ via \eqref{eq:sem}}\right\}.
	\end{align}
\end{definition}

A \modelname \ $(f_0, M_0, \Lambda_0, \mathcal{Q}_0)$
defines a
collection of distributions
$\mathcal{P}_0$.
In particular,
the training and testing distribution $P_\train$ and $P_\test$ are
(potentially different) distributions
induced by
(potentially different)
$Q_\train, Q_\test \in \mathcal{Q}_0$;
the changes in $Q \in \mathcal{Q}_0$, in turn,
induce mean shifts of $X$ in the directions of the columns of the matrix $M_0$.
In Appendix~\ref{sec:generative} we provide further technical details on how a SIMDG generates the class of distributions~$\mathcal{P}_0$.

Even by constraining the set of distributions by a \modelname, distribution generalization is achievable only if we further impose specific assumptions on either $\mathcal{P}_0$ or the function class $\mathcal{F}$ \citep{christiansen2020causal}.
Assumptions on $\mathcal{P}_0$  usually
require that
the training distribution dominates all distributions,
while assumptions on the function class $\mathcal{F}$ usually ensure that the target function extrapolates outside the training support in a known way.
\begin{assumption}[Set of distributions $\mathcal{P}_0$]\label{ass:ident0}
	For all $P \in \mathcal{P}_0$ it holds that $P \ll P_\train$.
\end{assumption}

\begin{assumption}[Function class $\mathcal{F}$]\label{ass:ident}
	The function class $\mathcal{F}$ is such that for all $f, g \in \mathcal{F}$
	and all $P \in \mathcal{P}_0$
	it holds
	\begin{equation*}
		f(X) = g(X),\ P_\train\text{-a.s.}\implies f(X) = g(X),\ P \text{-a.s.}
	\end{equation*}
\end{assumption}
Note that
Assumption~\ref{ass:ident0} implies Assumption~\ref{ass:ident}.
Throughout the paper, we 
  will assume that either Assumption~\ref{ass:ident0} or Assumption~\ref{ass:ident} are satisfied. Furthermore,
we will use the following data-generating process.
\begin{setup}[Data-generating process]
	\label{set:setting-1}
	\label{ass:train-test}
	Fix a \modelname \ $(f_0, M_0, \Lambda_0, \mathcal{Q}_0)$
	where $\mathcal{Q}_0$ induces a set of distributions $\mathcal{P}_0$.
	Moreover,  assume that $\sup_{P \in \mathcal P_0} \Exp_P [f_0(X)]^2 <\infty$.
	Let $Q_\train\in\mathcal{Q}_0$ such that $Z\sim Q_\train$
	satisfies $\Exp_{Q_\train}[Z] = 0$
	($Z$ can but does not have to be categorical, see Remark~\ref{rmk:cat-z})
	and
	$\Exp_{Q_\train}[ZZ^\top] \succ 0$, and let $Q_\test\in\mathcal{Q}_0$ be
	an arbitrary distribution.
	Denote the \emph{training
		distribution} by $P_\train \in \mathcal{P}_0$
	and the \emph{testing distribution} by $P_\test \in \mathcal{P}_0$; both are distributions over $(U, V, X, Y, Z)$ induced by
	$(f_0, M_0, \Lambda_0, Q_\train)$ and $(f_0, M_0, \Lambda_0, Q_\test)$, respectively.
	We now
	consider the following two phases.
	\begin{itemize}
		\item[(1)] \emph{(Training)} Observe  an i.i.d.\ sample
			$(X_1,Y_1,Z_1),\ldots,(X_n,Y_n,Z_n)$  of size $n$  with  distribution
			$P^{X, Y, Z}_\train$.
		\item[(2)] \emph{(Testing)} Given an independent draw $X\sim P^{X}_\test$
			predict the response $Y$.
	\end{itemize}
\end{setup}
In practical applications, a categorical exogenous variable $Z$ can represent the different environments from which the data was collected, such as hospitals \citep{bandi2018detection}. In such cases, using $Z$ as an additional covariate
does
not
enhance
the prediction
accuracy
for
environments not observed during training time.
A continuous exogenous variable $Z$
can represent
geospatial information, such as latitude and longitude. 
As we discuss in Section~\ref{sec:california-housing} based on the example of predicting house prices, 
using the geospatial information directly as covariates can lead to inaccurate predictions due to extrapolating to areas not included in the training data. However, using latitude and longitude as exogenous variables that model distribution shifts in the other covariates can improve predictive accuracy in areas that are not covered by the training data.
 
SIMDGs describe a set of models that are closely related to the SIMs from the instrumental variable literature \citep[e.g.,][]{newey1999} and the structural causal models (SCMs) from the causality literature in statistics \citep[e.g.,][]{pearl2009causality},
as we discuss in Appendices~\ref{sec:sim_equations} and~\ref{sec:scm_equations}.

\section{Invariant Most Predictive Functions}\label{sec:distr_gen}

The concept of invariant most predictive (IMP) function has been recently proposed
as a guiding principle to identify a generalizing
function
\citep{magliacane2018domain, rojas2018invariant,arjovsky2020invariant,buhlmann2020invariance,
	christiansen2020causal,  krueger21a,jakobsen2022distributional,saengkyongam2022exploiting}.
To provide a motivating example, consider
Setting~\ref{set:setting-1} with \modelname\
$(f_0, M_0, \Lambda_0, \mathcal{Q}_0)$, where $f_0$ belongs to the
class of linear functions $\mathcal{F}$ and $\mathcal{Q}_0$
consists of arbitrary distributions on $\R^r$.
For a fixed $z \in \R^r$, define the
point mass distribution $Q_z \coloneqq \delta_z\in\mathcal{Q}_0$, and denote by
$P_z$ the distribution induced by $(f_0, M_0, \Lambda_0, Q_z)$.
Then, using the simultaneous equations~\eqref{eq:sem}, for all
functions $g \in \mathcal{F}$ the risk of $g$ under the perturbed
distribution $P_z$ can be expressed as
\begin{align}\label{eq:risk-example}
	\begin{split}
		\risk(P_z, g)
		=&\ \Exp_{P_z}\left[ \left( Y - g(X) \right)^2 \right] = \Exp_{P_z}[U^2] + \delta^\top \Exp_{P_z}[XX^\top]\delta - 2 \delta^\top \Exp_{P_z}[X U]\\
		=&\ \Exp_{P_z}[(U - \delta^\top V)^2] + \delta^\top M_0 zz^\top  M_0^\top \delta,
	\end{split}
\end{align}
where $\delta^\top x \coloneqq f_0(x) - g(x)$. When
$M_0^\top \delta \neq 0$, the risk $\risk(P_z, g)$ can be made
arbitrarily large by increasing the magnitude of $z \in \R^r$, and therefore $g$ is
then
not a
generalizing function.
The reason is that
the
distribution of
residuals
$Y - g(X) = U - \delta^\top X = U - \delta^\top V - \delta^\top M_0 Z$
is
not invariant to changes in the marginal distribution of $Z$.
The
IMP function
identifies a function minimizing~\eqref{eq:risk-example} among those yielding an invariant distribution of residuals.
In the above
example with a linear function class, it is clear that the invariance of the
residual distribution is a necessary condition to achieve distribution
generalization.
However, for more general
function classes the relation between invariance and distribution generalization is more intricate.
Even more so, when the function class is not constrained to the linear setting, existing IMP-based approaches can fail at identifying a function that is invariant in the sense of Definition~\ref{def:inv_funs}
(see Example~\ref{ex:inv-stronger-ind-2}), and therefore, may not yield a generalizing function
(see Example~\ref{ex:opt-h1-not-imp}).

The goal of this section is to investigate the relation between invariance and distribution generalization in the more general setting when $\mathcal{F}$ is not constrained to be a parametric function class. To do so, we first formally define the notion of invariant function.

\begin{definition}[Invariant function]\label{def:inv_funs}
	Assume Setting~\ref{set:setting-1} and for all $P \in \mathcal{P}_0$ and all $f \in \mathcal{F}$ denote by $P^{Y - f(X)}$ the distribution of the random variable $Y - f(X)$ under the probability measure $P$.
	We say $f \in \mathcal{F}$ is \emph{invariant} w.r.t.\
	$\mathcal{P}_0$ if
	\begin{align*}%
		P_\train^{Y - f(X)} = P^{Y - f(X)},\ \text{for all}\ P \in \mathcal{P}_0.
	\end{align*}
	Furthermore, we define the \emph{set of invariant functions}
	\begin{align*}%
		\mathcal{I}_0 := \{f \in \mathcal{F} \mid f\ \text{is
			invariant w.r.t.\ }\mathcal{P}_0\}.
	\end{align*}
\end{definition}
Given a
\modelname\ $(f_0, M_0, \Lambda_0, \mathcal{Q}_0)$ and the set of
induced distributions $\mathcal{P}_0$, the structural function $f_0$ is always
invariant since the distribution of its residuals
$Y - f_0(X) = U$ does not depend on
$P\in\mathcal{P}_0$.
Additionally, depending on the relation
between the exogenous variables and the covariates, there may exist
further invariant functions other than $f_0$.
More precisely, the size of the set of invariant functions $\mathcal{I}_0$ depends on the rank and order conditions of identifiability \citep{amemiya1985advanced}.
For example, under Setting~\ref{set:setting-1}, if $\rank(M_0) = p$, then
the set of invariant functions is a
singleton $\mathcal{I}_0 = \left\{ f_0 \right\}$.
If, instead
$q \coloneqq \rank(M_0) < p$, then the set of invariant functions
$\mathcal{I}_0$ can contain infinitely many
elements, as we now argue.
Let $\ker(M_0^\top)$ denote the null space of $M_0^\top$ with dimension $p - q  > 0$.
Define the matrix $R\coloneqq(r_1,\ldots,r_{p-q})\in\R^{p\times(p-q)}$, where
$r_1,\ldots,r_{p-q}\in\R^p$ is a basis of $\ker(M_0^\top)$,
let $h:\R^{p-q}\rightarrow\R$ be an arbitrary function satisfying
$h \circ R^\top\in\mathcal{F}$,
and define
$f^* \coloneqq f_0 + h \circ R^\top\in\mathcal{F}$
(which holds true if $\mathcal{F}$ is closed under addition).
By using the reduced form
equation for $X$ in~\eqref{eq:sem}, we have that
$R^\top X = R^\top M_0 Z + R^\top V = R^\top V$, and so the distribution of
$Y - f^*(X) = U - h(R^\top V)$ remains the same for all
$P \in \mathcal{P}_0$.
Since there may be infinitely many such functions $h$,
the set $\mathcal{I}_0$ can contain infinitely many functions in addition to
$f_0 \in \mathcal{I}_0$.
This motivates the following definition.
\begin{definition}[Invariant most predictive (IMP) function]\label{def:imp}
	Assume Setting~\ref{set:setting-1} and denote by $\mathcal{I}_0$ the set of invariant functions (see Definition~\ref{def:inv_funs}).
	We call	$f_{\imp} \in \mathcal{F}$
	an \emph{invariant most predictive (IMP)} function if
	\begin{align}\label{eq:imp}
		\risk(P_{\train}, f_\imp) = \inf_{f \in \mathcal{I}_0} \risk(P_{\train}, f).
	\end{align}
\end{definition}
The notion of invariant most predictive (IMP) function provides a constructive method to tackle the distribution generalization problem.
A potential approach to identify an IMP function $f_\imp$ is to (i) identify the set of invariant functions $\mathcal{I}_0$, and (ii) solve the optimization problem in~\eqref{eq:imp} constrained to the set $\mathcal{I}_0$.
For instance, if $\mathcal{F}$ is the class of linear functions, then one can show that any invariant function $f$ is identified by the moment condition $\Exp_{P_\train}[(Y - f(X)) Z] = 0$ and that $\mathcal{I}_0 = \{f \in \mathcal{F} \mid \Exp_{P_\train}[(Y - f(X)) Z] = 0\}$. Furthermore, one can show that any function $f$ is invariant if and only if $f(x) = f_0(x) + \delta^\top x$, where $\delta \in \ker(M_0^\top)$ \citep{jakobsen2022distributional}.

Though in the linear setting the identification and characterization of the set of invariant functions $\mathcal{I}_0$ is straightforward, this is not true in a more general setting.
In particular, when $\mathcal{F}$ is a flexible function class, invariance-based methods for distribution generalization
consider weaker notions of invariance compared to Definition~\ref{def:inv_funs} and, in some cases, do not identify a generalizing function.

\subsection{Relaxing Invariance and its Implications}

The IMP function is defined as the solution to
the constrained optimization problem~\eqref{eq:imp} over the set of invariant functions $\mathcal{I}_0$.
 Existing invariance-based methods usually tackle this problem in two steps. First, they solve a relaxation of~\eqref{eq:imp} by optimizing over a larger set $\mathcal{H} \supseteq \mathcal{I}_0$; common examples of such sets are 
 $\mathcal{I}_0 \subseteq \mathcal{H}_1 \subseteq \mathcal{H}_2 \subseteq \mathcal{H}_3$ defined as
\begin{align*}%
	\begin{split}
		\mathcal{H}_1 & := \left\{f \in \mathcal{F} \mid Y - f(X) \ind Z\ \text{under}\ P_{\train}\right\}, \\
		\mathcal{H}_2 & := \left\{f \in \mathcal{F} \mid \Exp_{P_{\train}}[Y - f(X) \mid Z] = 0 \right\},   \\
		\mathcal{H}_3 & := \{f \in \mathcal{F} \mid \Exp_{P_{\train}}[(Y - f(X))Z] = 0 \}.
	\end{split}
\end{align*}
Second, they assume that the statements defining $\mathcal{H}_1, \mathcal{H}_2, \mathcal{H}_3$
also hold for all $P \in \mathcal{P}_0$.
The set $\mathcal{H}_1$ is considered by \citep{magliacane2018domain,rojas2018invariant} in a linear unconfounded setting
and by \citep{saengkyongam2022exploiting} in a nonlinear underidentified IV setting.
The set $\mathcal{H}_2$ defines a conditional mean independence restriction and is considered
by~\citep{arjovsky2020invariant} in a nonlinear unconfounded setting.
More recently, in the same setting as~\citep{arjovsky2020invariant}, \citep{krueger21a} consider the slightly stronger restriction
$\mathcal{H}_2 \cap \{f \in \mathcal{F} \mid \Exp_{P_\train}[(Y - f(X))^2] = c \text{ for some } c\in \R\}$.
The set $\mathcal{H}_3$ imposes an unconditional moment constraint and is studied by
\citep{jakobsen2022distributional} in a linear underidentified IV setting and by~\citep{buhlmann2020invariance,christiansen2020causal} in a nonlinear underidentified IV setting.
We now investigate the relationship between the sets $\mathcal{I}_0$, $\mathcal{H}_1$, $\mathcal{H}_2$ and $\mathcal{H}_3$
by showing
that the notion of invariance proposed in this work is stronger than the stochastic independence of the residuals. In particular, we show that some functions in $\mathcal{H}_1$ can fail to be invariant.
\begin{proposition}[Invariance implies independence of residuals]\label{prop:inv-implies-ind}
	Assume Setting~\ref{set:setting-1}
	and that for all
	$z\in\mathbb{R}^r$ it holds that $\delta_z\in\mathcal{Q}_0$.
	Then, it holds that $\mathcal{I}_0 \subseteq \mathcal{H}_1.$
	Moreover, if Assumption~\ref{ass:ident0} holds, it follows that $\mathcal{H}_1 \subseteq \mathcal{I}_0$.
	If Assumption~\ref{ass:ident0} is not satisfied, then even under
	Assumption~\ref{ass:ident}, it may happen that $\mathcal{H}_1 \neq \mathcal{I}_0$.
\end{proposition}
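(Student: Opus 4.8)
The linchpin of the whole statement is a single distributional identity that I would establish first: for every $f\in\mathcal{F}$ and every $z\in\R^r$ (and $\delta_z\in\mathcal{Q}_0$ holds for all such $z$ by hypothesis),
\[
P_\train^{Y-f(X)\mid Z=z} = P_z^{Y-f(X)}.
\]
This is because, conditioning on $Z=z$ under $P_\train$, the independence $(U,V)\ind Z$ in~\eqref{eq:sem-2} leaves the conditional law of $(U,V)$ equal to $\Lambda_0$, so $Y-f(X)=f_0(M_0z+V)+U-f(M_0z+V)$ is driven by $(U,V)\sim\Lambda_0$ with parameter $z$ — exactly as under the point-mass model $P_z$. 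Running the same computation under an arbitrary $Q\in\mathcal{Q}_0$ and marginalizing over $Z\sim Q$ yields the mixture formula
\[
P^{Y-f(X)} = \int P_z^{Y-f(X)}\,\mathrm{d}Q(z).
\]
The only delicate point here is the existence and uniqueness of the regular conditional distribution, which is standard on Euclidean space.

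Given these two facts, both inclusions are short. For $\mathcal{I}_0\subseteq\mathcal{H}_1$, take $f\in\mathcal{I}_0$; since each $P_z\in\mathcal{P}_0$, invariance gives $P_z^{Y-f(X)}=P_\train^{Y-f(X)}$ for every $z$, and combined with the identity this reads $P_\train^{Y-f(X)\mid Z=z}=P_\train^{Y-f(X)}$ for every $z$, which is precisely $Y-f(X)\ind Z$ under $P_\train$, i.e.\ $f\in\mathcal{H}_1$. For $\mathcal{H}_1\subseteq\mathcal{I}_0$ under Assumption~\ref{ass:ident0}, take $f\in\mathcal{H}_1$; via the identity, independence rewrites as $P_z^{Y-f(X)}=P_\train^{Y-f(X)}$ for $P_\train^Z$-a.e.\ $z$. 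Fix any $P\in\mathcal{P}_0$ induced by $Q$. Assumption~\ref{ass:ident0} gives $P\ll P_\train$, hence the marginal $Q=P^Z\ll P_\train^Z$, so the exceptional null set is also $Q$-null and the equality holds $Q$-a.e.; plugging into the mixture formula gives $P^{Y-f(X)}=\int P_\train^{Y-f(X)}\,\mathrm{d}Q=P_\train^{Y-f(X)}$, i.e.\ $f\in\mathcal{I}_0$.

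The gap between the two sets is exactly the gap between ``$P_\train^Z$-a.e.\ $z$'' and ``all $z$'', so for the counterexample I would exhibit a model whose training law of $Z$ is supported on few points. Take $p=r=1$, $M_0=1$, $V\sim N(0,1)$, and $U\equiv 0$, so $X=Z+V$ has full support under $P_\train$ and under every $P_z$; this makes Assumption~\ref{ass:ident} hold automatically for any $\mathcal{F}$. Let $Z$ be uniform on $\{-1,+1\}$ under training (so $\Exp[Z]=0$ and $\Exp[Z^2]=1\succ 0$) while $\delta_z\in\mathcal{Q}_0$ for all $z$; then $\delta_0\not\ll P_\train^Z$, so Assumption~\ref{ass:ident0} fails. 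Choosing $f_0(x)=x^2$ and $f\equiv 0$, the residual is $Y-f(X)=(Z+V)^2$. Symmetry of $V$ gives $(V-1)^2\stackrel{d}{=}(V+1)^2$, so the residual law is identical for $Z=-1$ and $Z=+1$, whence $f\in\mathcal{H}_1$; but $\Exp_{P_z}[(z+V)^2]=z^2+1$ depends on $z$, so $P_0^{Y-f(X)}\neq P_1^{Y-f(X)}$ and $f\notin\mathcal{I}_0$.

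I expect the main obstacle to be conceptual rather than computational: correctly formulating and justifying the conditional identity together with its a.e.\ qualifier, and tracking how domination in Assumption~\ref{ass:ident0} transfers the relevant null set from $P_\train^Z$ to $Q$. For the counterexample, the real design challenge is satisfying three constraints simultaneously — membership in $\mathcal{H}_1$ at the training support, failure of invariance off-support, and validity of Assumption~\ref{ass:ident} despite the failure of Assumption~\ref{ass:ident0} — which the even-perturbation/symmetric-noise construction resolves cleanly by exploiting the symmetry of $\{-1,+1\}$ around the off-support point $0$.
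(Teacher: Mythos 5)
Your proofs of the two inclusions are correct and follow essentially the same route as the paper: the paper also disintegrates $P_\train^{Y-f(X),Z}$ over $z$, identifies the conditional law of the residual given $Z=z$ with $P_z^{Y-f(X)}$ (its step $(i)$), uses $\delta_z\in\mathcal{Q}_0$ together with invariance for $\mathcal{I}_0\subseteq\mathcal{H}_1$, and for the converse transfers the $Q_\train$-null exceptional set to an arbitrary $Q_*$ via $P_*\ll P_\train\implies Q_*\ll Q_\train$ exactly as you do. (Both treatments leave implicit the upgrade from ``for each fixed Borel set $A$, a.e.\ $z$'' to a single null set valid for all $A$ simultaneously; this is standard via a countable generating $\pi$-system and not worth more than a remark.)

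The genuine problem is your counterexample for the third claim. The proposition assumes Setting~\ref{set:setting-1}, which includes the moment condition $\sup_{P\in\mathcal{P}_0}\Exp_P[f_0(X)]^2<\infty$, and its hypothesis forces $\delta_z\in\mathcal{Q}_0$ for \emph{every} $z\in\R$. With $f_0(x)=x^2$ and $X=z+V$, $V\sim N(0,1)$, you get $\Exp_{P_z}[f_0(X)]=z^2+1$, so this supremum is infinite and Setting~\ref{set:setting-1} fails: your model is not an admissible instance of the proposition, so it cannot witness $\mathcal{H}_1\not\subseteq\mathcal{I}_0$ under the stated hypotheses. The defect is repairable while keeping your design idea --- replace $x^2$ by a bounded function with the same two-point symmetry, e.g.\ $f_0(x)=\cos(\pi x)$, for which $f_0(V\pm 1)=-\cos(\pi V)$ identically (so the residual of $f\equiv 0$ is independent of $Z$ on $\{-1,+1\}$), while under $\delta_0$ the residual $\cos(\pi V)$ has mean $e^{-\pi^2/2}\neq -e^{-\pi^2/2}$ and hence a different law --- but as written the example does not establish the claim. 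The paper instead points to its Example~\ref{ex:inv-stronger-ind}, where $f_0\equiv 0$, the noise has bounded support, and the function class is restricted to constant extrapolation, so that all standing assumptions are satisfied by construction.
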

The following example, whose construction is in spirit related to~\citep[][Example~4.4]{tsai24b}, shows that $\mathcal{H}_1 \neq \mathcal{I}_0$.

\begin{example}[Independence of residuals does not imply invariance]\label{ex:inv-stronger-ind-2}
Consider the\\ SIMDG~$(f_0, 1, \Lambda_0, \mathcal{Q}_0)$, where the vector of observed variables $(X, Y, Z) \in \R^3$ is defined as
\begin{align*}
  X = Z + V, \quad Y = f_0(X) + U,
\end{align*}
$Z \sim Q_{\train}$ is supported on $\{-2\pi, 2\pi\}$ with $\mathbb{E}_{Q_\train}[Z] = 0$ and $(U, V) \sim \Lambda_0$ follows a multivariate centered Gaussian distribution such that $V \sim N(0, \sigma^2)$, $U \sim N(0, 1 + \sigma^2)$, and $\Exp[U \mid V] = V$.
Under $P_\train$, $X$ is a Gaussian mixture with strictly positive density on $\R$; hence  Assumption~\ref{ass:ident} is satisfied.
Consider the function $f(x) \coloneqq f_0(x) + \sin(x)$.
Under $P_\train$, the residuals
\begin{align*}
  Y - f(X) = U - \sin(X) = U - \sin(Z + V) =  U - \sin(V),
\end{align*}
since $\sin(Z+V) = \sin(V)$ for any $Z \in \{-2\pi, 2\pi\}$. Therefore, we have that $Y - f(X) \ind Z$ under $P_\train$, i.e., $f \in \mathcal{H}_1$.
Now, consider the distribution $P_{\test} \in \mathcal{P}_0$ induced by replacing $Q_{\train}$ with a point mass distribution at $\pi$, i.e., $Q_{\test} = \delta_{\pi}$. Under $P_\test$, the residuals $Y - f(X) = U - \sin(\pi + V)$ are distributed differently from $U - \sin(V)$. Therefore,  $P_\train^{Y - f(X)} \neq P_\test^{Y - f(X)}$, i.e., $f \notin \mathcal{I}_0$.
\end{example}
Under the setting of Example~\ref{ex:inv-stronger-ind-2}, we show in Section~\ref{subsec:distgen} an even stronger result
than $\mathcal{H}_1 \neq \mathcal{I}_0$:
optimizing the training risk over the larger set $\mathcal{H}_1 \supseteq \mathcal{I}_0$ can yield a function that is not the~IMP.

\section{Boosted Control Functions}\label{sec:ident}

Assume Setting~\ref{set:setting-1} with the SIMDG $(f_0, M_0, \Lambda_0, \mathcal{Q}_0)$.
In the conventionally considered case in which $f_0$ is
identifiable (which implies $\rank(M_0)=p$) and
nonlinear, there are two prevalent categories of methods for
identifying $f_0$; nonparametric IV methods
\citep{newey2003instrumental} and nonlinear control function
approaches \citep{ng1995nonparametric,newey1999}.
The nonparametric IV \citep{newey2003instrumental}
identifies the structural function $f_0$ by solving the
inverse problem
\begin{align}\label{eq:2sls}
	\Exp_{P_{\train}}[Y \mid Z] = \Exp_{P_{\train}}[f_0(X) \mid Z].
\end{align}
The nonparametric IV estimator of
$f_0$ is then given as the solution to
a finite-sample version
of~\eqref{eq:2sls}, where $f_0$ is approximated by power series or splines, for example.
The nonlinear control function approach~\citep{ng1995nonparametric,newey1999} identifies the structural function $f_0$ by computing a conditional expectation of the response $Y$ given the predictors $X$ and a set of control variables $V \coloneqq X - \Exp_{P_{\train}}[X \mid Z]$. We then have
\begin{align}\label{eq:contr-fun}
	\begin{split}
		\Exp_{P_{\train}}[Y \mid X, V] = &\ \Exp_{P_{\train}} [f_0(X) + U  \mid X, V]\\
		= &\ f_0(X) + \E_{P_{\train}}[U \mid V] \\
		= &\ f_0(X) + \gamma_0(V),
	\end{split}
\end{align}
where
$\gamma_0: v \mapsto \E_{P_\train}[U \mid V = v]$ is the control function (hence the name).
Later we will assume that the control function $\gamma_0$ belongs to a class $\mathcal{G}$ of measurable functions.
In contrast to nonparametric IV methods, the
nonlinear control function approach has the advantage of identifying
the structural function $f_0$ via the conditional expectation
in~\eqref{eq:contr-fun}, and therefore can estimate $f_0$ with
flexible nonparametric estimators, such as
nearest-neighbor regression or regression trees.

In this work, we are not interested in the structural function
$f_0$ directly but in a function that achieves distribution
generalization.
The key idea
is to adapt the control function approach
above in a way that allows us
to identify the IMP. In contrast to
$f_0$, the IMP can be identifiable even in settings where $\rank(M_0) < p$.
The following example illustrates the non-identifiability of
$f_0$ based on the standard control function approach.
\begin{example}[Non-identifiability of $f_0$ and $\gamma_0$]\label{ex:ident}
	Consider a \modelname\ over the variables $(X_1,X_2, Y,
		Z)\in\R^4$ with the structural equations
	\begin{align*}
		X_1 =  & \ Z + V_1,           \\
		X_2 =  & \ Z + V_2,           \\
		Y~   = & \ f_0(X_1, X_2) + U,
	\end{align*}
	where $f_0(x_1, x_2) \coloneqq x_1$,
	$\Lambda_0$ is a zero mean Gaussian distribution with $(U, V_1,
		V_2)\sim\Lambda_0$ satisfies $\E[U\vert V_1,V_2]=V_1$ and
	$\mathcal{Q}_0$ the set of all distributions on $\R$ with full
	support. Expanding the conditional expectation and using that
	$P_{\train}$-a.s.\ it holds that
	$X_1 - X_2 + V_2 = V_1$
	we get
	\begin{align*}
		\Exp_{P_{\train}}[Y \mid X_1, X_2, V_1, V_2]
		= & \ f_0(X_1, X_2) + \gamma_0(V_1, V_2)
		=  X_1 + V_1                          \\
		= & \ 2X_1 - X_2 + V_2
		= \tilde{f}(X_1, X_2) + \tilde{\gamma}(V_1, V_2),
	\end{align*}
	where
	$\gamma_0(v_1, v_2) \coloneqq v_1$,
	$\tilde{f}(x_1, x_2) \coloneqq 2x_1 - x_2$ and
	$\tilde{\gamma}(v_1, v_2) \coloneqq v_2$.
	Therefore, while the
	conditional expectation $\Exp_{P_{\train}}[Y \mid X, V]$ is
	identifiable, the separation into the structural function $f_0$
	and the control function $\gamma_0$ is not.
\end{example}

While it may happen that several functions $f$ and $\gamma$
satisfy $P_{\train}$-a.s.\ that
$f(X) + \gamma(V) = \Exp_{P_{\train}}[Y \mid X, V]$, we will show
that any such pair $f$ and $\gamma$ can be used to construct a
specific target function
that achieves distribution generalization. To
construct this target function, let $q\coloneqq\rank(M_0)$ and define
\begin{equation}
	\label{eq:definition_R}
	R\coloneqq
	\begin{cases}
		(r_1,\ldots,r_{p-q})\quad & \text{if $q<p$}  \\
		\bf0 \quad                & \text{if $q=p$,}
	\end{cases}
\end{equation}
where $(r_1,\ldots,r_{p-q})\in\R^{p\times (p-q)}$ is
an orthonormal
basis of
$\ker(M_0^\top)$ if $q<p$ and $\mathbf{0} \in\R^{p\times 1}$ is the
zero map. The matrix $R$ allows us to extract
invariant parts of
$X$ since $R^{\top}X=R^{\top}M_0Z+R^{\top}V=R^{\top}V$, which has a
fixed distribution for all $P\in\mathcal{P}_0$. We use it to
construct the target function as follows.
\begin{definition}[Boosted control function (BCF)]\label{def:control_estimator}
	Assume Setting~\ref{set:setting-1}, define $\gamma_0(V)
		\coloneqq \Exp_{P_\train}[U \mid V]$ and $R$ as in
	\eqref{eq:definition_R}.
	Then, we define the
	\emph{boosted control function (BCF)} $f_\star\in\mathcal{F}$
	for $P_{\train}$-a.e. $x\in\R^p$ by
	\begin{equation}\label{eq:def-cf}
		f_\star(x) \coloneqq f_0(x) + \Exp_{P_{\train}}[\gamma_0(V) \mid R^\top X = R^\top x].
	\end{equation}
	In particular, if $p=\rank(M_0)$, the BCF is
	$f_{\star}(\cdot)\coloneqq
		f_0(\cdot)+\Exp_{P_{\train}}[\gamma_0(V)]= f_0(\cdot)$, since
	$\Exp_{P_{\train}}[U]=0$.
\end{definition}
The BCF $f_\star$ is motivated by the IMP introduced in Definition~\ref{def:imp}. In particular, $f_\star$
is defined as the sum of the structural function $f_0$ and a term
depending on $X$ that is as predictive as possible and, at the same
time, invariant.
The name boosted control function alludes to
the second component which extracts invariant predictive information
from the remainder term $\gamma_0(V)$. In the following section, we
provide conditions under which the control function is identifiable.

\subsection{Identifiability Conditions}\label{sec:ident-conditions}

For the BCF $f_{\star}$ to be useful, we first need to ensure
that it is indeed identifiable from the training distribution
$P_{\train}$, that is, if $f_0$ and $\gamma_0$ are replaced by any
other functions $f$ and $\gamma$ satisfying
$f(X) + \gamma(V)=\Exp_{P_{\train}}[Y \mid X, V]$ in
\eqref{eq:def-cf} the BCF $f_{\star}$ does not change. Formally,
identifiability is defined as follows.
\begin{definition}[Identifiability of \cfname]\label{def:ident-cf}
	Assume Setting~\ref{set:setting-1} and define $R$ as in
	\eqref{eq:definition_R}.
	Assume $f_0 \in \mathcal{F}$ and $\gamma_0 \in \mathcal{G}$, where $\mathcal{F}$ and $\mathcal{G}$ are classes of measurable functions.
	The
	BCF $f_\cf$ is identifiable (with respect to $\mathcal{F}$ and $\mathcal{G}$)
	from the observational distribution
	$P_{\train}$
	if and only if
	for all
	$f \in \mathcal{F}$ and $\gamma \in \mathcal{G}$
	the following statement holds
	\begin{align*} %
		&\quad\quad\quad\quad\Exp_{P_{\train}}[Y \mid X, V] = f(X) + \gamma(V),\
		P_{\train}\text{-a.s.}\\
    & \quad\quad\quad\quad\quad\quad\quad\quad\quad\quad\quad\implies\\
		& \begin{cases}
			f_\cf(X) = f(X) +
			\Exp_{P_{\train}}[\gamma(V) \mid R^\top X] \ P_{\train}\text{-a.s.,}\, & \text{if }q<p,  \\
			f_\cf(X) = f(X) +
			\Exp_{P_{\train}}[\gamma(V)] \ P_{\train}\text{-a.s.}\,               & \text{if }q=p.
		\end{cases}
	\end{align*}
\end{definition}
An equivalent definition of identifiability of $f_\star$ states that for any additive function $f(X) + \gamma(V)$
satisfying~\eqref{eq:contr-fun} the difference $f(X) - f_0(X)$ can be
written as a function $\delta(R^\top X)$ depending on $X$ only via the
null space of $M_0^\top$.
This is formalized in the following proposition.
\begin{proposition}[Equivalent condition for identifiability]
	\label{prop:ident-cf}
	Assume Setting~\ref{set:setting-1}, let
	$q = \rank(M_0)$ and define $R$ as in
	\eqref{eq:definition_R}.
	The \cfname\ $f_\cf$ is identifiable
	from the observational distribution $P_{\train}$ if and only if
	for all measurable functions $h, g:\R^p\rightarrow\R$ the
	following statement holds,
	\begin{align}\label{eq:ident-cf}
  \begin{split}
		&\quad\quad\quad\quad\quad\quad h(X) + g(V) = 0,\ P_{\train}\text{-a.s.}\\
		&\quad\quad\quad\quad\quad\quad\quad\quad\quad\quad\quad\implies\\
		&\begin{cases}
			\exists \delta: \R^{p - q} \to \R\ \text{such that}\
			h(X) = \delta(R^\top X),\ P_{\train}\text{-a.s.,}\,                            & \text{if }q<p,  \\
			\exists c\in\mathbb{R} \text{ such that } h(X) = c,\ P_{\train}\text{-a.s.,}\, & \text{if }q=p.
		\end{cases}
  \end{split}
	\end{align}
\end{proposition}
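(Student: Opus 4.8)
The plan is to show that both sides of the claimed equivalence are reformulations of the same condition, obtained by passing from a candidate pair $(f,\gamma)$ to the difference pair $(h,g) \coloneqq (f - f_0,\ \gamma - \gamma_0)$. First I would invoke the control-function identity~\eqref{eq:contr-fun}, which gives $\Exp_{P_\train}[Y \mid X, V] = f_0(X) + \gamma_0(V)$ $P_\train$-a.s. Hence a pair $(f,\gamma)$ satisfies the premise $f(X) + \gamma(V) = \Exp_{P_\train}[Y \mid X, V]$ of Definition~\ref{def:ident-cf} if and only if $h(X) + g(V) = 0$ $P_\train$-a.s. Taking $\mathcal{F},\mathcal{G}$ to be the classes of (integrable) measurable functions, as the quantifier ``for all measurable $h,g$'' in the statement indicates, the translation $f = f_0 + h$, $\gamma = \gamma_0 + g$ becomes a bijection between the valid pairs entering identifiability and the measurable solution pairs $(h,g)$ of the functional equation $h(X) + g(V) = 0$.

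Second, I would rewrite the conclusion of Definition~\ref{def:ident-cf} in terms of $(h,g)$. Using the definition~\eqref{eq:def-cf} of $f_\cf$ and linearity of the conditional expectation,
\[
\big(f(X) + \Exp_{P_\train}[\gamma(V) \mid R^\top X]\big) - f_\cf(X) = h(X) + \Exp_{P_\train}[g(V) \mid R^\top X] \quad P_\train\text{-a.s.}
\]
Since $g(V) = -h(X)$ $P_\train$-a.s. on the solution set, the two variables share conditional expectations, so $\Exp_{P_\train}[g(V) \mid R^\top X] = -\Exp_{P_\train}[h(X) \mid R^\top X]$, and the right-hand side collapses to $h(X) - \Exp_{P_\train}[h(X) \mid R^\top X]$. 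Thus the identifiability conclusion $f_\cf(X) = f(X) + \Exp_{P_\train}[\gamma(V) \mid R^\top X]$ holds $P_\train$-a.s. if and only if $h(X) = \Exp_{P_\train}[h(X) \mid R^\top X]$ $P_\train$-a.s.

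Third, I would identify this last equality with the right-hand side of~\eqref{eq:ident-cf}. The conditional expectation $\Exp_{P_\train}[h(X) \mid R^\top X]$ is, by construction, $\sigma(R^\top X)$-measurable, so by the Doob--Dynkin lemma the equality $h(X) = \Exp_{P_\train}[h(X) \mid R^\top X]$ $P_\train$-a.s. holds if and only if $h(X) = \delta(R^\top X)$ $P_\train$-a.s. for some measurable $\delta: \R^{p-q} \to \R$ (the converse direction being immediate, since a function of $R^\top X$ is unchanged by conditioning on $R^\top X$). In the degenerate case $q = p$, where $R = \mathbf{0}$, the conditioning is on a constant, $\Exp_{P_\train}[h(X) \mid R^\top X] = \Exp_{P_\train}[h(X)]$, and ``$h(X) = \delta(R^\top X)$'' becomes ``$h(X) = c$ for a constant $c$'', matching the second branch of~\eqref{eq:ident-cf}. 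Chaining these equivalences across all valid pairs $(f,\gamma)$---equivalently, all measurable solutions $(h,g)$---yields the asserted ``if and only if''.

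The main obstacle I anticipate is bookkeeping rather than a deep difficulty. I must ensure (i) that the conditional expectations $\Exp_{P_\train}[g(V)\mid R^\top X]$ and $\Exp_{P_\train}[h(X)\mid R^\top X]$ are well defined, which follows from the integrability built into Setting~\ref{set:setting-1} (finite second moments of $(U,V)$ and $\sup_{P}\Exp_P[f_0(X)]^2 < \infty$) together with restricting $\mathcal{F},\mathcal{G}$ to integrable functions; and (ii) that the translation $(f,\gamma)\mapsto(h,g)$ genuinely exhausts all measurable solution pairs, which requires $\mathcal{F}$ and $\mathcal{G}$ to be closed under adding $f_0$ and $\gamma_0$---automatic once they are taken to be the full classes of measurable functions. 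A final small point to check carefully is the ``only if'' use of Doob--Dynkin: one must record that $\Exp_{P_\train}[h(X)\mid R^\top X]$ admits a version of the form $\delta(R^\top X)$, so that the a.s. equality transfers to an honest functional representation through $R^\top$.
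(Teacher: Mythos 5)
Your proposal is correct and follows essentially the same route as the paper's proof: both translate a candidate pair $(f,\gamma)$ into the difference pair $(h,g)=(f-f_0,\gamma-\gamma_0)$ via the a.s.\ uniqueness of $\Exp_{P_\train}[Y\mid X,V]=f_0(X)+\gamma_0(V)$, and both reduce the identifiability conclusion to the statement that $h(X)$ is a function of $R^\top X$ using the same conditional-expectation algebra (with the $q=p$ case handled as the degenerate constant case). Your packaging of the two implications into a single chain of equivalences through the fixed-point condition $h(X)=\Exp_{P_\train}[h(X)\mid R^\top X]$, together with the explicit appeal to Doob--Dynkin, is only a cosmetic reorganization of the paper's two-directional argument.
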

A proof can be found in
Appendix~\ref{proof:ident-cf}.
Proposition~\ref{prop:ident-cf} can be seen as an extension of~\citet{newey1999}'s identifiability condition to the
underidentified setting:
\citet{newey1999} consider only the case $q=p$; a further difference to that work is that our goal
is to achieve distribution generalization rather than identifying the structural function $f_0$.
Identifiability of the BCF $f_\cf$ depends
on the assumptions we are willing to make on the class of structural functions $f$ and the control functions $\gamma$.
Depending on whether the exogenous variable $Z$ is categorical or continuous, we now provide two different sufficient conditions  for identifiability of the \cfname.
\begin{assumption}[Categorical $Z$ and linear $\gamma_0$]\label{ass:linear-cf}
	Let $\mathcal{G}$ be the class of linear functions.
	The exogenous variable $Z$ is categorical with values in a finite subset $\mathcal{Z}\subseteq\R^r$ (see, e.g., the construction proposed in Remark~\ref{rmk:cat-z})
	such that
	for all $z \in \mathcal{Z}$ $P_{\train}(Z=z) > 0$,
	$\Exp_{P_\train}[Z] = 0$ and $\E_{P_\train}[ZZ^\top] \succ 0$
	(which implies that $Z$ takes at least $r+1$ different values).
	Moreover,
	the control function $\gamma_0: \R^p \to \R$ is linear.
	Let $q = \rank(M_0)$ and
	assume that there exists $\{z_1, \tilde{z}_1,
		\ldots,z_q,\tilde{z}_q\} \subseteq \mathcal{Z}$
	such that
	for all $j \in \{1, \dots, q\}$ the distributions $P_\train^{V +
			M_0 z_j}$ and $P_\train^{V + M_0 \tilde{z}_j}$ are not
	mutually singular\footnote{Two probability measures $\mu$, $\nu$ on a space $(\Omega, \mathbb{F})$ are mutually singular if there exists a measurable set $E \in \mathbb{F}$ such that $\mu(E^c) = \nu(E) = 0$.}, and $\vspan(\{M_0(z_j-\tilde{z}_j)\mid j\in\{1,...,q\}\})=\col(M_0)$.
\end{assumption}
Assumption~\ref{ass:linear-cf} ensures that we observe the same realization of the predictor $X$ under distinct environments, and these environments span a space that is rich enough.
The following assumption is a slight modification of~\citep[][Theorem~2.3]{newey1999} with the difference that we do not require
$\rank(M_0) = p$ since we allow for underidentified settings, too.
\begin{assumption}[%
		Differentiable $f_0$ and $\gamma_0$%
	]\label{ass:diffble-cf}
	Let $\mathcal{F}$ and $\mathcal{G}$ each be the class of differentiable functions.
	The boundary of the support of $(V, Z)$
	has zero probability under $P_\train$
	and the interior of the
	support of $(V, Z)$ is convex
	under $P_{\train}$ (this implies that $Z$ is not discrete).
	Furthermore, the control and structural functions $\gamma_0, f_0: \R^p \to \R$ are differentiable.
\end{assumption}
Under either Assumption~\ref{ass:linear-cf} or~\ref{ass:diffble-cf}, the observational distribution $P_{\train}$ contains enough information to identify the \cfname\ $f_\cf$.
\begin{proposition}[\cfname\ $f_\star$ is identifiable]\label{prop:identification}
	Assume Setting~\ref{set:setting-1}.
	Suppose that either Assumption~\ref{ass:linear-cf} or Assumption~\ref{ass:diffble-cf} holds.
	Then, the \cfname\ $f_\cf$ is identifiable from $P_{\train}$.
\end{proposition}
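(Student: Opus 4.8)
The plan is to invoke Proposition~\ref{prop:ident-cf}, which reduces identifiability of the \cfname\ $f_\cf$ to verifying the implication~\eqref{eq:ident-cf}: whenever $h(X)+g(V)=0$ holds $P_{\train}$-a.s.\ for the relevant difference functions $h=f-\tilde f$ and $g=\gamma-\tilde\gamma$, one must show that $h(X)=\delta(R^\top X)$ $P_{\train}$-a.s.\ (if $q<p$) or $h(X)=c$ $P_{\train}$-a.s.\ (if $q=p$). Substituting the reduced-form equation $X=M_0Z+V$ and using $V\ind Z$, the hypothesis reads $h(M_0z+v)+g(v)=0$ for $(v,z)$ ranging over $\supp(P_{\train}^{V,Z})$. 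I would then treat the two assumptions separately.

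Under Assumption~\ref{ass:linear-cf} the function $g$ is linear, say $g(v)=b^\top v$. Fixing $j$ and the two environments $z_j,\tilde z_j$, the identity gives $h(X)=-b^\top(X-M_0z_j)$ $P_{\train}^{V+M_0z_j}$-a.s.\ and $h(X)=-b^\top(X-M_0\tilde z_j)$ $P_{\train}^{V+M_0\tilde z_j}$-a.s. The two maps $x\mapsto h(x)+b^\top(x-M_0z_j)$ and $x\mapsto h(x)+b^\top(x-M_0\tilde z_j)$ therefore vanish on measurable sets $A_j,\tilde A_j$ of full measure under the respective distributions. Since $P_{\train}^{V+M_0z_j}$ and $P_{\train}^{V+M_0\tilde z_j}$ are not mutually singular, $A_j\cap\tilde A_j\neq\emptyset$ (otherwise $A_j$ would witness mutual singularity); at any common point the two expressions for $h(x)$ agree, forcing $b^\top M_0(z_j-\tilde z_j)=0$. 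Ranging over $j$ and using $\vspan(\{M_0(z_j-\tilde z_j)\mid j\in\{1,\dots,q\}\})=\col(M_0)$ yields $b\in\col(M_0)^{\perp}=\ker(M_0^\top)$. Writing $b=Rc$ and recalling $R^\top X=R^\top V$, I obtain $h(X)=-b^\top V=-c^\top R^\top X=\delta(R^\top X)$; when $q=p$ we get $b=0$ and hence $h\equiv 0$.

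Under Assumption~\ref{ass:diffble-cf}, continuity of the differentiable functions extends the identity $h(M_0z+v)+g(v)=0$ from a set of full measure to all of $\supp(P_{\train}^{V,Z})$, and in particular to its convex interior. Differentiating with respect to $z$ there gives $M_0^\top\nabla h(M_0z+v)=0$, i.e.\ $\nabla h(x)\in\ker(M_0^\top)=\col(R)$ for every $x$ in the convex image $S$ of the interior under $(v,z)\mapsto M_0z+v$. Consequently the directional derivative $u^\top\nabla h$ vanishes along every $u\in\col(M_0)=\ker(M_0^\top)^{\perp}$, and integrating along segments (which remain in $S$ by convexity) shows $h(x)=h(x')$ whenever $R^\top x=R^\top x'$, so $h(x)=\delta(R^\top x)$ on $S$. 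Because the boundary of the support carries zero $P_{\train}$-probability, this gives $h(X)=\delta(R^\top X)$ $P_{\train}$-a.s.; for $q=p$ the gradient vanishes identically and $h$ is constant.

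The hard part in the categorical case is turning the measure-theoretic statement ``not mutually singular'' into the usable conclusion that $A_j\cap\tilde A_j$ is nonempty, which is what pins down $b^\top M_0(z_j-\tilde z_j)=0$ without any smoothness of $h$. In the differentiable case, the delicate points are justifying pointwise differentiation of an a.s.\ identity (resolved via continuity and passage to the convex interior) and the integration-along-$\col(M_0)$ argument, which genuinely needs convexity of the support interior so that the connecting segments stay inside the domain; the zero-probability boundary assumption is then what upgrades the conclusion on $S$ to a $P_{\train}$-a.s.\ statement.
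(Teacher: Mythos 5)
Your proof is correct and follows the same two-case skeleton as the paper (reduce to the implication~\eqref{eq:ident-cf} via Proposition~\ref{prop:ident-cf}, then treat Assumptions~\ref{ass:linear-cf} and~\ref{ass:diffble-cf} separately, in each case showing $\gamma_0-\gamma\in\ker(M_0^\top)$ resp.\ $\nabla h\perp\col(M_0)$), but both of your sub-arguments are genuinely streamlined relative to the paper's. In the categorical case, the paper constructs the rather elaborate sets $A_j^*$ and $B_j$, explicitly verifying that a common point lies in the supports of both shifted distributions; your observation that two sets of full measure under non--mutually-singular measures must intersect (else one would witness singularity) delivers the same point $x$ with $h(x)=-b^\top(x-M_0z_j)=-b^\top(x-M_0\tilde z_j)$ with less machinery, and the support bookkeeping in the paper is not actually needed for that deduction. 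In the differentiable case, the paper proves the key implication~\eqref{eq:prop-h} in two stages: first constancy of $z\mapsto h(M_0z+v)$ via the mean value theorem (their~\eqref{eq:z-const}), then a discretized chaining argument along a path $v_0,\dots,v_K$ alternating shifts in $v$ and $z$. You instead note that the gradient identity~\eqref{eq:dvtv} gives $\nabla h(x)\in\ker(M_0^\top)$ at every point of the convex set $S=M_0S_Z^\circ+S_V^\circ$ (convex as the image of the convex interior under a linear map), so integrating the vanishing directional derivative along the straight segment joining any $x,x'$ with $x-x'\in\col(M_0)$ immediately yields $h(x)=h(x')$; this replaces the chaining entirely and is, if anything, cleaner. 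You are also more careful than the paper at one point: the extension of the a.s.\ identity $h(M_0z+v)+g(v)=0$ to all of the support interior genuinely requires the continuity argument you give (a full-measure subset is only dense in the support), whereas the paper passes over this silently. The only omission is the preliminary remark that $M_0$, and hence $V$ and $R$, are themselves identified from $P_\train$ via $\Exp_{P_\train}[XZ^\top]\Exp_{P_\train}[ZZ^\top]^{-1}$, which is needed for the identifiability statement to be meaningful but is routine.
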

A proof can be found
in Appendix~\ref{proof:identification}.
Algorithm~\ref{alg:bcf}
provides a procedure for identifying 
the \cfname{}
$f_{\star}$
from the observed distribution if it is identifiable.
\begin{algorithm}[t]
	\caption{Boosted control function identification}
	\label{alg:bcf}

	\begin{algorithmic}[1]
		\Require Observational distribution $P_{\train}^{X, Y, Z}$ over $(X, Y, Z)$.

		\Ensure \cfname\ $f_\cf: \R^p \to \R$.

		\State Compute conditional expectation $\Exp_{P_{\train}}[X \mid Z] = M_0Z$ to identify  $M_0 \in \R^{p \times r}$. \label{alg:line-1}

		\State  Compute the control variable $V = X - M_0 Z$ and a basis $R \in \R^{p \times (p-q)}$ for the left null space $\ker(M_0^\top)$. \label{alg:line-2}

		\State  Compute the additive conditional expectation
		$\Exp_{P_{\train}}[Y \mid X, V] = f_0(X) + \gamma_0(V)$.
		\label{alg:line-3}

		\State
		Compute the conditional expectation
		$\Exp_{P_{\train}}[\gamma_0(V) \mid R^\top X]$.
		\label{alg:line-4}

		\State Return
		$f_\cf(x) \coloneqq f_0(x) +
			\Exp_{P_{\train}}[\gamma_0(V) \mid R^\top X = R^{\top}x]$.
	\end{algorithmic}
\end{algorithm}
We will now see that $f_\star$ indeed comes with generalization guarantees and is invariant most predictive.

\subsection{Generalization Guarantees}\label{subsec:distgen}

In this section, we study under which conditions
the \cfname\ $f_{\cf}$ is a generalizing function, that is
\begin{align*}%
	\sup_{P \in \mathcal{P}_0} \risk(P, f_{\cf}) = \inf_{f \in \mathcal{F}} \sup_{P \in \mathcal{P}_0} \risk(P, f),
\end{align*}
where $\mathcal{P}_0$ denotes the set of distributions induced by
the \modelname\
as defined in~\eqref{eq:mathcalP}.
Clearly, if $\mathcal{P}_0$ only contains $P_{\train}$, then the least squares solution is optimal in terms of distribution generalization.
In particular,
it
is possible to quantify the additional risk of the BCF $f_\cf$ on the training
distribution $P_{\train}$ compared to the least squares prediction
$f_\ols\colon x\mapsto \Exp_{P_{\train}}[Y \mid X=x]$.
\begin{proposition}\label{prop:cf-imp-are-imp}
	Assume Setting~\ref{set:setting-1}, let
	$\gamma_0(V) = \Exp_{P_\train}[U \mid V]$ and define $R$ as in
	\eqref{eq:definition_R}
	Then, the \cfname\
	$f_{\cf}$ is invariant according to
	Definition~\ref{def:inv_funs}. Moreover, the
	mean squared
	error
	risks of $f_\cf$ and the least squares predictor
	$f_\ols$ are related
	by
	\begin{align}
		\label{eq:risk_comparison_bcf_ols}
		\risk(P_{\train}, f_{\cf})=\risk(P_{\train}, f_{\ols})+\Exp_{P_{\train}}
		\left[
			\left(
			\Exp_{P_{\train}}[\gamma_0(V)\vert R^\top X]-\Exp_{P_{\train}}[\gamma_0(V)\vert X]
			\right)^2
			\right].
	\end{align}
\end{proposition}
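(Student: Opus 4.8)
For the invariance claim, I would substitute the structural equation $Y = f_0(X) + U$ into the residual, so that the $f_0(X)$ terms cancel and $Y - f_\cf(X) = U - \Exp_{P_{\train}}[\gamma_0(V)\mid R^\top X]$. The key observation is that $R$ spans $\ker(M_0^\top)$, hence $R^\top M_0 = 0$ and $R^\top X = R^\top(M_0 Z + V) = R^\top V$ under \emph{every} $P\in\mathcal{P}_0$ (when $q=p$ one has $R=\mathbf{0}$ and the conditional expectation collapses to $\Exp_{P_{\train}}[\gamma_0(V)] = \Exp_{P_{\train}}[U] = 0$). Writing $\psi$ for the fixed, $P_{\train}$-determined regression function with $\psi(R^\top X) = \Exp_{P_{\train}}[\gamma_0(V)\mid R^\top X]$, the residual becomes $U - \psi(R^\top V)$. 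Since $(U,V)\sim\Lambda_0$ with $\Lambda_0$ fixed across all of $\mathcal{P}_0$, and since both $\psi$ and the map $V\mapsto R^\top V$ are determined by $\Lambda_0$ alone (not by the choice of $Q\in\mathcal{Q}_0$), the law of $U - \psi(R^\top V)$ is identical under every $P\in\mathcal{P}_0$. This is exactly invariance in the sense of Definition~\ref{def:inv_funs}.

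For the risk identity, the main preparatory step is to express both $f_\ols$ and $f_\cf$ as $f_0$ plus a conditional expectation of $\gamma_0(V)$. I would first establish $\Exp_{P_{\train}}[U\mid X, V] = \gamma_0(V)$: because $(U,V)\ind Z$ and $X = M_0 Z + V$, we have $\sigma(X,V) = \sigma(M_0 Z, V)$, and the independence yields $\Exp_{P_{\train}}[U\mid Z, V] = \Exp_{P_{\train}}[U\mid V] = \gamma_0(V)$; applying the tower property over the coarser field $\sigma(M_0 Z, V)$ and using that $\gamma_0(V)$ is $V$-measurable gives $\Exp_{P_{\train}}[U\mid X, V] = \gamma_0(V)$. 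Taking a further conditional expectation then gives $\Exp_{P_{\train}}[U\mid X] = \Exp_{P_{\train}}[\gamma_0(V)\mid X]$, so that $f_\ols(X) = f_0(X) + \Exp_{P_{\train}}[\gamma_0(V)\mid X]$, whereas $f_\cf(X) = f_0(X) + \Exp_{P_{\train}}[\gamma_0(V)\mid R^\top X]$ by Definition~\ref{def:control_estimator}. Writing $a(X) \coloneqq \Exp_{P_{\train}}[\gamma_0(V)\mid X]$ and $b(X) \coloneqq \Exp_{P_{\train}}[\gamma_0(V)\mid R^\top X]$, the two residuals are $Y - f_\ols(X) = U - a(X)$ and $Y - f_\cf(X) = U - b(X)$.

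The final step is a Pythagorean decomposition: I would write $U - b(X) = (U - a(X)) + (a(X) - b(X))$, square, and take $\Exp_{P_{\train}}$. The cross term $\Exp_{P_{\train}}[(U - a(X))(a(X) - b(X))]$ vanishes because $a(X) = \Exp_{P_{\train}}[U\mid X]$, so $U - a(X)$ is orthogonal in $L^2(P_{\train})$ to every square-integrable $\sigma(X)$-measurable function, and $a(X) - b(X)$ is such a function (both $a$ and $b$ are functions of $X$, since $R^\top X$ is). This gives $\risk(P_{\train}, f_\cf) = \risk(P_{\train}, f_\ols) + \Exp_{P_{\train}}[(a(X)-b(X))^2]$, which is the stated identity (the sign inside the square is immaterial). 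Integrability poses no difficulty: \eqref{eq:sem-3} gives $U\in L^2(P_{\train})$, conditional Jensen gives $\gamma_0(V)\in L^2(P_{\train})$, and conditional expectation is an $L^2$-contraction, so $a(X), b(X)\in L^2(P_{\train})$. I expect the only genuine subtlety to be the justification of $\Exp_{P_{\train}}[U\mid X, V] = \gamma_0(V)$ together with the attendant $\sigma$-algebra bookkeeping; once that identity is in place, the remainder is the standard orthogonal-projection argument.
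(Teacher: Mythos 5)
Your proposal is correct and follows essentially the same route as the paper's proof: for invariance, both arguments reduce the residual to $U - \Exp_{P_{\train}}[\gamma_0(V)\mid R^\top V]$ via $R^\top X = R^\top V$ and note that the law of $(U,V)\sim\Lambda_0$ is fixed across $\mathcal{P}_0$; for the risk identity, both use the decomposition $f_{\ols}(X)=f_0(X)+\Exp_{P_{\train}}[\gamma_0(V)\mid X]$ and a Pythagorean expansion whose cross term vanishes by orthogonality of $Y-\Exp_{P_{\train}}[Y\mid X]$ to $\sigma(X)$-measurable functions. The only difference is cosmetic: you spell out the derivation of $\Exp_{P_{\train}}[U\mid X,V]=\gamma_0(V)$, which the paper takes as given from its earlier display \eqref{eq:contr-fun}.
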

A proof can be found in
Appendix~\ref{proof:cf-imp-are-imp}.
The additional term in \eqref{eq:risk_comparison_bcf_ols}, is the
price one needs to pay in terms of prediction performance to be
invariant. That is, if $P_{\test}=P_{\train}$ then the BCF is
outperformed by the least squares predictor by exactly this term. In
contrast, if there is sufficient heterogeneity in $\mathcal{P}_0$, then
the BCF $f_\star$ is a generalizing function.
Formally, we use the following assumption to
quantify what is meant by sufficient heterogeneity.
\begin{assumption}\label{ass:condexp}
	Assume Setting~\ref{set:setting-1}, let $\gamma_0(V)
		= \Exp_{P_\train}[U \mid V]$ and define $R$ as in \eqref{eq:definition_R}.
	Then, assume that
	\begin{align*}
		\inf_{P \in \mathcal{P}_0} \Exp_P \left[\left( \Exp_P[\gamma_0(V) \mid R^\top X] - \Exp_{P}[\gamma_0(V) \mid X] \right)^2 \right]  = 0.
	\end{align*}
\end{assumption}
Whether this assumption holds depends on the class of distributions
$\mathcal{P}_0$ and on the control function $\gamma_0$.
From the SCM perspective, the
assumption requires that
the interventions
on $Z$ are strong enough so that the only part of $X$ that is relevant to
explain the confounder $V$ is the invariant one, i.e.,
$R^\top X$.
Assumption~\ref{ass:condexp} holds, for example, when the noise terms are
jointly Gaussian or when
the control function $\gamma_0(V)$ is bounded, as long as the class of
distributions $\mathcal{Q}_0$ contains standard Gaussian
distributions
with arbitrarily large variance.
\begin{proposition}\label{prop:ass-5}
	Assume Setting~\ref{set:setting-1} and suppose that
	\begin{equation*}
		\{ N(0, k^2I_r)\mid k\in\N \}\subseteq \mathcal{Q}_0.
	\end{equation*}
	Suppose the joint distribution $\Lambda_0$ of $(U, V)$  satisfies one of the following conditions.
	\begin{enumerate}
		\item[(a)] $V$ has a density w.r.t.\
			Lebesgue, and the control function
			$\gamma_0(V) = \Exp_{P_{\train}}[U \mid V]$ is almost
			surely bounded.
		\item[(b)]
			$\Lambda_0$
			is a multivariate centered Gaussian distribution (with a non-degenerate covariance matrix).
	\end{enumerate}
	Then, Assumption~\ref{ass:condexp} holds.
\end{proposition}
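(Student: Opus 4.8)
The plan is to exhibit, for each of the two cases, an explicit sequence of distributions $P_k\in\mathcal{P}_0$ (induced by $Z\sim N(0,k^2 I_r)\in\mathcal{Q}_0$) along which the integrand in Assumption~\ref{ass:condexp} tends to zero; since that integrand is nonnegative, this forces the infimum to vanish. The starting observation is that $R^\top X=R^\top M_0Z+R^\top V=R^\top V$, so $\Exp_P[\gamma_0(V)\mid R^\top X]=\Exp_P[\gamma_0(V)\mid R^\top V]$ depends only on the joint law of $(\gamma_0(V),V)$, which is governed by $\Lambda_0$ for every $P\in\mathcal{P}_0$; call this ($P$-independent) function $m_R:=\psi(R^\top V)$. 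Since $\sigma(R^\top X)\subseteq\sigma(X)$, the tower property gives $m_R=\Exp_P[m_X\mid R^\top X]$ where $m_X:=\Exp_P[\gamma_0(V)\mid X]$, and the Pythagorean identity for nested conditional expectations yields
\[
\Exp_P\big[(\Exp_P[\gamma_0(V)\mid R^\top X]-\Exp_P[\gamma_0(V)\mid X])^2\big]=\Exp_P\big[(m_X)^2\big]-\Exp_P\big[m_R^2\big].
\]
Thus it suffices to show $\Exp_{P_k}[(m_X^{(k)})^2]\to\Exp[m_R^2]$. I would couple all $P_k$ on a single space by fixing $(U,V)\sim\Lambda_0$ and $\xi\sim N(0,I_r)$ independent, setting $X_k:=kM_0\xi+V$. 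Writing $[S\,\,R]$ for an orthonormal basis of $\R^p$ adapted to $\col(M_0)\oplus\ker(M_0^\top)$ and $M_0=SA$ with $A\in\R^{q\times r}$ of full row rank, one has $R^\top X_k=R^\top V$ and $S^\top X_k=kA\xi+S^\top V$, so $\sigma(X_k)=\sigma(S^\top X_k,R^\top V)$.

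For case (b), $(U,V)$ is jointly centered Gaussian, hence $\gamma_0(V)=c^\top V$ is linear and $m_X^{(k)},m_R$ are linear in their conditioning variables. Then $\Exp[(m_X^{(k)})^2]=c^\top\Sigma_V(k^2M_0M_0^\top+\Sigma_V)^{-1}\Sigma_V c$ with $\Sigma_V:=\operatorname{Cov}(V)$, and the remaining task is the matrix limit $(k^2M_0M_0^\top+\Sigma_V)^{-1}\to R(R^\top\Sigma_V R)^{-1}R^\top$. I would establish this by conjugating with $[S\,\,R]$ and taking the Schur complement of the top-left block $k^2AA^\top+S^\top\Sigma_V S$, whose inverse vanishes as $k\to\infty$; the bottom-right block of the inverse then converges to $(R^\top\Sigma_V R)^{-1}$ and the off-diagonal blocks to zero. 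Substituting gives $\Exp[(m_X^{(k)})^2]\to c^\top\Sigma_V R(R^\top\Sigma_V R)^{-1}R^\top\Sigma_V c=\Exp[m_R^2]$, as required.

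For case (a), I would compute $m_X^{(k)}$ by Bayes' rule: writing the law of $(S^\top V,R^\top V)$ as a density $p$ and the $N(0,k^2AA^\top)$-density as $\varphi_k$, the conditional expectation equals $g_k(S^\top X_k,R^\top V)$ with $g_k(t,s)=\frac{\int\tilde\gamma(u,s)\varphi_k(t-u)p(u,s)\,\d u}{\int\varphi_k(t-u)p(u,s)\,\d u}$, where $\tilde\gamma$ denotes $\gamma_0$ in the new coordinates. Evaluating at $t=S^\top X_k=kA\xi+S^\top V$ and cancelling the $u$-independent factor of $\varphi_k$ leaves a ratio of integrals of $e_k(u):=\exp(-\tfrac1k(A\xi)^\top(AA^\top)^{-1}(S^\top V-u)-\tfrac1{2k^2}(S^\top V-u)^\top(AA^\top)^{-1}(S^\top V-u))$, which converges pointwise to $1$. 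Completing the square shows $\sup_u e_k(u)\le\exp(\tfrac12\xi^\top A^\top(AA^\top)^{-1}A\xi)$ for all $k$, giving an integrable dominator against $p$; dominated convergence then yields $g_k(S^\top X_k,R^\top V)\to\psi(R^\top V)=m_R$ almost surely. Since $\gamma_0$ is bounded, $|m_X^{(k)}-m_R|\le2\norm{\gamma_0}_\infty$, so bounded convergence upgrades this to $\Exp_{P_k}[(m_X^{(k)}-m_R)^2]\to0$.

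I expect the main obstacle to be case (a), specifically producing a dominating function that is uniform in $k$: as $k\to\infty$ both the integrand $e_k$ tends to the non-integrable constant $1$ and the conditioning point $S^\top X_k$ drifts to infinity, so naive bounds fail. The completing-the-square estimate above, which exploits that the quadratic penalty in $\varphi_k$ always dominates the linear drift, is what makes the argument go through; the Gaussian case (b) is then a routine linear-algebra computation. In both cases $P_k\in\mathcal{P}_0$ and the nonnegative integrand tends to zero along this sequence, so its infimum over $\mathcal{P}_0$ is zero, establishing Assumption~\ref{ass:condexp}. The degenerate case $q=p$, where $R=\mathbf 0$ and $m_R=\Exp[\gamma_0(V)]=0$, is covered by the same argument with the $R^\top V$-coordinate absent.
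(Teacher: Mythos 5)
Your proposal is correct, and for part (a) it takes a genuinely different technical route from the paper. The overall strategy is the same and essentially forced by the statement: take $Q_k=N(0,k^2I_r)$, use $R^\top X=R^\top V$ so that $\Exp_P[\gamma_0(V)\mid R^\top X]$ is a fixed function of $R^\top V$, and show the integrand vanishes along $P_k$. For part (b) your argument coincides with the paper's (Lemma~\ref{lem:b-goesto-ker} is exactly your Schur-complement limit $B_k^{-1}\to R(R^\top\Sigma R)^{-1}R^\top$), except that your Pythagorean reduction $\Exp_P[(m_X-m_R)^2]=\Exp_P[m_X^2]-\Exp[m_R^2]$ is a cleaner bookkeeping device than the paper's term-by-term expansion of the square in~\eqref{eq:exp-equality}; both then reduce to the same matrix limit. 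For part (a) the paper (Lemma~\ref{lem:l2}) reparametrizes via $\theta_k=\arctan(k)$ and the sigma-algebra identity $\sigma(k\tilde Z+\tilde V_S)=\sigma(\tilde Z\sin\theta_k+\tilde V_S\cos\theta_k)$, which keeps the conditioning variable on a fixed support and lets the Gaussian factor converge to the constant $f_{\tilde Z}(w^1_S)$, with domination coming from the boundedness of $f_{\tilde Z}$; it then passes from a.s.\ to $L^2$ convergence via uniform boundedness. You instead condition directly on the drifting point $S^\top X_k=kA\xi+S^\top V$, cancel the $u$-independent Gaussian prefactor, and obtain the $k$-uniform dominating function $\exp(\tfrac12(A\xi)^\top(AA^\top)^{-1}A\xi)$ by completing the square — correctly identifying and resolving the one obstacle (the integrand tending to the non-integrable constant $1$ while the conditioning point escapes to infinity) that the paper's rotation trick is designed to avoid. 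Both arguments are valid; the paper's rotation is arguably more robust to non-Gaussian $Q_k$, while yours is more self-contained and makes the source of the uniform domination explicit.
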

A proof can be found in Appendix~\ref{proof:ass-5}.
We thank Alexander M. Christgau for pointing us to
their trigonometric argument in~\citet{christgau2023} that turned
out to be helpful to prove part (a)
of Proposition~\ref{prop:ass-5}.
Condition~(a) in Proposition~\ref{prop:ass-5}
allows for a wide range of distributions, including
those inducing bounded
random variables with densities, for example.
\begin{theorem}
	\label{thm:minimax-1}
	Assume Setting~\ref{set:setting-1}.
	Let $f_{\cf} \in \imp$ denote the \cfname.
	Suppose that Assumptions~\ref{ass:ident} and~\ref{ass:condexp} hold. Then,
	\begin{align*}%
		\sup_{P \in \mathcal{P}_0} \risk(P, f_{\cf}) = \inf_{f \in \mathcal{F}} \sup_{P \in \mathcal{P}_0} \risk(P, f).
	\end{align*}
\end{theorem}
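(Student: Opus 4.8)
The plan is to pin down $f_\cf$ as minimax optimal by proving two inequalities: the trivial upper bound $\inf_{f\in\mathcal F}\sup_{P\in\mathcal P_0}\risk(P,f)\le\sup_{P\in\mathcal P_0}\risk(P,f_\cf)$ (obtained by plugging in $f_\cf$), and the lower bound $\sup_{P\in\mathcal P_0}\risk(P,g)\ge\sup_{P\in\mathcal P_0}\risk(P,f_\cf)$ for every competitor $g\in\mathcal F$. Everything rests on one risk decomposition. Writing $\Delta:=g-f_0$ and $W:=U-\gamma_0(V)$ (so $\Exp[W\mid V]=0$), the SIMDG structure $X=M_0Z+V$ and $(U,V)\ind Z$ give $Y-g(X)=U-\Delta(X)$; since $\Delta(X)$ is $\sigma(V,Z)$-measurable while $\Exp[W\mid V,Z]=\Exp[W\mid V]=0$, the cross term $\Exp_P[W\,\Delta(X)]$ vanishes. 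Because $\gamma_0(V)$ depends only on the $\Lambda_0$-marginal of $V$, which is common to all $P\in\mathcal P_0$, this yields for every $P\in\mathcal P_0$
\[
\risk(P,g)=\sigma^2+\Exp_P\!\big[(\gamma_0(V)-\Delta(X))^2\big],\qquad \sigma^2:=\Exp\big[(U-\gamma_0(V))^2\big],
\]
so the generalization problem reduces to controlling the worst case of $\Exp_P[(\gamma_0(V)-\Delta(X))^2]$ over the choice of $\Delta$.

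Next I would apply this to the BCF. For $f_\cf$ we have $\Delta_\cf(X)=\Exp_{P_\train}[\gamma_0(V)\mid R^\top X]$, and since $R^\top X=R^\top V$ while the law of $V$ is the same under every $P\in\mathcal P_0$, this equals a fixed function $n(R^\top V)$ whose joint law with $\gamma_0(V)$ does not depend on $P$. Hence $\Exp_P[(\gamma_0(V)-\Delta_\cf(X))^2]$ is constant in $P$ --- this is exactly the invariance of $f_\cf$ established in Proposition~\ref{prop:cf-imp-are-imp} --- and therefore $\sup_{P\in\mathcal P_0}\risk(P,f_\cf)=\risk(P_\train,f_\cf)=\sigma^2+A$, where $A:=\Exp_{\Lambda_0}[(\gamma_0(V)-n(R^\top V))^2]$.

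The lower bound is the crux, and it is where Assumption~\ref{ass:condexp} enters. Fix any $g$; I may assume $\sup_P\risk(P,g)<\infty$, since otherwise the inequality is trivial. Let $P_k\in\mathcal P_0$ be the sequence from Assumption~\ref{ass:condexp}, and set $m_k(X):=\Exp_{P_k}[\gamma_0(V)\mid X]$ and $n_k(X):=\Exp_{P_k}[\gamma_0(V)\mid R^\top X]$, so that $\Exp_{P_k}[(m_k-n_k)^2]\to0$. As before, $R^\top X=R^\top V$ forces $n_k=n$ for all $k$ and $\Exp_{P_k}[(\gamma_0(V)-n)^2]=A$. Since $m_k$ is the $L^2(P_k)$-projection of $\gamma_0(V)$ onto $\sigma(X)$ and $\Delta$ is $\sigma(X)$-measurable, we get $\Exp_{P_k}[(\gamma_0(V)-\Delta(X))^2]\ge\Exp_{P_k}[(\gamma_0(V)-m_k(X))^2]$; expanding $\gamma_0-m_k=(\gamma_0-n)+(n-m_k)$ and bounding the cross term by Cauchy--Schwarz, namely $\sqrt{A}\,\sqrt{\Exp_{P_k}[(n-m_k)^2]}\to0$, shows $\Exp_{P_k}[(\gamma_0-m_k)^2]\to A$. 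Consequently $\sup_P\risk(P,g)\ge\sigma^2+\sup_k\Exp_{P_k}[(\gamma_0-m_k)^2]\ge\sigma^2+A=\sup_P\risk(P,f_\cf)$. Taking the infimum over $g\in\mathcal F$ and combining with the trivial upper bound closes the chain and gives equality. Assumption~\ref{ass:ident} is invoked (through Setting~\ref{set:setting-1} and Proposition~\ref{prop:cf-imp-are-imp}) only to guarantee that $f_\cf$, which is specified merely $P_\train$-a.s., has a well-defined and genuinely invariant risk under every $P\in\mathcal P_0$, so that all of these manipulations transfer consistently across the family. I expect the main obstacle to be precisely the limit $\Exp_{P_k}[(\gamma_0-m_k)^2]\to A$: the measures $P_k$ vary with $k$, so one cannot pass to a limit inside a single fixed $L^2$ space, and the argument must exploit the $P$-invariance of both $n$ and $\Exp_P[(\gamma_0-n)^2]$ to anchor the computation.
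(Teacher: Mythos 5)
Your proposal is correct and follows essentially the same route as the paper: both rest on the invariance of $f_\cf$ (so its risk is constant over $\mathcal{P}_0$) together with Assumption~\ref{ass:condexp} supplying distributions $P_k$ under which $f_\cf$ approaches the Bayes predictor $\Exp_{P_k}[Y\mid X]$, whose risk lower-bounds that of any competitor. The only difference is bookkeeping: you carry out the comparison via the explicit orthogonal decomposition $\risk(P,g)=\sigma^2+\Exp_P[(\gamma_0(V)-\Delta(X))^2]$ and an $L^2$-projection argument, whereas the paper writes $Y-f_\cf(X)=(Y-\Exp_{P^*}[Y\mid X])+(\Exp_{P^*}[Y\mid X]-f_\cf(X))$ and controls the cross term by Cauchy--Schwarz.
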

A proof can be found in Appendix~\ref{proof:minimaxthm-1}.
As a corollary of Theorem~\ref{thm:minimax-1}, we have that the
\cfname\ $f_\cf$ is indeed the IMP.
\begin{cor}\label{cor:imp}
	Assume Setting~\ref{set:setting-1}
	and suppose that
	Assumptions~\ref{ass:ident} and~\ref{ass:condexp} hold.
	Let
	$f_{\cf} \in \imp$ denote the \cfname.
	Then,
	\begin{align*}%
		\risk(P_{\train}, f_\cf) = \inf_{f \in \mathcal{I}_0} \risk(P_{\train}, f).
	\end{align*}
\end{cor}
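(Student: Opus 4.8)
The plan is to obtain the corollary directly from Theorem~\ref{thm:minimax-1} by exploiting the fact that, for invariant functions, the worst-case risk over $\mathcal{P}_0$ collapses to the training risk. The key identity I would establish first is that for every $f \in \mathcal{I}_0$,
\[
	\sup_{P \in \mathcal{P}_0} \risk(P, f) = \risk(P_\train, f).
\]
Indeed, by Definition~\ref{def:inv_funs} invariance means $P^{Y - f(X)} = P_\train^{Y - f(X)}$ for all $P \in \mathcal{P}_0$, and since $\risk(P, f) = \Exp_P[(Y - f(X))^2]$ depends on $P$ only through the law $P^{Y - f(X)}$, it takes the same value under every $P \in \mathcal{P}_0$. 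Because $P_\train \in \mathcal{P}_0$, the supremum is attained and equals $\risk(P_\train, f)$. This is exactly the step where the \emph{strong} notion of invariance (equality of the full residual law, not merely of its conditional mean) is essential.

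Next I would invoke the two already-established ingredients. By Proposition~\ref{prop:cf-imp-are-imp} the BCF $f_\cf$ is invariant, so $f_\cf \in \mathcal{I}_0$ and the identity above applies to it; combining this with Theorem~\ref{thm:minimax-1} gives
\[
	\risk(P_\train, f_\cf) = \sup_{P \in \mathcal{P}_0}\risk(P, f_\cf) = \inf_{f \in \mathcal{F}}\sup_{P \in \mathcal{P}_0}\risk(P, f).
\]
I would then close the argument by a sandwich. Since $\mathcal{I}_0 \subseteq \mathcal{F}$, restricting the infimum to $\mathcal{I}_0$ can only increase it, and on $\mathcal{I}_0$ the inner supremum equals the training risk by the first step, whence
\[
	\inf_{f \in \mathcal{F}}\sup_{P \in \mathcal{P}_0}\risk(P, f) \leq \inf_{f \in \mathcal{I}_0}\sup_{P \in \mathcal{P}_0}\risk(P, f) = \inf_{f \in \mathcal{I}_0}\risk(P_\train, f).
\]
Chaining this with the previous display yields $\risk(P_\train, f_\cf) \leq \inf_{f \in \mathcal{I}_0}\risk(P_\train, f)$, while the reverse inequality is immediate from $f_\cf \in \mathcal{I}_0$, so equality holds.

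I do not anticipate a serious obstacle, as the entire content is the first-step identity $\sup_{P}\risk(P,f)=\risk(P_\train,f)$ for invariant $f$. The two points requiring care are that this identity genuinely uses the distributional (rather than mean-based) definition of invariance, and that one must cite Proposition~\ref{prop:cf-imp-are-imp} to guarantee $f_\cf \in \mathcal{I}_0$. Finally, Theorem~\ref{thm:minimax-1} already certifies that the minimax value is finite, so every quantity appearing above is finite and the manipulations are valid.
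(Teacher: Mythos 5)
Your proposal is correct and follows essentially the same route as the paper's proof: both use that invariance makes the risk constant over $\mathcal{P}_0$, chain this with Theorem~\ref{thm:minimax-1} and the inclusion $\mathcal{I}_0 \subseteq \mathcal{F}$ to get one inequality, and obtain the reverse inequality from $f_\cf \in \mathcal{I}_0$. The only difference is that you spell out the identity $\sup_{P \in \mathcal{P}_0} \risk(P, f) = \risk(P_\train, f)$ for invariant $f$ more explicitly, which the paper leaves implicit.
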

A proof can be found in Appendix~\ref{proof:imp}.
We conclude the section by revisiting Example~\ref{ex:inv-stronger-ind-2}, which showed that the set $\mathcal{H}_1 = \{f \in \mathcal{F} \mid Y - f(X) \ind Z\ \text{under}\ P_{\train}\}$ may contain non-invariant functions.
Here, we provide an even stronger result, namely that optimizing the training risk over $\mathcal{H}_1 \supseteq \mathcal{I}_0$ may not yield the IMP.
\begin{example}[Optimizing the training risk over  $\mathcal{H}_1$ may not return the IMP]\label{ex:opt-h1-not-imp}
  ~\\  Consider the setting of Example~\ref{ex:inv-stronger-ind-2} and recall that the structural function $f_0$ is invariant, i.e., $f_0 \in \mathcal{I}_0$.
  By definition, the BCF coincides with the structural function $f_0$.
  As we argued in Example~\ref{ex:inv-stronger-ind-2}, this
  setup satisfies Assumption~\ref{ass:ident}. 
  Moreover, it satisfies Assumption~\ref{ass:linear-cf}: $Z$ is a centered, non-degenerate discrete random variable, the function $v \mapsto \gamma_0(v) \coloneqq v$ is linear, 
  $P_\train^{V + 2\pi} = N(2\pi, \sigma^2)$ and $P_\train^{V - 2\pi} = N(-2\pi, \sigma^2)$ are not
	mutually singular, and $\vspan(\{2\pi - (-2\pi)\})=\R$.
  Hence, by Proposition~\ref{prop:identification}
  the BCF $f_0$ is identifiable from $P_\train$.

  Suppose that the set of perturbed distribution satisfies $\{N(0, k^2) \mid k \in \N\} \subseteq \mathcal{Q}_0$.
  Then, by Proposition~\ref{prop:ass-5}, Assumption~\ref{ass:condexp} is fulfilled and by Corollary~\ref{cor:imp} the BCF $f_0$ is the IMP function.

  Now, assume that the noise variance of $V$ is $\sigma^2 = 2$. We show that the non-invariant function $f(x) \coloneqq f_0(x) + \sin(x) \in \mathcal{H}_1\setminus \mathcal{I}_0$ achieves a lower risk than $f_0$; hence, optimizing over $\mathcal{H}_1$
  does
  not yield the IMP.
  By Lemma~\ref{lem:mse-sin},
  \begin{align*}
	\risk(P_\train, f) = 1 + \sigma^2 + \frac{(1 - e^{-2\sigma^2})}{2} - 2\sigma^2e^{-\sigma^2/2}.
  \end{align*}
  For $\sigma^2 = 2$, one can verify that 
  $\risk(P_\train, f) < 1 + \sigma^2 = \risk(P_\train, f_0)$, 
  and therefore
  \begin{align*}
	\inf_{f \in \mathcal{H}_1} \risk(P_\train, f) < \risk(P_\train, f_0).
  \end{align*}
\end{example}

\subsection{Estimating Boosted Control Functions}\label{sec:bcf-est}

Consider Setting~\ref{set:setting-1} and let
$(X_1, Y_1,Z_1),\ldots,(X_n,Y_n,Z_n)$ be the i.i.d.\ training sample
drawn from the training distribution $P_{\train}\in\mathcal{P}_0$. We denote by
$\bfX \in \R^{n \times p}$, $\bfY \in \R^{n}$,
$\bfZ \in \R^{n \times r}$ the respective design matrices resulting
from row-wise concatenations of
the observations.
The algorithm to identify the BCF from $P_{\train}$,
Algorithm~\ref{alg:bcf}, consists of two parts. First
(lines~\ref{alg:line-1}--\ref{alg:line-2}), it computes the matrix
$M_0 \in \R^{p \times r}$, the control variables
$V \coloneqq X - M_0 Z$, and a basis $R \in \R^{p \times (p - q)}$
for the null space of $M_0^\top$.  Second
(lines~\ref{alg:line-3}--\ref{alg:line-4}), it computes
the additive function
$(x, v) \mapsto \Exp_{P_{\train}}[Y \mid X = x, V = v]$ and the
resulting BCF.
We now
convert each part into an
estimation procedure.

For the first part of Algorithm~\ref{alg:bcf},
we consider the framework of reduced-rank regression
\citep{reinsel1998} to estimate the matrix $\hat{M}_0$ and its rank
$\hat{q} \leq \min\{p, r\}$,
since $M_0 \in \R^{p \times r}$ is not
necessarily
full rank.
More precisely,
we adopt the rank selection criterion proposed by
\citet{bunea2011}.
For a fixed penalty $\lambda > 0$, they propose to estimate $M_0$ by
\begin{align*}%
	\hat{M}_0(\lambda) \coloneqq \argmin_{M} \ \left\{ \norm{\bfX - \bfZ M^\top }_F^2 + \lambda\ \rank(M)\right\}.
\end{align*}
To do so, they observe
\begin{align*}%
	\min_{M} \ \left\{ \norm{\bfX - \bfZ M^\top }_F^2 + \lambda\ \rank(M)  \right\}
	=
	\min_k \left\{ \min_{\substack{M,\ \rank(M) = k}} \left[ \norm{\bfX - \bfZ M^\top}_F^2 + \lambda k \right] \right\},
\end{align*}
which suggests that
one can estimate a rank-$k$ matrix
$\hat{M}_{0, k}(\lambda)$ for each
$k \in \{1, \dots, \min\{p, r\}\}$ and choose the optimal
$\hat{q}(\lambda) \coloneqq \argmin_k\|\bfX - \bfZ
	\hat{M}_{0,k}(\lambda)^\top\|_F^2 + \lambda k$.  As mentioned by
\citet{bunea2011}, $\hat{M}_{0, k}(\lambda)$ and $\hat{q}(\lambda)$
can be computed in closed form and efficiently. Moreover, one can
select the optimal
$\lambda^*$
by
cross-validation.
This results in the final
estimators
$\hat{M}_0 \coloneqq \hat{M}_{0,
	\hat{q}(\lambda^*)}(\lambda^*)$ and
$\hat{q}\coloneqq\hat{q}(\lambda^*)$.
\citet{bunea2011} show that their rank selection criterion consistently recovers the true rank $q$ of $M_0$ if $V$ is multivariate standard Gaussian and if the $q$-th largest singular value of the signal $M_0 Z$ is well separated from the smaller ones.
Based on these estimators $\hat{M}_0$ and $\hat{q}$, we then
estimate the control variables
$\hat{\bfV} \coloneqq \bfX - \bfZ \hat{M}_0^\top$ and a basis of
the null space by
$\hat{R} \coloneqq (r_1, \dots, r_{\hat{q}}) \in \R^{p \times (p -
		\hat{q})}$, where $r_1, \dots, r_{p - \hat{q}}$ are the
$p - \hat{q}$ left singular vectors of $\hat{M}_0$ associated to
zero singular values.

For the second part,
given the estimated control variables
$\hat\bfV$ and the basis of the
estimated
null space $\hat{R}$,
we then estimate the BCF. As shown in
Algorithm~\ref{alg:bcf}, the BCF is obtained by performing two
separate regressions
such that
\begin{itemize}
  \item[(i)] $(x,v)\mapsto \hat{f}(x) + \hat{\gamma}(v)$
		estimates
		$(x,v)\mapsto\Exp_{P_\train}[Y \mid X = x, V = v]$,
  \item[(ii)] $x\mapsto \hat{\delta}(\hat{R}^{\top}x)$
		estimates
		$x\mapsto\Exp_{P_\train}[\gamma_0(V) \mid R^\top X = R^\top x]$.
\end{itemize}

For the additive function in step (i), we
devise a practical estimator inspired by the alternating conditional
expectation (ACE) algorithm by \cite{breiman1985estimating}.  Unlike
\cite{breiman1985estimating}, here we do not assume full additivity of
the functions, but allow $\tilde{f}$ and $\tilde{\gamma}$ to belong to general
function classes $\mathcal{F}$
and $\mathcal{G}$, respectively.
The procedure estimates $\tilde{f}$
and $\tilde\gamma$ by alternating between the two regressions. Formally,
we assume we have two arbitrary nonparametric regression methods
that result in estimates $\hat{f}\in\mathcal{F}$ and
$\hat{\gamma}\in\mathcal{G}$. We then start by estimating
$\hat{\gamma}$ based on a regression of
${\bfY}-\frac{1}{n}\sum_{i=1}^nY_i$ on $\hat{\bfV}$ after which we
estimate $\hat{f}$ based on a regression of
${\bfY}-\hat{\gamma}(\hat{\bfV})$ on ${\bfX}$, and then iterate this
after updating $\bfY$
at each step
(see Algorithm~\ref{alg:controltwicing}).
We call the algorithm
ControlTwicing; the first part of the name alludes to the fact that we
deal with a regression problem in a control function setup.
The second part of the name refers to the twicing idea \citep[see][Chapter~16]{tukey1977exploratory} consisting of fitting an additive model over repeated iterations.
Once we estimate
$\hat{f}$ and $\hat{\gamma}$,
in step (ii), we again use a nonparametric regression procedure (generally the same as the one used in step (i) to estimate $\hat{f}$) to obtain an estimate $\hat\delta$.
More
specifically, we regress the pseudo response
$\hat{\gamma}(\hat{\bfV})$ on the pseudo covariates $\bfX \hat R$.
The final estimated BCF is then defined
for all $x \in \R^p$ by $\hat{f}_\star(x) \coloneqq \hat{f}(x) + \hat{\delta}(\hat{R}^{\top}x).$

In Proposition~\ref{prop:bcf-rates} in Appendix~\ref{sec:app_rates}, we show that when using sampling-splitting the overall convergence rate of the BCF estimator is at least as fast as the slower of the two rates from steps~(i) and~(ii). For simplicity, the result assumes the first-step quantities ($M_{0}$, $R$, $V$) are known.
\begin{algorithm}[t]
	\caption{ControlTwicing}
	\label{alg:controltwicing}
	\begin{algorithmic}[1]
		\Require data $(\bfX, \bfY, \hat{\bfV}) \in \R^{n \times (p + 1 + p)}$; nonparametric regressors $\hat{f}\in\mathcal{F}$ and $\hat{\gamma}\in\mathcal{G}$; max passes~$J$.

		\Ensure  $\hat{f} \in \mathcal{F}$, $\hat{\gamma} \in
			\mathcal{G}$.

		\State $\widetilde{\bfY} \gets {\bfY} - \bar{\bfY}$

		\For{$j\in\{1, \dots, J\}$}

		\State Estimate $\hat{\gamma}$ based on $\widetilde{\bfY}\sim \hat{\bfV}$

		\State $\widetilde{\bfY} \gets {\bfY} - \hat{\gamma}(\hat{\bfV})$

		\State Estimate $\hat{f}$ based on $\widetilde{\bfY}\sim {\bfX}$

		\State $\widetilde{\bfY} \gets {\bfY} - \hat{f}({\bfX})$

		\EndFor

	\end{algorithmic}
\end{algorithm}

\section{Numerical Experiments} \label{sec:num-res}

We now study the properties of the estimated BCF on simulated
data. In our first experiment, we analyze how well the BCF and the oracle IMP generalize to testing distributions.
In the second experiment, we consider how
well the reduced-rank regression estimates the matrix $M_0$.
The last experiment considers the California housing dataset \citep{pace1997sparse}
to evaluate the robustness of the BCF estimator to distributional shifts compared to the least squares estimator.
The code to reproduce the results can be found at \url{https://github.com/nicolagnecco/bcf-numerical-experiments}.

\subsection{Experiment 1: Predicting unseen interventions}\label{sec:experiment-1}
In the first experiment, we assess the predictive performance of the BCF estimator
compared to the
(oracle) IMP and the least squares (LS) estimator in a fixed SIMDG.
We measure the predictive performance by the mean squared error (MSE) between the response and the predicted values.
The BCF is estimated using a random forest
with 100 fully-grown regression trees for $\hat{f}$ and  $\hat{\delta}$
and
ordinary linear least squares for $\hat{\gamma}$. The number of passes in the ControlTwicing algorithm (see Algorithm~\ref{alg:controltwicing}) is set to $J = 2$.
The LS method
uses
a
random forest (with the same parameters) to regress ${\bfY}$ on
${\bfX}$.
The theoretical IMP corresponds to the population version of the BCF (see Corollary~\ref{cor:imp}) and can be computed in closed form.
As additional baselines, we consider the MSE of
predicting $Y$ with its unconditional mean (constant mean
estimator)
and with the true structural function.
We generate data from a
SIMDG $(f_0, M_0, \Lambda_0, \mathcal{Q}_0)$ with the
following specifications.  The function $f_0: \R^p \to \R$ is a
decision tree depending on
a subset of the predictors
(further details on how the tree is sampled can be found in Appendix~\ref{app:experiment-1}).
The matrix $M_0 \coloneqq A B^\top \in \R^{p \times r}$ is a rank-$q$
matrix where $A \in \R^{p \times q}$ and $B \in \R^{r \times q}$ are
orthonormal matrices sampled from the Haar measure, i.e., the uniform distribution over orthonormal matrices.
The singular
values of $M_0$ are $\sigma_1 = \dots = \sigma_q = 1$ and $\sigma_{q + 1} = \dots = \sigma_{\min\{p, r\}} = 0$.
The distribution $\Lambda_0$ over $\R^{p + 1}$ is a mean-zero Gaussian
with $\E[VV^\top] = I_p$, $\Exp[U^2] = c^2 + 0.1^2$ and
$\Exp[VU] = c \eta$, where $c > 0$ denotes the confounding strength
and $\eta \in \R^p$ is a vector sampled uniformly
on the unit sphere.
Here, we set the number of predictors to $p = 10$, the number of
exogenous variables to $r = 5$,
and the confounding strength to $c =
	2$. Finally, we define the set of distributions over the exogenous
variables by
$\mathcal{Q}_0 \coloneqq \{N(0, k^2 I_r)\mid k > 0\}$ and
define
$Q_\train \coloneqq N(0, I_r)$ and
$Q_\test^k \coloneqq N(0, k^2 I_r)$ for all $k \geq
	1$.
Therefore, $k$ specifies the perturbation strength
relative to the training distribution.

Figure~\ref{fig:sim1}
displays the MSE of the BCF estimator and the competing methods,
averaged over $50$
repetitions for different perturbation
strengths $k \in \{1, \dots, 10\}$.  For each
repetition, we generate random instances of $f_0$ and $M_0$
and then generate $n=1000$ i.i.d.\
observations
$(X_1,Y_1,Z_1),\ldots,(X_n,Y_n,Z_n)$ from the model
$(f_0, M_0, \Lambda_0, Q_{\train})$ on which we train each
method. For all $k\in\{1,\ldots,10\}$, we then generate $n=1000$
i.i.d.\ observations $(X_1,Y_1), \ldots, (X_n,Y_n)$ from the model
$(f_0, M_0, \Lambda_0, Q_{\test}^k)$, which we use to evaluate
MSE.
Each panel consists of a different
value of $q = \rank(M_0)$.
For all values of $q$,
the results demonstrate that the BCF estimator
indeed performs similarly to the theoretical IMP and its
performance remains approximately invariant even for large
perturbations.
Moreover, the BCF estimator outperforms the
true structural function since it
further uses all the signal in $\gamma_0(V)$ that is invariant to
shifts in the exogenous variable.
As the value of $q$ increases, for large perturbation strength $k$,
the gain in predictive performance
of BCF over LS becomes more pronounced. This
is because $q$ determines the dimension of the subspace in the
predictor space where perturbations occur. In the case where $q = p$,
the perturbations can occur in any direction in the
predictor space, and therefore the performance of
non-invariant methods deteriorate
significantly.
Moreover, for increasing values of $q$, the BCF estimator and the theoretical IMP converge to the structural function $f_0$.
This is
because the dimension $p - q$ of the invariant space $\ker(M_0^\top)$
decreases as $q$ increases; in the limit case when $q = p$, we have
that $\ker(M_0^\top)= \{0\}$ and therefore the BCF corresponds
to the structural function.
Finally, as the value of $q$ increases, when $k = 1$ and thus the training and testing distribution are the same, the LS estimator performs better than the BCF estimator and the theoretical IMP.
This behavior is expected because the LS estimator can always use all the
information in $X$ to predict $\gamma_0(V)$, while the BCF and IMP
can only use the information in the invariant space $R^\top X$ to predict $\gamma_0(V)$
and the dimension of $R^\top X$
decreases  for increasing $q$.

\begin{figure}[t]
	\centering
	\includegraphics[scale=0.85]{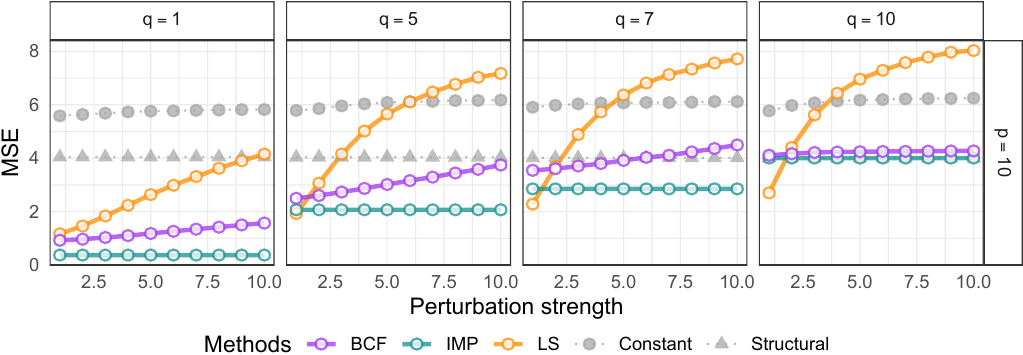}
	\caption{
		MSE between the response and the predicted values for increasing perturbation strength over different values of $q = \rank(M_0)$. 
		We fix the number of predictors to $p = 10$, the number of exogenous variables to $r = 5$,
		and the confounding strength to $c =
			2$.
		Each point is an average over $50$
		repetitions.
		The BCF estimator achieves a performance close to the theoretical
		optimally achievable error given by the IMP and is in particular
		better than the structural function. The LS baseline is clearly
		non-invariant and performs worse than the competing methods when the perturbation in the testing
		distribution is large.
	}
	\label{fig:sim1}
\end{figure}

\subsection{Experiment 2: Estimating $M_0$}
In the second experiment, we assess the performance of the rank
selection criterion to estimate a low-rank matrix $\hat{M}_0$ and its
left null space $\ker(M_0^{\top})$.  Given the
matrix of the observed predictors $\bfX$ and exogenous variables
$\bfZ$, we estimate a low-rank matrix
$\hat{M}_0 \in \R^{p \times r}$ that is  subsequently used to
compute a basis for $\ker(M_0^\top)$
and the control variables $\hat{\bfV}$.  The
hardness of estimating the
column and null spaces of a matrix depends (among other things) on its eigengap,
which
is defined as the size of the smallest non-zero singular value.
When the
eigengap of a matrix is close to zero, it is hard to disentangle its
column space from its left null space \citep[see][]{wainwright2019,
	cheng2021}
because eigenvectors
associated with small singular values are sensitive
to small perturbations of the
original matrix;
intuitively, small
estimation errors in $\hat{M}_0$ are amplified
in the estimation of the column and left null space.  In this
experiment, we sample a rank-$q$ matrix
$M_0 \coloneqq \tau A B^T \in \R^{p \times r}$, where
$A \in \R^{p \times q}$ and $B \in \R^{r \times q}$ are orthonormal
matrices sampled from the Haar measure.
The singular
values of $M_0$ are $\sigma_1, \dots, \sigma_q = \tau$ and
$\sigma_{q + 1}, \dots, \sigma_{\min\{p, r\}} = 0$, so that its
eigengap is $\tau$.
For this experiment, we only need
observations of $X$ and $Z$, which we generate by first
sampling
$Z \sim N(0, I_r)$ and
$V \sim N(0, I_p)$ independently and then
generating
the covariates according
to $X = M_0 Z + V$.

Figure~\ref{fig:sim2} displays the distance between the column space of the true matrix $M_0$ and the estimated matrix $\hat{M}_0$, defined as $||\Pi_{M_0} - \Pi_{\hat{M}_0}||_F^2$, averaged over $50$ repetitions for increasing values of the eigengap and sample size.
We estimate $\hat{M}_0$ using the rank selection criterion defined in Section~\ref{sec:bcf-est}.
We fix the true rank to be $q = 5$ and consider different combinations of number of
covariates $p$ and exogenous variables $r$ (which are shown in the
four panels).  As expected,
for larger values of the
eigengap $\tau$, the distance between the true and estimated linear subspaces converges to zero much faster, compared to smaller values of
$\tau$.
\begin{figure}[t]
	\centering
	\includegraphics{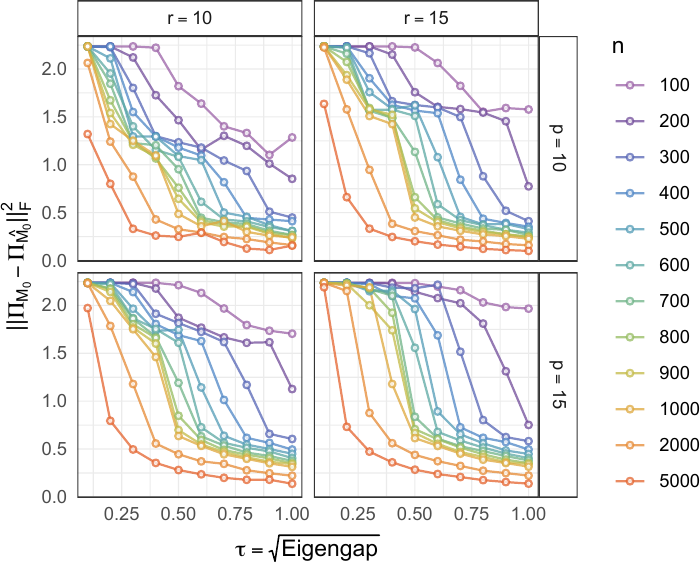}
	\caption{
		Average distance between the column space of the true matrix $M_0$ and estimated $\hat{M}_0$
		for increasing eigengap $\tau$ and sample size $n$ over different number of
		covariates $p$ and exogenous variables $r$.
		We fix
		$q =\rank(M_0) = 5$.
		Each point is an average over $50$
		repetitions.
		For large values of
		the eigengap $\tau$, the distance between the linear subspaces converges to zero much faster, compared to smaller values of $\tau$.
	}
	\label{fig:sim2}
\end{figure}

\subsection{California Housing Dataset}\label{sec:california-housing}
We consider the California housing dataset \citep{pace1997sparse} consisting of 20,640 observations derived from the 1990 U.S. census.
The unit of analysis is a block group, which is the smallest
geographical denomination for which the U.S. Census Bureau publishes
sample data. The primary aim
of our experiment
is to predict the median house value from
the following covariates: median income, median house age,
average number of rooms per household,
average number of bedrooms per household,
total population,
average number of household members, and
average annual temperature between 1991 and 2020.
We use latitude and longitude as exogenous variables.
The reason for excluding latitude and longitude from the set of predictors is that when extrapolating the model to new regions,
latitude and longitude do not help with the prediction unless assumptions are made about the function
class $\mathcal{F}$ (see Assumption~\ref{ass:ident}).
The temperature data are integrated from PRISM \citep{prism2020} and
are used as
a three-decade average.
While data from 1990 are unavailable,
we conjecture that this thirty-year range adequately reflects typical weather patterns correlated with housing prices.

To evaluate the ability of the boosted control function (BCF) estimator to handle distributional shifts, we create multiple training/testing geographical splits of the data. These splits, defined by the 35th parallel north and the 120th meridian west, partition the dataset into regions such as N/S, E/W, and SE/(N+SW).
For instance, SE/(N+SW) means that the models are trained on the SE region and evaluated on the union of the N and SW regions.
For each split: (i) we randomly select $80\%$ of the training split to train the methods; (ii) we evaluate the methods on the held-out $20\%$ of the training split and the entire testing split; (iii) we repeat~(i) and~(ii) 10 times with different random subsamples.

The BCF estimator is configured using extreme gradient boosting \citep{tianqi2016}. The learning rate for estimating $\hat{f}$, $\hat{\gamma}$, and $\hat{\delta}$ is set to $0.05$, and the number of passes in the ControlTwicing algorithm (see Algorithm~\ref{alg:controltwicing}) is fixed to $J = 10$.
As competing methods, we consider GroupDRO \citep{sagawa2019distributionally}, which estimates a predictive function that minimizes the worst-case risk across multiple training distributions, and Anchor Boosting \citep{londschien2025domain}, which extends the anchor regression methodology to the nonlinear setting using gradient-boosted trees.
For GroupDRO, we follow the original algorithm \citep{sagawa2019distributionally}, but use a lightweight neural network with a single hidden layer of 64 nodes. We form the training groups by splitting each training dataset into four equally sized parts based on latitude and longitude. The model is trained using the Adam optimizer \citep{kingma2014adam} 
with a learning rate $0.001$, mirror ascent step size 0.01, batch size 256, and 500 epochs.
For Anchor Boosting, we consider the original implementation and follow the recommended default settings for the tuning parameters \citep[][Section~4]{londschien2025domain}. We fit the method for several values of $\gamma \in (1, 5]$ and report the best-performing configuration ($\gamma=1.5$).

As baselines, we include the
classical control function (CF) estimator (corresponding to Lines~\ref{alg:line-1}--\ref{alg:line-3} of Algorithm~\ref{alg:bcf} and sharing the same settings as the BCF estimator), the
least squares (LS) estimator (using extreme gradient boosting with a learning rate of 0.05) and a constant mean estimator as a reference.

Figure~\ref{fig:housing-methods} presents the mean squared error (MSE) on the testing (left) and training (right) splits for the 
different methods.
Each point represents the average MSE across the 10 repetitions, and the shaded bands indicate the range of MSE values (smallest to largest) observed across these repetitions.
Compared to the LS and CF estimator,
the BCF estimator appears more robust to distributional shifts on certain test splits, such as SW/(N+SE), (N+SW)/SE, and N/S.
Compared to Anchor Boosting, the BCF generally achieves lower MSE across most test splits, except for the SW/(N+SE), where Anchor Boosting performs best.
Finally, we observe that GroupDRO and BCF achieve comparable performance, despite being fundamentally different in their training procedures, how they model distributional shifts, and their generalization guarantees.
On the training data, all methods perform similarly, except for the constant mean estimator.
\begin{figure}[t]
	\centering
	\includegraphics[scale=0.85]{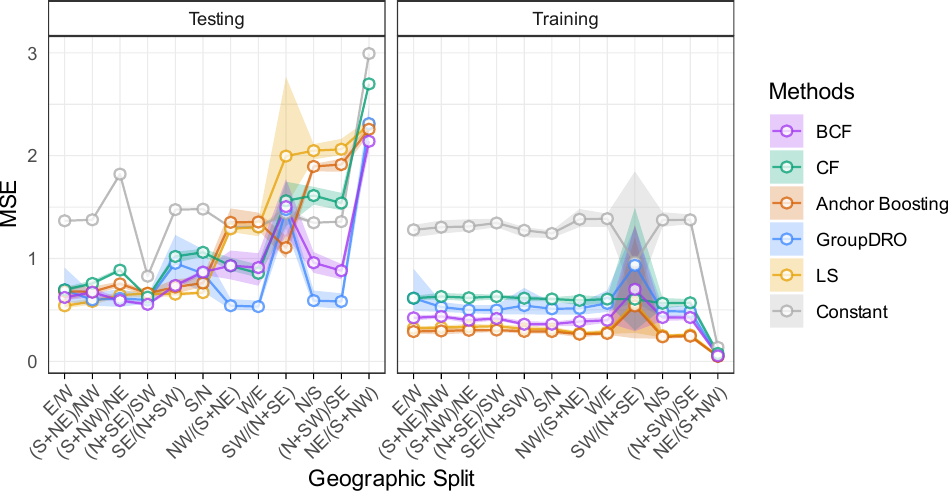}
	\caption{
		Mean squared error
		(MSE) for the BCF estimator and competing methods evaluated on the testing (left) and training (right) splits.
		The notation E/W means that region East was used for training, region West for testing.
		For each geographical split and each method, the points correspond to the average MSE over 10 repetitions, and the uncertainty bands indicate the range of MSE values over such repetitions.
	}
	\label{fig:housing-methods}
\end{figure}

\section{Conclusion and Future Work}\label{sec:conclusion}
This work studies the challenge of distribution generalization in the presence of unobserved confounding when distributional shifts are induced by exogenous variables.
We propose a strong notion of invariance that, unlike existing weaker notions, ensures distribution generalization when the structural function is nonlinear and possibly not identifiable.
We define the Boosted Control Function (BCF), a novel target of inference that satisfies our strong notion of invariance and is identifiable from the training data. The BCF is provably worst-case optimal against distributional shifts induced by the exogenous variable and aligns with the invariant most predictive (IMP) function.
Our theoretical analysis is built around the Simultaneous Equation Models for Distribution Generalization (SIMDGs),
a novel
framework that connects
machine learning with econometrics by describing data-generating processes under distributional shifts.
We demonstrate the practical effectiveness of our findings by introducing the ControlTwicing algorithm, which leverages nonparametric machine-learning techniques to learn the BCF.
Empirical evaluations on synthetic and real-world datasets indicate that our methodology outperforms traditional machine learning methods based on empirical risk minimization when generalizing to unseen distributional shifts.

Looking forward, several routes for future research emerge. One direction is to extend our strong invariance notion to settings where the outcome variable $Y$ influences some covariates $X$. Such scenarios, common in time-series and dynamic systems, have been explored in linear settings, but there is limited research on generalizing these findings to nonlinear models. Addressing this challenge could significantly broaden the applicability of invariance-based methods to more realistic and complex settings.
Another direction involves understanding how non-additive effects of exogenous variables $Z$ on $(X, Y)$ affect our notion of invariance and distribution generalization. Previous research has focused on unconfounded settings or empirically analyzed worst-case optimality guarantees. However, formulating a notion of invariance for confounded non-additive models that leads to provable generalization guarantees remains an open question.
From a theoretical perspective, it would be valuable to extend the analysis of the convergence rate of the BCF estimator to the case in which the first-step quantities are treated as random. This would require additional conditions on the nonparametric regression functions and could build on the literature of nonparametric regression with generated covariates \citep{mammen2012}.%

\section*{Acknowledgments}
We thank Jeffrey Glenn Adams, Alexander M. Christgau, Anton R. Lundborg, Leonard Henckel, Manuel Hentschel, Cesare Miglioli, Sorawit Saengkyongam, and Stanislav Volgushev for helpful discussions.
In particular, we thank Alexander M. Christgau for pointing out the trigonometric argument in the proof of Lemma~\ref{lem:l2}.
During parts of the project, JP was supported by a research grant (18968) from VILLUM FONDEN. SE was
supported by a research grant (186858) from the Swiss National Science
Foundation (SNSF). NP was supported by a research grant (0069071) from Novo Nordisk Fonden.
NG was supported by a research grant (210976) from the Swiss National Science
Foundation (SNSF).

\newpage
\appendix

\section{Additional Results on SIMDGs}
\subsection{Generating Distributions with SIMDGs}\label{sec:generative}
In this section, we shortly discuss formally how a SIMDG generates the
class of distributions $\mathcal{P}_0$ and provide the full technical
details on the notation used throughout this work.

We assume there is a fixed background probability space
$(\Omega, \mathcal{F}, \P)$. Define
$\mathcal{X}\coloneqq \R^{1+p+r+p+1}$ and let
$(\mathcal{X}, \mathcal{B}(\mathcal{X}))$ be a fixed measurable sample
space, where $\mathcal{B}(\mathcal{X})$ denotes the Borel sigma
algebra on $\mathcal{X}$. Let $(f_0, M_0, \Lambda_0, \mathcal{Q}_0)$
be a SIMDG as defined in Definition~\ref{def:ivg}. For all
$Q\in\mathcal{Q}_0$, the model $(f_0, M_0, \Lambda_0, Q)$ generates a
random variable $(U_Q, V_Q, X_Q, Y_Q, Z_Q)$ with distribution $P_Q$ on
the sample space $(\mathcal{X}, \mathcal{B}(\mathcal{X}))$ as follows:
\begin{itemize}
	\item[(1)] Let $((U_Q, V_Q), Z_Q): \Omega\to \R^{p+1} \times \R^r$ be a
		random variable with distribution $\Lambda_0 \otimes Q$.
	\item[(2)] Let $X_Q:\Omega\to \R^p$ be the random variable defined by
		$X_Q=M_0Z_Q+V_Q$.
	\item[(3)] Let $Y_Q:\Omega\to \R$ be the random variable defined by
		$Y=f_0(X_Q)+U_Q$.
	\item[(4)] Denote the distribution of $(U_Q, V_Q, X_Q, Y_Q, Z_Q)$ by $P_Q$.
\end{itemize}
For each $Q\in\mathcal{Q}_0$, we therefore get a random vector
$(U_Q, V_Q, X_Q, Y_Q, Z_Q)$ and a distribution $P_Q$ on
$(\mathcal{X}, \mathcal{B}(\mathcal{X}))$. To ease notation, we
suppress the $Q$ subscript in the notation. Moreover, to make the
dependence on the distribution clear we index probabilities and
expectations with the corresponding distribution $P$, i.e.,
$\mathbb{P}_P$ and $\E_{P}$.

The above procedure is similar to what has been suggested by
\citet{bongers2016structural}
with the difference that \citet{bongers2016structural}
consider solutions of structural equations which then are assumed to hold almost surely, not for every $\omega \in \Omega$ (this is particularly important when introducing cycles).

\subsection{{\modelname}s as Triangular Simultaneous Equation Models}\label{sec:sim_equations}

The SIMDG $(f_0, M_0, \Lambda_0, \mathcal{Q}_0)$ in
Definition~\ref{def:ivg} consists of a set of
SIMs
that are
closely related
to the
triangular simultaneous equation models considered by e.g.,
\citet{newey1999}. Formally, these are given by the simultaneous equations
\begin{equation}
	\label{eq:triangular_sem}
	\begin{split}
		Y &= g(X, Z) + U,\\
		X &= h(Z) + V,\\
	\end{split}
	\quad\text{with}\quad
	\Exp[U \mid V, Z] = \Exp[U \mid V], \quad
	\Exp[V\mid Z] = \Exp[V] = 0,
\end{equation}
where $(U, V, Z)$ are exogenous and $(X, Y)$ are endogenous.
SIMs are a tool to model identifiable causal effects from a
set of covariates $(X, Z)$ to a response $Y$. The structural
function $g$ in~\eqref{eq:triangular_sem} describes the average causal
effects that interventions on $(X, Z)$ have on $Y$. Generally, $g$
is assumed to be identifiable, that is, it is unique for any
distribution over $(X, Y, Z)$ satisfying
\eqref{eq:triangular_sem}. Unlike SCMs (discussed in
Appendix~\ref{sec:scm_equations}), SIMs are generally interpreted as
non-generative in the sense that they do not describe the full
causal structure. Instead, they only implicitly model parts of the
causal structure via the conditional mean assumptions on the
residuals $(U, V)$ and the separation of the covariates into
exogenous $Z$ and endogenous $X$ variables. As such, SIMs
naturally extend regression models to settings in which some
covariates (the endogenous ones) are confounded with the
response. Nevertheless, they can also be viewed -- as done here for
SIMDGs -- as generative models: first generate the exogenous
variables $(Z, U, V)$, then generate $X$ according to the reduced
form equation and finally generate $Y$ from the structural equation.

An attractive feature of these models is that they only describe the
relationship of interest as causal (i.e., the structural equation of
$Y$), while reducing the remaining causal relations into a reduced
form conditional relationship between exogenous and endogenous
variables that may be causally misspecified. Similarly, in the SIMs
used in SIMDGs \eqref{eq:sem}--\eqref{eq:sem-3} we do not assume that
the causal relations are correctly specified in the reduced form
equation. Furthermore, as our goal is not to model causal effects
but instead to model the invariant parts of the distributions in
$\mathcal{P}_0$, there are several further differences between the
SIMs used in SIMDGs and the commonly used ones in
\eqref{eq:triangular_sem}.

On the one hand,
the SIMs used in SIMDGs
relax \eqref{eq:triangular_sem} in two ways.
First, we do not assume
identifiability of the structural function $f_0$ (as it is often done for $g$ in~\eqref{eq:triangular_sem}) since we do not impose the necessary rank and order conditions of identifiability \citep{amemiya1985advanced},
i.e., $\rank(M_0) \leq p$ and $p \leq r$.
In particular, we do not assume that $\rank(M_0) = \min \{p, r\}$ and allow the number of covariates to exceed the number of exogenous variables, i.e., $p > r$.
Second, we allow the exogenous variable $Z$ to directly affect the response variable $Y$ (see Remark~\ref{rmk:projectability}) and therefore
$f_0$ does not need to coincide with the structural function from $X$ to $Y$.

On the other hand, SIMDGs are more restrictive than common SIMs,
as unlike
model~\eqref{eq:triangular_sem}, we assume $h$ is linear in $Z$, and we
replace the assumptions on the conditional expectations of the
exogenous variables $(U, V, Z)$ with the assumptions
in~\eqref{eq:sem-2}.
The independence assumption $Z\ind (U, V)$ directly implies the two
conditional expectation constraints in \eqref{eq:triangular_sem}.  We
make this stronger assumption to allow for the factorization
$(U,V,Z) \sim \Lambda_0 \otimes Q$. This ensures that the marginal
distribution $Q$ can be changed without affecting the distribution~$\Lambda_0$.

\subsection{{\modelname}s as Structural Causal Models}\label{sec:scm_equations}

The set of distributions $\mathcal{P}_0$ defined in~\eqref{eq:mathcalP} can also be written as a set of distributions induced by a class of SCMs.
To see this, we consider the
SCM over the variables
$(X, Y, Z, H)\in\R^{p + 1 + r + q}$ given by
\begin{align}
	\label{eq:anchor-mod}
	\begin{split}
		Z &\ := \epsi_Z\\
		H &\ := B_{HZ} Z + \epsi_H \\
		X &\ := B_{XZ} Z + B_{XH} H + \epsi_X\\
		Y &\ := g_0(X)+ B_{YZ} Z + B_{YH} H + \epsi_Y,
	\end{split}
\end{align}
where $\epsi_Z, \epsi_H, \epsi_X, \epsi_Y$ are jointly independent
noise terms. Here, $(X, Y, Z)$ are observed and $H$ is
unobserved. Moreover, $g_0: \R^p \to \R$ denotes the causal function
from $X$ to $Y$ and $B_{(\boldsymbol{\cdot\cdot})}$ are linear maps of
suitable sizes. The graph induced by~\eqref{eq:anchor-mod} is given in Figure~\ref{fig:graphs}
(left). The SCM in \eqref{eq:anchor-mod} corresponds to the nonlinear
anchor regression setup with a nonlinear causal function
\citep{rothenhausler2021anchor, buhlmann2020invariance}.

We now show how to rewrite the SCMs~\eqref{eq:anchor-mod} as a model generated by a SIMDG $(f_0, M_0, \Lambda_0, \mathcal{Q}_Z)$, where $\mathcal{Q}_Z$ denotes the set of distributions induced by interventions
on $Z$.
To this end,
define the matrix $M_0:= B_{XZ} + B_{XH}B_{HZ}$, the vector
$\beta_0^\top:=B_{YZ}$,
and assume that the projectability condition 	\citep{rothenhausler2021anchor} holds, i.e., $B_{YZ}^\top\in \col(M_0^\top)$.
Define
the two random variables
\begin{align*}
	{V}:= B_{XH} \epsi_H + \epsi_X
	\quad\text{and}\quad
	{U}:= B_{YH}\epsi_H + \epsi_Y, %
\end{align*}
and note that  $Z \ind ({U},{V})$ since
$\epsilon_Z$ is independent of
$(\epsilon_H, \epsilon_X, \epsilon_Y)$.
Moreover,
substituting the structural equations \eqref{eq:anchor-mod} into each other yields
the reduced SCM
\begin{align}
	\label{eq:reduced-anchor-mod}
	\begin{split}
		Z & := \epsilon_Z\\
		X & := M_0 Z + {V}  \\
	  Y & :=  \left( g_0(X) + \beta_0^\top M_0^{\dagger} X \right) + \left({U} - \beta_0^\top M_0^{\dagger} V \right),
	\end{split}
\end{align}
where $U$ and $V$ are not necessarily independent of each other.
Furthermore, denote by $\Lambda_0$ the distribution of the random vector $( U - \beta_0^\top M_0^{\dagger} V , V)$.
The graph induced by~\eqref{eq:reduced-anchor-mod} is given in Figure~\ref{fig:graphs} (right), where bi-directed edges indicate dependence.
By
construction the induced observational distribution over $(X, Y)$ is the
same as in \eqref{eq:anchor-mod}. Moreover, since $\epsilon_Z$ is
independent of $(U, V)$ also the interventional
distributions over $(X, Y)$ are the same as in \eqref{eq:anchor-mod}
for any intervention on $Z$.
Hence, we get that $(g_0 + \beta_0^\top M_0^\dagger, M_0, \Lambda_0, \mathcal{Q}_Z)$ is a SIMDG satisfying Definition~\ref{def:ivg}.
The reduced SCM
\eqref{eq:reduced-anchor-mod} and the original SCM
\eqref{eq:anchor-mod}
only induce the same intervention distribution on $(X, Y)$ for
interventions on $Z$ but generally not for interventions on other variables.
\begin{figure}
	\centering
	\begin{tikzpicture}
		\node[nodeObs]  (x1) {$X$};
		\node[nodeConf] (h) [above right of = x1] {$H$};
		\node[nodeObs]  (z) [above of = h] {$Z$};
		\node[nodeObs]   (y) [below right of = h] {$Y$};
		\draw[edgeObs, ->] (z) to (h);
		\draw[edgeObs, ->, bend right = 5] (z) to (x1);
		\draw[edgeObs, ->, bend left = 5] (z) to (y);
		\draw[edgeObs, ->] (x1) to node[auto, swap] {} (y);
		\draw[edgeObs, ->] (h) to (x1);
		\draw[edgeObs, ->] (h) to (y);
	\end{tikzpicture}
	\hspace{1cm}
	\begin{tikzpicture}
		\node[nodeObs]   (x1) {$X$};
		\node (h) [above right of = x1] {};
		\node (e) [above left of = x1] {};
		\node[nodeConf] (u) [above of = y] {$U$};
		\node[nodeConf] (v) [above of = x1] {$V$};
		\node[nodeObs]  (z) [left of = v] {$Z$};
		\node[nodeObs]   (y) [below right of = h] {$Y$};
		\draw[edgeObs, ->] (z) to (x1);
		\draw[edgeObs, ->](v) to (y);
		\draw[edgeObs, ->] (x1) to node[auto, swap] {} (y);
		\draw[edgeObs, ->] (v) to (x1);
		\draw[edgeObs, ->] (u) to (y);
		\draw[edgeObs, <->] (u) to (v);
	\end{tikzpicture}
	\caption{(left) Graph corresponding to the SCM defined in
		\eqref{eq:anchor-mod}. (right) Graph corresponding to the SCM
		defined in \eqref{eq:reduced-anchor-mod}. Both models induce the
		same observational distribution and the same interventional
		distribution for interventions on $Z$.}
	\label{fig:graphs}
\end{figure}

\section{Proofs}

\subsection{Proof of Proposition~\ref{prop:inv-implies-ind}}
\label{proof:inv-implies-ind}

\begin{proof}
	Recall that $P_{\train} \in \mathcal{P}_0$ denotes the distribution
	of $(U, V, X, Y, Z)$ induced by $\Lambda_0 \otimes Q_{\train}$, as
	detailed also in Appendix~\ref{sec:generative}. Due to this generative model we
	can express both $X$ and $Y$ as functions of $U$, $V$ and $Z$.
	Moreover, for all measurable $A \subseteq \R$ it holds that
	\begin{align*} %
		P_{\train}^{Y - f(X)}(A) \coloneqq \int \mathds{1}\left\{
		Y(u, v, z) - f\left(X(v, z)\right) \in A\right\} \d\Lambda_0(u, v)\d Q_{\train}(z).
	\end{align*}
	Moreover, for all fixed $z_0 \in \R^r$ and all measurable sets $A \subseteq \R$, define
	\begin{align}\label{eq:proof_def1}
		P_{z_0}^{Y - f(X)}(A) \coloneqq \int \mathds{1}\left\{
		Y(u, v, z_0) - f\left(X(v, z_0)\right) \in A\right\} \d\Lambda_0(u, v).
	\end{align}
	We first show that $f \in \mathcal{I}_0$ implies $Y - f(X) \ind Z$ under $P_{\train}$. Let $A \subseteq \R$ and $B \subseteq \R^r$ be measurable sets. Then,
	\begin{align*}
		P_{\train}^{Y - f(X), Z}\left(A \times B\right)
		= & 
    \int \1\left\{ Y(u, v, z) - f(X(v, z)) \in A, z \in B \right\}
      \d\Lambda_0(u, v)\ \d Q_{\train}\left(z\right)\\
		= &
    \int_B 
    \left[
        \int  \1\left\{ Y(u, v, z) - f(X(v, z)) \in A\right\} \d\Lambda_0(u, v) 
    \right] \d Q_{\train}\left(z\right)\\
		\stackrel{(i)}{=} &
    \int_B P_{z}^{Y - f(X)}(A)\ \d Q_{\train}\left(z\right)
    \stackrel{(ii)}{=}
    \int_B P^{Y - f(X)}_{\train} \left(A\right)\,
      \d Q_{\train}\left(z\right)\\
		= &\
    P^{Y - f(X)}_{\train} \left(A\right) \int_B \, \d Q_{\train}\left(z\right)
		= 
    P^{Y - f(X)}_{\train} \left(A\right) Q_{\train}\left(B\right),
	\end{align*}
  where $(i)$ holds by equation~\eqref{eq:proof_def1}, and $(ii)$ holds by invariance
  and $\delta_z\in\mathcal{Q}_0$.

	We now show that $Y - f(X) \ind Z$ under
	$P_{\train}$ implies $f \in \mathcal{I}_0$ if Assumption~\ref{ass:ident0} holds.
	Fix measurable sets $A \subseteq \R$ and $B \subseteq
		\R^r$.
	Then, with the same argument as above, we get
	\begin{align}\label{eq:cond-prob-proof}
		\begin{split}
			P_{\train}^{Y - f(X), Z}(A \times B)
			=  & \int_B P_{z}^{Y - f(X)}(A)\ \d Q_{\train}\left(z\right).
		\end{split}
	\end{align}
	Moreover, since $Y - f(X) \ind Z$ under
	$P_{\train}$, it holds that
	\begin{align}\label{eq:ind-proof}
		P_{\train}^{Y - f(X), Z}\left(A \times B\right) = P_{\train}^{Y - f(X)}(A)\ Q_{\train}(B).
	\end{align}
	From~\eqref{eq:cond-prob-proof} and~\eqref{eq:ind-proof}
	it holds that
	\begin{align*}
		\int_B P_z^{Y - f(X)}(A) - P_{\train}^{Y - f(X)}(A)\ \d Q_{\train}(z) = 0.
	\end{align*}
	Since $B$ is arbitrary, it holds
	$Q_{\train}$-a.s. that
	\begin{align}\label{eq:proof-as}
		P_{z}^{Y - f(X)}(A) = P_{\train}^{Y - f(X)}(A).
	\end{align}
	Now, fix a distribution $Q_* \in \mathcal{Q}_0$ and consider $P_* \in \mathcal{P}_0$ induced by $\Lambda_0 \otimes Q_*$.
	From Assumption~\ref{ass:ident0}, it holds that $P_* \ll P_{\train}$, which implies that $Q_* \ll Q_{\train}$:
	For any measurable set $C \subseteq \R^r$, it holds that
	$Q_{\train}(C) = 0 \implies P_{\train}(\mathbb{R}^{2p + 2} \times C) = 0
		\implies P_*(\mathbb{R}^{2p+2} \times C) = 0
		\implies Q_*(C) = 0$.
	Thus, equation~\eqref{eq:proof-as} holds $Q_*$-a.s., too.
	Therefore,
	\begin{align*}
		P_*^{Y - f(X)}\left(A \right)
		= & \int P_z^{Y - f(X)}(A)\ \d Q_*(z)
		= \int P_{\train}^{Y - f(X)}(A)\ \d Q_*(z)
		= P_{\train}^{Y - f(X)}(A).
	\end{align*}
	Since $A \subseteq \R$ was arbitrary, it follows that $P_{\train}^{Y - f(X)} = P_*^{Y - f(X)}$.
	Since $P_* \in \mathcal{P}_0$ was arbitrary, it follows that $f \in \mathcal{I}$.

	If Assumption~\ref{ass:ident0} is not satisfied, Example~\ref{ex:inv-stronger-ind-2} shows that $\mathcal{H}_1 \not\subseteq \mathcal{I}_0$ even under Assumption~\ref{ass:ident}.
\end{proof}

\subsection{Proof of Proposition~\ref{prop:ident-cf}}\label{proof:ident-cf}

\begin{proof}
	All equalities in this proof  are meant to hold $P_{\train}$-a.s.

	\textbf{(``\emph{only if}'')}
	Suppose that $f_\cf(X)$ is identifiable
	and that $h(X) + g(V) = 0$.
	Fix $f(X) \coloneqq f_0(X) + h(X)$ and $\gamma(V) \coloneqq \gamma_0(V) + g(V)$.
	Clearly $f(X) + \gamma(V) = f_0(X) + \gamma_0(V)$, and so, see \eqref{eq:contr-fun},
	\begin{align}\label{eq:s1}
		\Exp_{P_{\train}}[Y \mid X, V] = f(X) + \gamma(V).
	\end{align}
	We first consider the case $q=p$. Then, by
	Definition~\ref{def:control_estimator}, the \cfname\ is
	$f_\cf = f_0$ and since
	$f_\cf$ is identifiable (see Definition~\ref{def:ident-cf})
	\eqref{eq:s1} implies that
	$f_\cf(X)=f(X)+\Exp_{P_{\train}}[\gamma(V)]$. Hence,
	$h(X)=f(X)-f_0(X)=f_\cf(X)+\Exp_{P_{\train}}[\gamma(V)]-f_\cf(X)=\Exp_{P_{\train}}[\gamma(V)]$
	and so implication~\eqref{eq:ident-cf} holds.

	Next, consider the case $q<p$. Then, again by
	Definition~\ref{def:control_estimator}, the \cfname\ is defined for
	$P_{\train}$ all $x\in\R^p$ by
	$f_\cf(x) \coloneqq f_0(x) + \Exp_{P_{\train}}[\gamma_0(V) \mid
		R^\top X=R^{\top}x]$ and since $f_\cf$ is identifiable \eqref{eq:s1}
	implies that
	$f_\cf(X) = f(X) + \Exp_{P_{\train}}[\gamma(V) \mid R^\top
			X]$. Therefore
	\begin{align*}
		h(X) = f(X) - f_0(X) = \E_{P_{\train}}[\gamma_0(V) - \gamma(V) \mid R^\top
			X] \eqqcolon \delta(R^\top X),
	\end{align*}
	and so implication~\eqref{eq:ident-cf} holds.

	\textbf{(``\emph{if}'')} Suppose that implication~\eqref{eq:ident-cf} holds and that $\Exp_{P_{\train}}[Y \mid X, V] = f(X) + \gamma(V)$ for arbitrary fixed functions $f, \gamma \in \mathcal{F}$.
	Since $\Exp_{P_{\train}}[Y \mid X, V] = f_0(V) + \gamma_0(V)$ and since the conditional expectation is unique $P_{\train}$-a.s., it follows that
	\begin{align}\label{eq:s2}
		f(X) + \gamma(V) = f_0(X) + \gamma_0(V).
	\end{align}
	Define $h(X) \coloneqq f(X) - f_0(X)$ and
	$g(V) \coloneqq \gamma(V) - \gamma_0(V)$. From~\eqref{eq:s2} it follows that $h(X) + g(V) = 0$.

	We first consider the case $q=p$. In this case,
	implication~\eqref{eq:ident-cf} implies that
	$h(X)=c$. Hence,
	$g(V) = -c$ and
	\begin{equation*}
		f_\cf(X) = f_0(X) +
		\Exp_{P_{\train}}[\gamma_0(V)]
		= f(X) - c +
		\Exp_{P_{\train}}[\gamma(V)] + c,
	\end{equation*}
	which implies that $f_{\cf}$ is
	identifiable.

	Next, we consider the case $q<p$. In this case, the
	implication~\ref{eq:ident-cf} implies that
	there exists a function
	$\delta: \R^{p - q} \to \R$ such that
	$h(X) = \delta(R^\top X)$.  Since $h(X) + g(V) = 0$ and
	$h(X) = \delta(R^\top X)$, it follows that $g(V) =-\delta(R^\top X)$.
	Thus, by the way we defined $h$ and
	$g$ and the fact that $\delta(R^\top X)$ is $R^\top X$-measurable
	it follows
	\begin{align*}
		f(X) + \Exp_{P_{\train}}[\gamma(V) \mid R^\top X]
		= & \ f_0(X) + h(X) + \Exp_{P_{\train}}[\gamma_0(V) + g(V) \mid R^\top X]                         \\
		= & \ f_0(X) + \delta(R^\top X) + \Exp_{P_{\train}}[\gamma_0(V) - \delta(R^\top X) \mid R^\top X] \\
		= & \ f_0(X) + \Exp_{P_{\train}}[\gamma_0(V) \mid R^\top X] = f_\cf(X).
	\end{align*}
	Thus, $f_\cf$ identifiable.
\end{proof}

\subsection{Proof of Proposition~\ref{prop:identification}}
\label{proof:identification}

\begin{proof}
	Unless otherwise stated, all equalities involving random variables hold $P_{\train}$-a.s.

	First, Setting~\ref{ass:train-test} ensures that $M_0$ is identified by
	\begin{align*}
		M_0 = \Exp_{P_{\train}}[XZ^\top] \{\Exp_{P_{\train}}[ ZZ^\top ] \}^{-1},
	\end{align*}
	since $\Exp_{P_{\train}}[ ZZ^\top ] \succ 0$.  Since $X$ and
	$Z$ are observed and $M_0$ is identified, the control variables
	$V=X-M_0Z$ are also identified.

	Suppose now that Assumption~\ref{ass:linear-cf} holds and
	let $f:\R^p\rightarrow\R$ be measurable and
	$\gamma\in\R^p$ such
	that $\Exp_{P_{\train}}[Y \mid X, V] = f(X) + \gamma^\top V$.
	Since the conditional expectation is unique $P_{\train}$-a.s., it holds that $f(X) + \gamma^\top V = f_0(X) + \gamma_0^\top V$. Define
	$h: x \mapsto f(x) - f_0(x)$
	so that
	\begin{align*}
		h(X) = (\gamma_0 - \gamma)^\top V = (\gamma_0 - \gamma)^\top (X - M_0 Z).
	\end{align*}

	Define $\tilde{\mathcal{Z}} \coloneqq \{z_1, \tilde{z}_1,
		\ldots,z_q,\tilde{z}_q\} \subseteq \mathcal{Z}$,
	$A \coloneqq \{(v, z) \in \R^{p + r} \mid h(v + M_0 z) - (\gamma_0 - \gamma)^\top v = 0\}$ and $A_z \coloneqq \{x \in \R^p \mid h(x) - (\gamma_0 - \gamma)^\top(x - M_0z) = 0\}$ for all $z \in \mathcal{Z}$.
	First, note that $P^{V, Z}_{\train}(A) = 1$.
	Moreover, since
	\begin{align*}
		\sum_{z \in \mathcal{Z}} P^{V + M_0 z}_{\train}(A_z) P^Z_{\train}(\{z\})
		=
		\sum_{z \in \mathcal{Z}} P^{V}_{\train}(A_z - M_0z) P^Z_{\train}(\{z\})
		= P^{V, Z}_{\train}(A) = 1,
	\end{align*}
	and since $P^Z(\{z\}) > 0$ for all $z \in \mathcal{Z}$, it follows that $P^{V + M_0 z}(A_z) = 1$ for all $z \in \mathcal{Z}$.

	Using Assumption~\ref{ass:linear-cf}, fix an arbitrary $j \in \{1, \dots, q\}$ and let $z_j, \tilde{z}_j \in \tilde{\mathcal{Z}}$.
	Denote by $S_j$ and $\tilde{S}_j$ the support of $V + M_0 z_j$ and $V + M_0 \tilde{z}_j$, respectively
	and denote by $S_V$ the support of $V$.
	Since the random variables $V + M_0 z_j$ and $V + M_0 \tilde{z}_j$ are
	not mutually singular, it holds for all Borel sets $E \subseteq \R^p$  that
	\begin{align}\label{eq:non-sing}
		\left( P^{V + M_0 z_j}_{\train}(E) = 1 \implies P^{V + M_0 \tilde{z}_j}_{\train}(E) > 0 \right).
	\end{align}

	We now show that the set
	\begin{align}\label{eq:subset-special}
		\begin{split}
			A^*_j
			\coloneqq
			\left\{x \in \R^p \mid \exists\ v, \tilde{v} \in
			S_V
			\text{ s.t. } x = M_0 z_j + v = M_0 \tilde{z}_j + \tilde{v} \text{ and } x \in A_{z_j} \cap A_{\tilde{z}_j} \right\} \neq \varnothing.
		\end{split}
	\end{align}
	Since $P_{\train}^{V + M_0 z_j}(A_{{z}_j}) = 1$ and $P_{\train}^{V + M_0 z_j}(S_j) = 1$ it follows that $P_{\train}^{V + M_0 z_j}(A_{{z}_j} \cap S_j) = 1$. Therefore, from~\eqref{eq:non-sing}, $P_{\train}^{V + M_0 \tilde{z}_j}(A_{{z}_j}\cap S_j) > 0$.
	By symmetry, it holds that $P_{\train}^{V + M_0 \tilde{z}_j}(A_{\tilde{z}_j} \cap \tilde{S}_j) = 1$ and $P_{\train}^{V + M_0 {z}_j}(A_{\tilde{z}_j} \cap \tilde{S}_j) > 0$.
	Fix the set $B_j \coloneqq S_j \cap \tilde{S}_j \cap A_{z_j} \cap A_{\tilde{z}_j}$ and notice that $P_{\train}^{V + M_0 {z}_j}(B_j) > 0$ and $P_{\train}^{V + M_0 \tilde{z}_j}(B_j) > 0$, which implies that $B_j \neq \varnothing$.
	We now show that $B_j \subseteq A_j^*$.

	Fix $x \in B_j$ and note that $x \in S_j \cap \tilde{S}_j \cap A_{z_j} \cap A_{\tilde{z}_j}$.
	We want to show that there exist $v, \tilde{v} \in S_V$ such that $x = M_0 z_j + v$ and $x = M_0 \tilde{z}_j + \tilde{v}$.
	By definition of the support, $v \in S_V$ if and only if all open neighborhoods of $v$, $N_v \subseteq \R^p$, have positive probability, i.e., $P^V(N_v) > 0$.
	Fix $v = x - M_0 z_j$, and let $N_v$ be a neighborhood of $v$.
	$P^V(N_v) = P^{V + M_0z_j}(N_v + M_0 z_j)= P^{V + M_0
				z_j}(N_x)$, where $N_x\coloneqq N_v+M_0 z_j \subseteq \R^p$
	is an open neighborhood of $x$.
	Since $x \in S_j$, by definition of the support,
	it holds
	that
	$0<P^{V + M_0 z_j}(N_x) = P^V(N_v)$. Hence, since $N_v$ was arbitrary, we have
	shown
	that $v = x - M_0 z_j \in S_V$.
	By the same argument, there exists $\tilde{v} \in S_V$ such that $x = M_0 \tilde{z}_j + \tilde{v}$.
	Therefore,
	by the definition
	of $A_j^*$, it follows that $x \in A_j^*$, and thus $B_j \subseteq A_j^*$
	and therefore~\eqref{eq:subset-special} holds.

	For
	all $j\in\{1,\ldots,q\}$
	fix $x \in A_j^{*}$, then by the definition of $A_j^*$ it
	holds that
	\begin{align*}
		(\gamma_0 - \gamma)^\top(x - M_0 z_j) = h(x) = (\gamma_0 - \gamma)^\top(x - M_0 \tilde{z}_j),
	\end{align*}
	which implies $(\gamma_0 - \gamma)^\top M_0 (z_j - \tilde{z}_j) = 0$.
	Furthermore, from Assumption~\ref{ass:linear-cf}, we have
	$\vspan(\{M_0(z_j-\tilde{z}_j)\mid j\in\{1,...,q\}\})=\col(M_0)$,
	which
	implies that $\gamma_0 - \gamma \in \ker(M_0^\top)$.

	First consider the case $q=p$. Since
	$\gamma_0 - \gamma \in \ker(M_0^\top)$ it holds that
	$\gamma_0 - \gamma = 0$, thus
	$h(X)=0$
	and
	hence $f(X)=f_0(X)=f_\cf(X)$ which implies that $f_\cf$ is
	identifiable from $P_{\train}$. Next, consider the case $q<p$. Then,
	$\gamma_0 - \gamma \in \ker(M_0^\top)$ implies that
	\begin{align*}
		h(X) = (\gamma_0 - \gamma)^\top V  = (\gamma_0 - \gamma)^\top X.
	\end{align*}
	Since $(\gamma_0 - \gamma)^\top V = (\gamma_0 - \gamma)^\top X$ is $R^\top X$-measurable, it follows that
	\begin{align*}
		f(X) + \Exp_{P_{\train}}[\gamma^\top V \mid R^\top X]
		= & \ f_0(X) + h(X) +  \Exp_{P_{\train}}[\gamma^{\top}V \mid R^\top X]       \\
		= & \ f_0(X) + (\gamma_0 - \gamma)^\top X
		+\Exp_{P_{\train}}[\gamma^\top V \mid R^\top X]                              \\
		= & \ f_0(X)
		+\Exp_{P_{\train}}[\gamma^\top V + (\gamma_0 - \gamma)^\top V \mid R^\top X] \\
		= & \ f_\cf(X),
	\end{align*}
	and therefore $f_\cf$ is identifiable
	from $P_{\train}$.

	Suppose now that Assumption~\ref{ass:diffble-cf} holds and
	$f, \gamma:\R^p\rightarrow\R$ are differentiable functions
	such that $\Exp_{P_{\train}}[Y \mid X, V] = f(X) + \gamma(V)$.
	We adapt the proof of \citep[][Theorem~2.3]{newey1999}.  Denote
	the support of $(V, Z)$ and its interior by
	$\supp(P_\train^{V, Z})$ and $\supp(P_{\train}^{V, Z})^{\circ}$,
	respectively.  Since the conditional expectation is unique
	$P_{\train}$-a.s., it holds that
	$f(X) + \gamma(V) = f_0(X) + \gamma_0(V)$. Define
	$h: x \mapsto f(x) - f_0(x)$ and $g: v \mapsto \gamma(v) - \gamma_0(v)$
	so that
	\begin{align}\label{eq:ident2}
		h(X) + g(V) = h(M_0Z + V) + g(V) = 0
	\end{align}
	where we used the fact that $X = M_0Z + V$.
	We now argue that~\eqref{eq:ident2} holds for all $(v, z) \in \supp(P_\train^{V, Z})^\circ$.
	Define the set $A \coloneqq \{(v, z) \mid h(M_0 z + v) + g(v) = 0\}$, and note that $P_\train^{V, Z}(A) = 1$. By Assumption~\ref{ass:diffble-cf}, the boundary of $\supp(P_\train^{V, Z})$ has probability zero, and therefore $P_\train^{V, Z}(\supp(P_\train^{V, Z})^\circ ) = 1$. Therefore, $P_\train^{V, Z}(\supp(P_\train^{V, Z})^\circ \cap A) = 1$, i.e.,
	equality~\eqref{eq:ident2} holds for all $(v, z) \in \supp(P_\train^{V, Z})^\circ$.

	Since $h$ and $g$ are differentiable, differentiating~\eqref{eq:ident2} with respect to $z$ for all $(v, z) \in \supp(P_\train^{V, Z})^\circ$ yields
	\begin{align}\label{eq:dvtv}
		M_0^{\top}\nabla h(M_0z + v) = 0,\ \text{for all}\ (v, z) \in \supp(P_{\train}^{V, Z})^{\circ},
	\end{align}
	where $\nabla h: \R^p \to \R^p$ is the gradient of $h$.

	Define  $S_X \coloneqq  \supp(P_\train^X)$, $S_V\coloneqq \supp(P_\train^V)$ and $S_Z \coloneqq \supp(P_\train^Z)$ and note that $S_V^{\circ} $ and $S_Z^{\circ}$ are convex
	since $(V, Z)$ are independent under $P_\train$
	and since
	$P_{\train}^{V, Z}$ has convex support by assumption
	(to see this, use
	$\supp(P_\train^{V,
			Z})^\circ = S_V^\circ \times S_Z^\circ$
	and that the projection of a convex set is convex).
	Furthermore, it holds that
	\begin{equation}
		\label{eq:decomposition_of_X}
		S_X^\circ\subseteq M_0S_Z^\circ+S_V^\circ.
	\end{equation}
	To see this, assume that $S_X^\circ\not\subseteq M_0S_Z^\circ+S_V^\circ$.
	Therefore, there exists a set $A \subseteq (M_0S_Z^\circ + S_V^\circ)^c$ such that $A \neq \varnothing$ and $A$ is open (since $S_X^\circ$ is open).
	Fix $x \in A$. Since $A$ is open and since $A \subseteq S_X^\circ$, using the definitions of open set and support, there exists an open neighborhood $N_x \subseteq A$ such that $x \in N_x$ and $P_\train^X(N_x) > 0$.
	Moreover, notice that
	\begin{align*}
		P_\train^X(M_0 S_Z^\circ + S_V^\circ)
		= & \
		\int_{z \in S_Z^\circ} P_\train^{V + M_0 z}(S_V^\circ + M_0 z) \d P_\train^Z(z)
		=
		\int_{z \in S_Z^\circ} P_\train^{V}(S_V^\circ) \d P_\train^Z(z)
		= 1.
	\end{align*}
	Therefore, since $N_x \subseteq A \subseteq(M_0S_Z^\circ + S_V^\circ)^c$ it holds that $P_\train^X(N_x) \leq P_\train^X((M_0S_Z^\circ + S_V^\circ)^c) = 0$, which is a contradiction.

	We now show (by contradiction) that for all $v \in
		S_V^\circ$ the function
	$z\mapsto
		h(M_0z+v)\in\R$
	is constant on $S_Z^\circ$.
	Fix $v_0 \in S_V^\circ$ and suppose
	there exist $\bar{z}, \tilde{z} \in S_Z^\circ$ such that $h(M_0\bar{z} + v_0) \neq h(M_0\tilde{z} + v_0)$.
	Define the function $\ell:[0, 1] \to \R$ for all $t \in [0, 1]$ by
	\begin{align*}
		\ell(t) \coloneqq h\left( M_0\left\{ \bar{z} + t(\tilde{z} - \bar{z})\right\} + v_0  \right).
	\end{align*}

	Notice that $\ell(0) = h(M_0\bar{z} + v_0) \neq h(M_0 \tilde{z} + v_0) = \ell(1)$.
	Therefore, using convexity of the support
	$S_Z^\circ$,
	by the mean value theorem there exists a $c \in (0, 1)$ such that $\ell^{'}(c) = \ell(1) - \ell(0) \neq 0$.
	By the definition of the function $\ell$, the chain rule, and~\eqref{eq:dvtv}, it follows that
	\begin{align*}
		\ell^{'}(c) = (\bar{z} - \tilde{z})^{\top}M_0^{\top}\nabla h\left( M_0\left\{ \bar{z} + c(\tilde{z} - \bar{z})\right\} + v_0  \right) = 0,
	\end{align*}
	which is a contradiction. Therefore, for all $v \in
		S_V^\circ$ the function
	$z\mapsto
		h(M_0z+v)\in\R$
	is constant on $S_Z$, that is, for all $v \in S_V$ and for all $z_1, z_2 \in S_Z$ it  holds that
	\begin{align}\label{eq:z-const}
		h(M_0z_1 + v) = h(M_0z_2 + v).
	\end{align}

	Using this we will now show that for all
	$x,\tilde{x}\in S_X^\circ$ the following
	implication holds
	\begin{align}\label{eq:prop-h}
		\text{there exists }  u \in \R^r \text{ such that } x - \tilde{x} = M_0 u
		\implies h(x) = h(\tilde{x}).
	\end{align}
	To see this, fix arbitrary
	$x,\tilde{x}\in S_X^\circ$ such that there exists
	$u \in \R^r$ satisfying $x - \tilde{x} = M_0 u$. Using
	\eqref{eq:decomposition_of_X}, let $v,\tilde{v}\in S_V^\circ$ and
	$z,\tilde{z}\in S_Z^\circ$ be such that $x=M_0z+v$ and
	$\tilde{x}=M_0\tilde{z}+\tilde{v}$. Next, define the path
	$\gamma:[0,1]\rightarrow \operatorname{Im}(M_0)$ for all
	$t\in[0,1]$ by
	\begin{equation*}
		\gamma(t)\coloneqq t(v-\tilde{v}).
	\end{equation*}
	This is well-defined since
	$x-\tilde{x}=M_0(z-\tilde{z})+(v-\tilde{v})\in\operatorname{Im}(M_0)$. Next,
	fix $\eta>0$ such that
	\begin{equation}
		\label{eq:open_subset}
		\{\bar{z}\in \R^r\mid \|M_0\tilde{z}-M_0\bar{z}\|_2<\eta\}\subseteq
		S_Z^\circ.
	\end{equation}
	Then, by uniform continuity of $\gamma$ there exists
	$K\in\mathbb{N}$ such that for all $t\in[0,1-\frac{1}{K}]$ it
	holds that $\|\gamma(t+\frac{1}{K}) -\gamma(t)\|_2<\eta$. Now for
	all $k\in\{0,\ldots, K\}$ define
	$v_k\coloneqq v-\gamma(\frac{k}{K})$, which is in $S_V^\circ$ since
	$S_V^\circ$ is convex.  Then, for all $k\in\{0,\ldots,K-1\}$ it holds by
	construction that
	$\|M_0\tilde{z} -
		(M_0\tilde{z}+v_{k}-v_{k+1})\|_2=\|v_{k+1}-v_{k}\|_2<\eta$.
	Since $v_{k} - v_{k+1} = \gamma(\frac{1}{K}) \in \col(M_0)$ it holds that $ M_0\tilde{z}+v_{k}-v_{k+1}\in \col(M_0)$, i.e., there exists $z_k \in \R^r$ such that
	\begin{equation}
		\label{eq:zk_def}
		M_0\tilde{z}+v_{k}-v_{k+1}=M_0z_k.
	\end{equation}
	Hence, by~\eqref{eq:open_subset} $z_k \in S_Z^\circ$.
	Finally, we can use
	\eqref{eq:z-const} (indicated by $\star$) and \eqref{eq:zk_def}
	(indicated by $\dagger$) to get that
	\begin{align*}
		h(x)
		 & \stackrel{\star}{=}h(M_0\tilde{z}+v_0)\stackrel{\dagger}{=}h(M_0z_0+v_1)           \\
		 & \stackrel{\star}{=}h(M_0\tilde{z}+v_1)\stackrel{\dagger}{=}h(M_0z_1+v_2)           \\
		 & \cdots                                                                             \\
		 & \stackrel{\star}{=}h(M_0\tilde{z}+v_k)\stackrel{\dagger}{=}h(M_0z_{k}+v_{k+1})     \\
		 & \cdots                                                                             \\
		 & \stackrel{\star}{=}h(M_0\tilde{z}+v_{K-1})\stackrel{\dagger}{=}h(M_0z_{K-1}+v_{K}) \\
		 & \stackrel{\star}{=}h(M_0\tilde{z}+v_{K})=h(\tilde{x}),
	\end{align*}
	which proves \eqref{eq:prop-h}.

	We first consider the case $q=p$. In this case,
	\eqref{eq:prop-h} implies that $h$ is constant on all of
	$S_X^\circ$. Hence, by Proposition~\ref{prop:ident-cf},
	$f_{\cf}$ is identifiable from $P_{\train}$.
	Next,
	consider the case $q<p$. Define
	the function $\delta: \R^{p - q} \to \R$ such that for
	all $x \in S_X^\circ$ it holds that
	\begin{align*}
		\delta(R^\top x) \coloneqq h(x).
	\end{align*}
	The function $\delta: \R^{p - q} \to \R$ is well-defined: Take $x, \tilde{x} \in S_X^\circ$ such that $R^\top x = R^\top \tilde{x}$. Then, $R^\top(x - \tilde{x}) = 0$ which implies that $x - \tilde{x} = M_0 b$ for some $b \in \R^r$. Therefore, from~\eqref{eq:prop-h} it follows that
	$$
		\delta(R^\top x) = h(x) = h(\tilde{x})
		= \delta(R^\top\tilde{x}).$$
	Hence, by Proposition~\ref{prop:ident-cf},
	$f_{\cf}$ is identifiable from $P_{\train}$.
\end{proof}

\subsection{Proof of Proposition~\ref{prop:cf-imp-are-imp}}\label{proof:cf-imp-are-imp}

\begin{proof}
Recall that $P_{\train} \in \mathcal{P}_0$ denotes the distribution
over $(U, V, X, Y, Z)$ induced by
$\Lambda_0 \otimes Q_{\train}$.
From
\eqref{eq:sem} and the definition
of the BCF we
get for $q=p$ that
\begin{equation*}
  Y - f_{\cf}(X)
  = f_0(X) + U - f_0(X) = U,
\end{equation*}
and similarly for $q<p$ that
\begin{equation*}
  Y - f_{\cf}(X)
  = f_0(X) + U - f_0(X) - \Exp_{P_{\train}}[\gamma_0(V) \mid R^\top
    X] = U - \Exp_{P_{\train}}[\gamma_0(V) \mid R^\top V],
\end{equation*}
where in the last step we used that
$R^\top X = R^{\top}M_0Z + R^{\top}V= R^{\top}V$. In both cases, the
residuals $Y-f_{\cf}(X)$ only depend on $(U, V)$ which have the same
marginal distribution for all $P\in\mathcal{P}_0$ (see
Appendix~\ref{sec:generative} for details on the generative
model). Therefore, it holds that
for all $P \in \mathcal{P}_0$
\begin{align*}
  P_{\train}^{Y - f_{\cf}(X)} = P^{Y - f_{\cf}(X)}
\end{align*}
and hence $f_\cf$ is invariant, i.e., $f_{\cf} \in \mathcal{I}_0$.

We now show how the risk of $f_\cf$ relates to the risk of the least
squares predictor $f_\ols$.
First, for the case $q=p$, we get that
\begin{align*}
\begin{split}
  \risk(P_{\train}, f_\cf)
  = & \ \Exp_{P_{\train}}\left[ \left(Y - f_\cf(X)  \right)^2\right]                         \\
  = & \ \Exp_{P_{\train}}\left[ \left(Y - f_0(X)  \right)^2\right]                           \\
  = & \ \Exp_{P_{\train}}\left[ \left(Y - (f_0(X)+ \Exp_{P_{\train}}[\gamma_0(V) \mid X])
  + \Exp_{P_{\train}}[\gamma_0(V) \mid X] - \Exp_{P_{\train}}[\gamma_0(V)]  \right)^2\right] \\
  = & \ \Exp_{P_{\train}}\left[ \left(Y - f_\ols(X)\right)^2\right]
  + \Exp_{P_{\train}}\left[ \left(\Exp_{P_{\train}}[\gamma_0(V) \mid X]
  - \Exp_{P_{\train}}[\gamma_0(V)]  \right)^2\right]                                         \\
  = & \ \risk(P_{\train}, f_\ols)
  + \Exp_{P_{\train}}\left[ \left(\Exp_{P_{\train}}[\gamma_0(V) \mid X]
    - \Exp_{P_{\train}}[\gamma_0(V)]  \right)^2\right],
\end{split}
\end{align*}
where in the third equality we used that
$\Exp_{P_{\train}}[\gamma_0(V)]=0$ and in the fourth equality that the
cross terms are zero. Similarly, in the case $q<p$, we get
\begin{align*}
  \begin{split}
    \risk(P_{\train}, f_\cf)
    = &\ \Exp_{P_{\train}}\left[ \left(Y - f_\cf(X)  \right)^2\right]\\
    = &\ \Exp_{P_{\train}}\left[ \left(Y - f_0(X) - \Exp_{P_{\train}}[\gamma_0(V) \mid R^\top  X]  \right)^2\right]\\
    = &\ \Exp_{P_{\train}}\left[ \left(Y - \left\{ f_0(X) + \Exp_{P_{\train}}[\gamma_0(V) \mid X] \right\}
      + \Exp_{P_{\train}}[\gamma_0(V) \mid X]
      - \Exp_{P_{\train}}[\gamma_0(V) \mid R^\top  X]  \right)^2\right]\\
    \stackrel{\blacktriangle}{=} &\ \Exp_{P_{\train}}\left[ \left(Y - f_\ols(X)\right)^2\right]
    + \Exp_{P_{\train}}\left[ \left(\Exp_{P_{\train}}[\gamma_0(V) \mid X]
      - \Exp_{P_{\train}}[\gamma_0(V) \mid R^\top  X]
      \right)^2\right]\\
    = &\
    \risk(P_{\train}, f_\ols)
    + \Exp_{P_{\train}}\left[ \left(\Exp_{P_{\train}}[\gamma_0(V) \mid X]
      - \Exp_{P_{\train}}[\gamma_0(V) \mid R^\top  X]  \right)^2\right].
  \end{split}
\end{align*}
The cross term in $\blacktriangle$ vanishes by the
properties
of the conditional expectation $f_\ols(X)
  = \Exp_{P_\train}[Y \mid X]$
(specifically, the property that
for all measurable functions $g$ it holds that $\Exp_{P_\train}[g(X)(Y - \Exp_{P_\train}[Y \mid X])] = 0$).
\end{proof}

\subsection{Proof of Proposition~\ref{prop:ass-5}}\label{proof:ass-5}
\begin{proof}
	We want to show that
	\begin{align}\label{eq:condexp-3}
		\inf_{P \in \mathcal{P}_0} \Exp_P \left[\left( \Exp_{P}[\gamma_0(V) \mid R^\top X] - \Exp_{P}[\gamma_0(V) \mid X] \right)^2 \right]  = 0.
	\end{align}
	By definition of the infimum, it is equivalent to show that for any $\varepsilon > 0$ there exists a $P^* \in \mathcal{P}_0$ such that
	\begin{align*}
		\Exp_{P^*} \left[\left( \Exp_{P^*}[\gamma_0(V) \mid R^\top X] - \Exp_{P^*}[\gamma_0(V) \mid X] \right)^2 \right]  < \varepsilon.
	\end{align*}
	Let $Q_k\coloneqq N(0, k^2 I_r)\in\mathcal{Q}_0$ and denote
	by $(U_k,V_k,X_k,Y_k,Z_k)$ the random vector and by
	$P_k\in\mathcal{P}_0$ the distribution induced by
	$(f_0,M_0,\Lambda_0, Q_k)$. Moreover, denote by $(U, V, X, Y, Z)$
	the random vector generated by $P_{\train}$ (see
	Setting~\ref{set:setting-1}). Throughout, this proof we will always
	add subscripts to the random variables to increase clarity (see
	Appendix~\ref{sec:generative} for details on this). We prove
	the statement for each of the Conditions (a) and (b) separately.

	\begin{enumerate}
		\item[(a)]
			Suppose $\Lambda_0$
			satisfies that for $(U, V)\sim\Lambda_0$ it holds that $V$ has
			a density w.r.t.\ Lebesgue and
			$\gamma_0(V)=\E_{P_{\train}}[U\mid V]$ is almost surely
			bounded.

			Since $\operatorname{im}(R)=\ker(M_0^\top)$, it holds for all
			$k\in\mathbb{N}$ that
			$R^\top X_k = R^\top V_k$ and hence  it
			holds $P_k$-a.s.\ that
			\begin{equation*}
				\Exp_{P_k}[\gamma_0(V_k) \mid R^\top X_k]
				= \Exp_{P_k} [\gamma_0(V_k) \mid R^\top V_k].
			\end{equation*}
			Therefore, we get that
			\begin{align*}
				\begin{split}
					\Exp_{P_k}&\left[ \left(\Exp_{P_k}[\gamma_0(V_k) \mid R^\top X_k] -  \Exp_{P_k}[\gamma_0(V_k) \mid X_k] \right)^2 \right]\\
					=& \ \Exp_{P_k}\left[\left(\Exp_{P_k}[\gamma_0(V_k) \mid R^\top V_k]- \Exp_{P_k}[\gamma_0(V_k) \mid X_k] \right)^2 \right].
				\end{split}
			\end{align*}

			By assumption, the control function $\gamma_0(V_k)$ is
			bounded, so there exists $c > 0$ such that $|\gamma_0(V_k)| < c$ $P_k$-a.s.,
			and so it holds $P_k$-a.s.\ for all $k \in \N$ that
			$\abs{\Exp_{P_k}[\gamma_0(V_k) \mid X_k]} \leq c$.
			Therefore, by Lemma~\ref{lem:l2}, it holds that
			\begin{align}\label{eq:lem-limit}
				\lim_{k \to \infty} \Exp_{P_k}\left[\left(\Exp_{P_k}[\gamma_0(V_k) \mid R^\top V_k]- \Exp_{P_k}[\gamma_0(V_k) \mid X_k] \right)^2 \right] = 0.
			\end{align}
			Fix $\varepsilon >
				0$. By~\eqref{eq:lem-limit}, there
			exists a $k^* > 0$ such that
			\begin{align*}
				\Exp_{P_{k^*}}\left[\left(\Exp_{P_{k^*}}[\gamma_0(V_{k^*}) \mid R^\top V_{k^*}]- \Exp_{P_{k^*}}[\gamma_0(V_{k^*}) \mid X_{k^*}] \right)^2 \right] < \varepsilon,
			\end{align*}
			which proves the claim.

		\item[(b)]
			Suppose $\Lambda_0$ is a multivariate centered
			(non-degenerate) Gaussian distribution and denote by $\Sigma$
			the covariance matrix corresponding to the marginal distribution
			of $V$. Since $(U, V)\sim\Lambda_0$ are jointly Gaussian, the
			control function $\gamma_0(V)=\Exp_{P_{\train}}[U\mid V]$
			is
			linear in $V$ so
			that $\gamma_0(V) = \gamma_0^\top V$, for some fixed vector
			$\gamma_0 \in \R^p$.
			For all $k\in\mathbb{N}$ define the
			matrix $B_k \coloneqq k^2M_0M_0^\top + \Sigma$. By
			construction of $P_k$ and $(U_k, V_k, X_k, Y_k, Z_k)$ it holds
			that $Z_k\sim N(0, k^2 I_r)$,
			$X_k=M_0Z_k + V_k\sim N(0, B_k)$ and $\Exp_{P_k}[X_k V_k^\top]=\Sigma$.
			Using the joint Gaussianity
			of $(X_k, V_k)$, we now show how to rewrite the conditional
			expectations in~\eqref{eq:condexp-3} for $P_k$. Since
			$R^\top X_k = R^\top V_k$, it holds
			$P_k$-a.s.
			\begin{align}\label{eq:cond-exp-gauss-1}
      \begin{split}
				\Exp_{P_k}[V_k \mid R^\top X_k]
				= & \ \Exp_{P_k} [V_k  \mid R^\top V_k] = \Exp_{P_k}[V_kV_k^\top R]  \Exp_{P_k}[R^\top V_kV_k^\top R]^{-1} R^\top V_k \\
				= & \ \Sigma R (R^\top \Sigma R)^{-1}R^\top V_k
				= \Sigma \Pi V_k,
      \end{split}
			\end{align}
			where we defined $\Pi\coloneqq R (R^\top \Sigma R)^{-1}R^\top$.
			Furthermore, it holds $P_k$-a.s.
			\begin{align} \label{eq:cond-exp-gauss-2}
				\Exp_{P_k}[V_k  \mid X_k]
				= \Exp_{P_k}[V_kX_k^\top]\Exp_{P_k}[X_kX_k^\top]^{-1}X_k
				= \Sigma B_k^{-1} X_k.
			\end{align}
			Therefore, using~\eqref{eq:cond-exp-gauss-1} and~\eqref{eq:cond-exp-gauss-2}, it holds 
			\begin{align}\label{eq:exp-equality}
				\begin{split}
					\lim_{k\to\infty}\Exp_{P_k}&\left[ \left(\Exp_{P_k}[\gamma_0^\top V_k \mid R^\top X_k] -  \Exp_{P_k}[\gamma_0^\top V_k \mid X_k] \right)^2 \right]\\
					= &\ \lim_{k\to\infty}\gamma_0^\top \Exp_{P_k} \left[\left(  \Sigma \Pi V_k -   \Sigma B_k^{-1} X_k \right)\left(  V_k^\top
						\Pi \Sigma   -  X_k^\top B_k^{-1} \Sigma  \right) \right] \gamma_0\\
					\stackrel{\diamondsuit}{=} &\ \lim_{k\to\infty}\gamma_0^\top
					\left[
					\Sigma \Pi \Sigma
					+ \Sigma B_k^{-1} \Sigma
					- \Sigma \Pi \Sigma B_k^{-1} \Sigma
					- \Sigma B_k^{-1} \Sigma \Pi \Sigma
					\right]
					\gamma_0\\
					\stackrel{\clubsuit}{=}&\ \gamma_0^\top \left[ \Sigma \left(
						\Pi +  \Pi - \Pi\Sigma \Pi - \Pi\Sigma \Pi
						\right)\Sigma \right]\gamma_0
            \stackrel{\diamondsuit}{=} 0,
				\end{split}
			\end{align}
			where in $\diamondsuit$ we used that $\Pi\Sigma\Pi = \Pi$ and in $\clubsuit$ we used Lemma~\ref{lem:b-goesto-ker} which states that $\lim_{k\to\infty}B_k^{-1} = \Pi$.
      Fix $\varepsilon >
				0$. By~\eqref{eq:exp-equality}, there
			exists a $k^* > 0$ such that
			\begin{align*}
				\Exp_{P_{k^*}}\left[\left(\Exp_{P_{k^*}}[\gamma_0^\top V_{k^*} \mid R^\top X_{k^*}]- \Exp_{P_{k^*}}[\gamma_0^\top V_{k^*} \mid X_{k^*}] \right)^2 \right] < \varepsilon,
			\end{align*}
			which proves the claim.
\end{enumerate}
\end{proof}

\begin{lemma}\label{lem:l2}
	Assume Setting~\ref{set:setting-1} and suppose that
	\begin{equation*}
		\{ N(0, k^2I_r)\mid k\in\N \}\subseteq \mathcal{Q}_0.
	\end{equation*}
	Define $\gamma_0(V) \coloneqq \Exp_{P_\train}[U \mid V]$
	and assume it is almost surely bounded, and define
	$R$ as
	in \eqref{eq:definition_R}. For all $k\in\mathbb{N}$ let
	$Q_k\coloneqq N(0, k^2I_r)\in\mathcal{Q}_0$ and
	$P_k\in\mathcal{P}_0$ the distribution induced by
	$(f_0,M_0,\Lambda_0, Q_k)$. Furthermore, make the following
	additional assumptions:
	\begin{enumerate}
		\item[(i)] for $(U, V)\sim\Lambda_0$, the random variable $V$ has a
			density w.r.t.\ the Lebesgue measure,
		\item[(ii)] there exists a constant $c>0$ such that $P_1$-a.s. for all
			$k\in\mathbb{N}$ it holds that
			\begin{align*}
				\abs{\Exp_{P_1}[\gamma_0(V) \mid k M_0 Z + V]} \leq c.
			\end{align*}
	\end{enumerate}
	Then, it holds that
	\begin{align*}
		\lim_{k \to \infty} \Exp_{P_{k}}\left[\left(\Exp_{P_k}[\gamma_0(V) \mid R^\top V]- \Exp_{P_{k}}[\gamma_0(V) \mid X_k] \right)^2 \right] = 0.
	\end{align*}
\end{lemma}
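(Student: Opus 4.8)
The plan is to reduce the statement to a clean ``vanishing signal-to-noise'' deconvolution fact and then establish that fact by a Bayes-ratio computation combined with a generalized dominated convergence argument.

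First I would remove the $k$-dependence from the noise law by rescaling. Since $Z_k\sim N(0,k^2 I_r)$ has the same law as $kZ$ with $Z\sim N(0,I_r)$, and the marginal law of $(U,V)$ is $\Lambda_0$ for every $P_k$, the joint law of $(\gamma_0(V_k),R^\top V_k,X_k)$ under $P_k$ equals that of $(\gamma_0(V),R^\top V,\,kM_0Z+V)$ under $P_{\train}=P_1$. Because conditional expectations depend only on joint laws, the quantity in the lemma equals
\[
\Exp_{P_1}\!\Big[\big(\Exp_{P_1}[\gamma_0(V)\mid R^\top V]-\Exp_{P_1}[\gamma_0(V)\mid X_k]\big)^2\Big],\qquad X_k:=kM_0Z+V .
\]
I then introduce an orthonormal basis $E\in\R^{p\times q}$ of $\col(M_0)$, complementing the columns of $R$ which span $\ker(M_0^\top)$, and set $S:=E^\top V$, $T:=R^\top V$. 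Then $R^\top X_k=R^\top V=T$ while $E^\top X_k=S+\tilde W_k$ with $\tilde W_k:=kE^\top M_0Z\sim N(0,k^2C)$, where $C:=E^\top M_0M_0^\top E\succ0$ (positive definite because $E$ spans exactly the rank-$q$ image of $M_0$), and $\tilde W_k$ is independent of $(S,T)$. Writing $\psi(s,t):=\gamma_0(Es+Rt)$, conditioning on $X_k$ is conditioning on $(A_k,T)$ with $A_k:=S+\tilde W_k$, so the target becomes $\Exp_{P_1}[(\Exp[\psi(S,T)\mid A_k,T]-\Exp[\psi(S,T)\mid T])^2]$.

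Next I condition on $T$. Since $\tilde W_k\ind(S,T)$, conditionally on $T=t$ the pair $(S,\tilde W_k)$ is independent with $S\sim\nu_t$, the regular conditional law of $S$ given $T=t$, which has a Lebesgue density for $P^T$-almost every $t$ by assumption (i), and $\tilde W_k\sim N(0,k^2C)$. Hence the conditional target equals $D_k(\nu_t,\psi(\cdot,t))$, where for a probability density $\nu$ on $\R^q$ and bounded $\phi$ I set $D_k(\nu,\phi):=\Exp[(\Exp[\phi(S)\mid S+\tilde W_k]-\Exp[\phi(S)])^2]$. Because $|\psi|\le c$, we have $D_k(\nu_t,\psi(\cdot,t))\le 4c^2$, so by dominated convergence over $t\sim P^T$ it suffices to prove the core claim $D_k(\nu,\phi)\to0$ for a fixed density $\nu$ and bounded $\phi$. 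For this, write $\tilde\phi:=\phi-\Exp[\phi(S)]$, so $\int\tilde\phi\,d\nu=0$ and $\|\tilde\phi\|_\infty\le2\|\phi\|_\infty$, and use the Bayes formula
\[
\Exp[\tilde\phi(S)\mid A_k=a]=\frac{\int\tilde\phi(s)\,p_\nu(s)\,g_k(a-s)\,ds}{\int p_\nu(s)\,g_k(a-s)\,ds},
\]
with $p_\nu$ the density of $\nu$ and $g_k$ the $N(0,k^2C)$ density. The decisive step is to rescale: put $B_k:=A_k/k=S/k+W$ with $W\sim N(0,C)$; conditioning on $A_k$ and on $B_k$ coincide, the law of $B_k$ has density $m_k(b)=\int p_\nu(s)f_W(b-s/k)\,ds$, and after cancelling the Jacobian the ratio above equals $N_k(b)/m_k(b)$ with $N_k(b):=\int\tilde\phi(s)p_\nu(s)f_W(b-s/k)\,ds$. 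By ordinary dominated convergence, using that $f_W$ is bounded and continuous and $\tilde\phi p_\nu\in L^1$, one gets $N_k(b)\to f_W(b)\int\tilde\phi p_\nu=0$ and $m_k(b)\to f_W(b)>0$ for every $b$, so $\Exp[\tilde\phi(S)\mid B_k=b]\to0$ pointwise.

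The main obstacle is the final upgrade from this pointwise convergence to $L^2$, i.e.\ $\int|\Exp[\tilde\phi(S)\mid B_k=b]|^2m_k(b)\,db\to0$: the dominating density $m_k$ itself varies with $k$, and a naive bound fails in the original coordinates, where the reference density spreads out to zero. Rescaling is exactly what repairs this, since $m_k\to f_W$ pointwise with $\int m_k=1=\int f_W$. The integrand is nonnegative, tends to $0$ pointwise, and is bounded above by $(2\|\phi\|_\infty)^2 m_k$, whose integral converges to $(2\|\phi\|_\infty)^2\int f_W$; Pratt's generalized dominated convergence theorem then yields the required convergence to $0$. This establishes the core claim, and integrating it back over $t$ completes the proof of the lemma.
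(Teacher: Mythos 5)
Your proof is correct and follows essentially the same route as the paper's: both split $V$ into its $\col(M_0)$ and $\ker(M_0^\top)$ components, renormalize the conditioning statistic so that it converges to an independent nondegenerate Gaussian, express the conditional expectation as a ratio of convolution integrals against the Gaussian density, and pass to the limit by dominated convergence using the boundedness of $\gamma_0$ and of that density. The differences are minor: you rescale by $1/k$ where the paper rotates by $\theta_k=\arctan(k)$ (the ``trigonometric argument''), you disintegrate over $R^\top V$ before running the deconvolution step, and you upgrade the pointwise limit to $L^2$ via Pratt's lemma over the $k$-dependent reference densities $m_k$ rather than via the paper's bounded-convergence step under the fixed measure $P_1$ --- your handling of that last step, which explicitly confronts the fact that the reference measure varies with $k$, is if anything the more careful of the two.
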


\begin{proof}
	Let $(U, V, X, Y, Z)$ be the random vector generated by
	$P_1$ (details on how the random variables are generated are given
	in Appendix~\ref{sec:generative}).
	Moreover, fix $k \in \mathbb{N}$ and note that by
	construction it holds that $(U, V, X_k, Y_k, Z)\sim P_k$.

	Let $S=(s_1,\ldots,s_q)\in\R^{p\times q}$
	with $q=\rank(M_0)$
	denote the left singular vectors of $M_0$ associated to non-zero singular values --
	this implies that
	$S$ is an orthonormal basis for $\col(M_0)$.
	Moreover, define the transformed random
	variables $\tilde{X}_k\coloneqq (S, R)^{\top}X_k$,
	$\tilde{Z}\coloneqq S^{\top}M_0Z$ and
	$\tilde{V}=(\tilde{V}_S,\tilde{V}_R)\coloneqq (S^{\top}V,
		R^{\top}V)$. It then holds that,
	\begin{align*}
		\tilde{X}_k
		= k \begin{bmatrix}
			    S^\top M_0 Z \\ R^\top M_0 Z
		    \end{bmatrix}
		+ \begin{bmatrix}
			  S^\top V \\ R^\top V
		  \end{bmatrix}
		= k \begin{bmatrix}
			    \tilde{Z} \\ 0
		    \end{bmatrix}
		+ \begin{bmatrix}
			  \tilde{V}_S \\ \tilde{V}_R
		  \end{bmatrix}.
	\end{align*}
	Let $\tilde{\gamma}_0 := \gamma_0 \circ (S, R)$ and rewrite the
	conditional expectations, using
	$\gamma_0(V) = \tilde\gamma_0(\tilde{V})$
	and that $(S, R)$ are an orthonormal basis,
	as
	\begin{align}
		\label{eq:condexp-2a}
		 & \Exp_{P_k}[\gamma_0(V) \mid X_k]
		= \Exp_{P_k}[\tilde{\gamma_0}(\tilde{V}) \mid \tilde X_k]
		= \Exp_{P_1}[\tilde{\gamma_0}(\tilde{V}) \mid k \tilde{Z} + \tilde{V}_S, \tilde{V}_R], \\
		\label{eq:condexp-2b}
		 & \Exp_{P_k}[\gamma_0(V) \mid R^\top X_k]
		= \Exp_{P_1}[\tilde{\gamma_0}(\tilde{V}) \mid R^\top V]
		= \Exp_{P_1}[\tilde{\gamma_0}(\tilde{V}) \mid \tilde{V}_R].
	\end{align}
	Using the trigonometric identities $\sin(\arctan(k))=\frac{k}{\sqrt{1+k^2}}$ and $\cos(\arctan(k))=\frac{1}{\sqrt{1+k^2}}$, we have the following equalities of sigma-algebras,
	\begin{align*}
		 & \sigma(k \tilde{Z} + \tilde{V}_S)
		= \sigma\left( \frac{k \tilde{Z} + \tilde{V}_S}{\sqrt{1 + k^2}} \right) = \sigma\left( \tilde{Z} \sin\theta_k + \tilde{V}_S\cos\theta_k \right),
	\end{align*}
	where $\theta_k = \arctan(k) \in (0, \pi / 2)$.  So we can
	rewrite the r.h.s.\ of~\eqref{eq:condexp-2a} as
	\begin{align}\label{eq:integral-2}
		\Exp_{P_1}[\tilde{\gamma_0}(\tilde{V}) \mid k \tilde{Z} + \tilde{V}_S, \tilde{V}_R]
		= \Exp_{P_1}[\tilde{\gamma_0}(\tilde{V}) \mid \tilde{Z} \sin\theta_k + \tilde{V}_S \cos\theta_k, \tilde{V}_R].
	\end{align}
	Since $k\in\mathbb{N}$ was fixed arbitrarily,
	\eqref{eq:integral-2} holds for all $k\in\mathbb{N}$.
	Next, fix an arbitrary
	$\theta\in(0, \pi/2)$ and
	define the random variables $W^1 = (W^1_S, W^1_R) \in \R^p$ and
	$W^2 = (W^2_S, W^2_R) \in \R^p$ by
	\begin{align*}
		\begin{pmatrix}
			W^1_S & W^2_S \\
			W^1_R & W^2_R
		\end{pmatrix}
		\coloneqq
		\begin{pmatrix}
			\tilde{Z} & \tilde{V}_S \\
			0         & \tilde{V}_R
		\end{pmatrix}
		\begin{pmatrix}
			\sin\theta & -\cos\theta \\
			\cos\theta & \sin\theta
		\end{pmatrix}.
	\end{align*}
	This implies that
	\begin{align*}
		\begin{pmatrix}
			\tilde{Z} & \tilde{V}_S \\
			0         & \tilde{V}_R
		\end{pmatrix}
		=
		\begin{pmatrix}
			W^1_S & W^2_S \\
			W^1_R & W^2_R
		\end{pmatrix}
		\begin{pmatrix}
			\sin\theta  & \cos\theta \\
			-\cos\theta & \sin\theta
		\end{pmatrix}.
	\end{align*}
	By construction and the fact that
	$\supp(P_1^{M_0Z})=\operatorname{Im}(M_0)$ (since $P_1^Z$ has full support)
	the support of
	$W^1_S=\tilde{Z}\cos(\theta)+\tilde{V}_S\sin(\theta)$ satisfies
	$\supp(P_1^{W^1_S})=\operatorname{Im}(S^{\top})$ and hence does not depend on
	$\theta$.
	By assumption~(i), we know that $V$ has a density
	w.r.t.\
	Lebesgue which implies that also the transformed
	variable $\tilde{V}$ has a density w.r.t.\ Lebesgue, which we
	denote by $f_{\tilde{V}}: \R^p \to \R$. Moreover,
	$\tilde{Z}\sim N(0, M_0^{\top}M_0)$, hence it has a density w.r.t.\
	Lebesgue $f_{\tilde{Z}}$ which satisfies
	$\sup_{z\in\R^r}f_{\tilde{Z}}(z) < \infty$.
	By density transformation,
	and by the independence of $\tilde{Z}$ and $\tilde{V}$,
	$(W^1, W^2)$ has density
	\begin{align*}
		f_{W^1, W^2}(w^1_S, w^1_R, w^2_S, w^2_R)
		= & \ f_{\tilde{Z}, \tilde{V}}(w^1_S \sin\theta - w^2_S \cos\theta, w^1 \cos\theta + w^2 \sin\theta)      \\
		= & \ f_{\tilde{Z}}(w^1_S \sin\theta - w^2_S \cos\theta)\ f_{\tilde{V}}(w^1 \cos\theta + w^2 \sin\theta),
	\end{align*}
	where $w^1 = (w^1_S, w^1_R)$ and $w^2 = (w^2_S, w^2_R)$.
	For all $w^1_S \in \supp(P_1^{W^1_S})$ and all $v_R\in \supp(P_1^{V_R})$ it holds that
	\begin{align}\label{eq:integral}
		\begin{split}
			&\Exp_{P_1}\left[ \tilde{\gamma_0}(\tilde{V}) \mid \tilde{Z} \sin\theta + \tilde{V}_S \cos\theta = w^1_S, \tilde{V}_R = v_R \right]\\
			= &\ \Exp_{P_1}\left[ \tilde{\gamma_0}(W^1\cos\theta + W^2
				\sin\theta) \mid W^1_S = w^1_S, W^1_R \cos\theta + W^2_R \sin\theta=v_R \right] \\
			= &\ \frac{\int \tilde{\gamma}_0(w^1_S \cos\theta + w^2_S \sin\theta, v_R)\ f_{\tilde{Z}}(w^1_S \sin\theta - w^2_S \cos\theta)\ f_{\tilde{V}}(w^1_S \cos\theta + w^2_S \sin\theta, v_R)\ \d w^2_S}{\int f_{\tilde{Z}}(w^1_S \sin\theta - w^2_S \cos\theta)\ f_{\tilde{V}}(w^1_S \cos\theta + w^2_S \sin\theta, v_R)\ \d w^2_S}\\
			= &\ \frac{\int \tilde{\gamma}_0(t, v_R)\ f_{\tilde{Z}}(w^1_S \sin\theta - h(t, \theta) \cos\theta)\ f_{\tilde{V}}(t, v_R)\ \d t}{\int f_{\tilde{Z}}(w^1_S \sin\theta -  h(t, \theta) \cos\theta)\ f_{\tilde{V}}(t, v_R)\ \d t},
		\end{split}
	\end{align}
	where in the last equality we substituted
	$t \coloneqq w^1_S \cos\theta + w^2_S \sin\theta$ and
	$h(t, \theta) \coloneqq (t - w^1_S\cos\theta)/ \sin\theta$.  Notice
	that $\lim_{\theta \uparrow \pi/2} h(t, \theta) = t$
	and
	$ \lim_{\theta \uparrow \pi/2} f_{\tilde{Z}}(w^1_S \sin\theta - h(t, \theta)
		\cos\theta) = f_{\tilde{Z}}(w^1_S)$
	by continuity of
	$f_{\tilde{Z}}$.
	Since $\theta\in(0,\pi/2)$ was arbitrary, \eqref{eq:integral}
	implies for all $w^1_S \in \supp(P_1^{W^1_S})$ and all $v_R\in
		\supp(P_1^{V_R})$ (both $\supp(P_1^{W^1_S})$ and
	$\supp(P_1^{V_R})$ do not depend on $\theta$) that
	\begin{align}\label{eq:dom-conv-lim}
		\begin{split}
			& \lim_{\theta \uparrow \pi/2} \Exp_{P_1}\left[ \tilde{\gamma_0}(\tilde{V}) \mid \tilde{Z} \sin\theta + \tilde{V}_S \cos\theta = w^1_S, \tilde{V}_R = v_R \right]\\
			= &\ \lim_{\theta \uparrow \pi/2} \frac{\int \tilde{\gamma}_0(t, v_R)\ f_{\tilde{Z}}(w^1_S \sin\theta - h(t, \theta) \cos\theta)\ f_{\tilde{V}}(t, v_R)\ \d t}{\int f_{\tilde{Z}}(w^1_S \sin\theta -  h(t, \theta) \cos\theta)\ f_{\tilde{V}}(t, v_R)\ \d t}\\
			= &\ \frac{ f_{\tilde{Z}}(w^1_S) \int \tilde{\gamma}_0(t, v_R)\ f_{\tilde{V}}(t, v_R)\ \d t}{f_{\tilde{Z}}(w^1_S) \int f_{\tilde{V}}(t, v_R)\ \d t}\\
			= &\ \Exp_{P_1}[\tilde{\gamma}_0(\tilde{V}) \mid \tilde{V}_R = v_R],
		\end{split}
	\end{align}
	where in the second equality we invoked the dominated convergence
	theorem since $f_{\tilde{Z}}$ is bounded
	and $\tilde{\gamma}_0$ is bounded.

	Now, fix $\varepsilon > 0$. By~\eqref{eq:dom-conv-lim}, there exists
	$k^*\in\mathbb{N}$ such that
	$\theta^*\coloneqq\arctan(k^*)\in (0, \pi /2)$
	such that for all
	$\theta \in [\theta^*, \pi / 2)$ it holds $P_1$-a.s.\ that
	\begin{align*}
		\Big|\Exp_{P_1}[\tilde{\gamma}_0(\tilde{V}) \mid \tilde{V}_R] - \Exp_{P_1}\left[ \tilde{\gamma_0}(\tilde{V}) \mid \tilde{Z} \sin\theta + \tilde{V}_S \cos\theta , \tilde{V}_R \right] \Big| < \varepsilon.
	\end{align*}
	Moreover, by \eqref{eq:integral-2}, for all $k> k^*$ this implies $P_1$-a.s. that
	\begin{align*}
		\Big|\Exp_{P_1}[\gamma_0(V) \mid R^\top V] - \Exp_{P_1}\left[ \gamma_0(V) \mid k M_0 Z + V \right]\Big| < \varepsilon.
	\end{align*}
	Since $\varepsilon > 0$ was arbitrary, it follows $P_1$-a.s.
	\begin{align}\label{eq:chung-1}
		\lim_{k \to \infty} \Exp_{P_1}\left[ \gamma_0(V) \mid k M_0 Z + V \right] = \Exp_{P_1}[\gamma_0(V) \mid R^\top V].
	\end{align}
	Finally, by assumption~(ii), there exists a constant $c>0$ such that
	$P_1$-a.s. for all $k\in\mathbb{N}$ it holds that
	\begin{align}
		\label{eq:chung-2}
		\abs{\Exp_{P_1}[\gamma_0(V) \mid k M_0 Z + V]} \leq c.
	\end{align}
	Therefore, by~\eqref{eq:chung-1} and~\eqref{eq:chung-2}, using~\citep[][Theorem~4.1.4]{chung2001course} it holds that
	\begin{align*}
		\lim_{k \to \infty} \Exp_{P_1}\left[\left(\Exp_{P_1}[\gamma_0(V) \mid R^\top V]- \Exp_{P_1}[\gamma_0(V) \mid k M_0 Z + V] \right)^2 \right] = 0.
	\end{align*}
	Using that $(U, V, kZ, X_k, Y_k)\sim P_k$ and $X_k=kM_0Z+V$, this in particular
	implies that
	\begin{align*}
		\lim_{k \to \infty} \Exp_{P_k}\left[\left(\Exp_{P_k}[\gamma_0(V) \mid R^\top V]- \Exp_{P_k}[\gamma_0(V) \mid X_k] \right)^2 \right] = 0.
	\end{align*}
\end{proof}

\begin{lemma}\label{lem:b-goesto-ker}
  For each $k \in \N$, define $B_k \coloneqq k^2M_0M_0^\top + \Sigma$, where $M_0 \in \R^{p \times r}$ with $\rank(M_0) = q$ and $\Sigma \in \R^{p \times p}$ is positive definite. Define $R \in \R^{p \times (p - q)}$ as in~\eqref{eq:definition_R}.
  Then, as $k \to \infty$, it holds that $B_k^{-1} \to R (R^\top \Sigma R)^{-1}R^\top$.
\end{lemma}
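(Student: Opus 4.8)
The plan is to diagonalize $B_k$ in a fixed orthonormal basis adapted to the column space and left null space of $M_0$, and then apply the block-matrix inversion formula while letting $k\to\infty$. First I would let $S\in\R^{p\times q}$ be an orthonormal basis of $\col(M_0)$ (e.g.\ the left singular vectors of $M_0$ associated with nonzero singular values) and recall that $R\in\R^{p\times(p-q)}$ is an orthonormal basis of $\ker(M_0^\top)=\col(M_0)^\perp$, so that $P\coloneqq(S,R)\in\R^{p\times p}$ is orthogonal with $R^\top S=0$. Writing the spectral decomposition $M_0M_0^\top=SDS^\top$, where $D\in\R^{q\times q}$ is diagonal and positive definite (its entries being the squared nonzero singular values of $M_0$), and using $P^\top S=(I_q,0)^\top$, the change of basis yields the block form
\[
P^\top B_k P=\begin{pmatrix} k^2 D+\Sigma_{SS} & \Sigma_{SR}\\ \Sigma_{RS} & \Sigma_{RR}\end{pmatrix},
\]
with $\Sigma_{SS}=S^\top\Sigma S$, $\Sigma_{SR}=S^\top\Sigma R=\Sigma_{RS}^\top$, and $\Sigma_{RR}=R^\top\Sigma R$.

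Next I would invert this block matrix via the Schur complement with pivot on the top-left block $A_k\coloneqq k^2 D+\Sigma_{SS}$. Since $\Sigma_{SS}\succeq 0$ and $D\succ 0$, we have $A_k\succeq k^2\lambda_{\min}(D)I_q$, hence $\|A_k^{-1}\|\le (k^2\lambda_{\min}(D))^{-1}\to 0$, so $A_k^{-1}\to 0$. Consequently the Schur complement $\Sigma_{RR}-\Sigma_{RS}A_k^{-1}\Sigma_{SR}\to\Sigma_{RR}=R^\top\Sigma R$, which is invertible because $\Sigma\succ 0$ and $R$ has full column rank. Reading off the four blocks of the inverse and passing to the limit (using continuity of matrix inversion on the invertible matrices), the three blocks containing a factor $A_k^{-1}$ vanish and only the bottom-right block survives, giving
\[
\lim_{k\to\infty}P^\top B_k^{-1}P=\begin{pmatrix} 0 & 0\\ 0 & (R^\top\Sigma R)^{-1}\end{pmatrix}.
\]
Multiplying on the left by $P$ and on the right by $P^\top$ and using $P=(S,R)$ then recovers $\lim_{k\to\infty}B_k^{-1}=R(R^\top\Sigma R)^{-1}R^\top$, as claimed.

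The argument is essentially routine linear algebra, and I do not expect a genuinely hard step; the two points requiring care are (i) verifying that the pivot block blows up uniformly so that $A_k^{-1}\to 0$ — this is precisely where positive-definiteness of $D$ (equivalently, the nonzero singular values of $M_0$ being bounded away from zero) enters — and (ii) confirming that the limiting Schur complement $R^\top\Sigma R$ is invertible, which follows from $\Sigma\succ 0$. In the degenerate case $q=p$, where $R=\mathbf{0}$ and $\ker(M_0^\top)=\{0\}$, the same estimate $\|B_k^{-1}\|\to 0$ shows $B_k^{-1}\to 0$, consistent with interpreting the right-hand side as the zero matrix.
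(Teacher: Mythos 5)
Your proof is correct and follows essentially the same route as the paper's: an orthogonal change of basis to $(S,R)$ adapted to $\col(M_0)$ and $\ker(M_0^\top)$, followed by block-matrix inversion and a limit argument. The only (immaterial) difference is the choice of pivot for the Schur complement — you pivot on the top-left block $k^2D+\Sigma_{SS}$, whose inverse vanishes directly, while the paper pivots on the bottom-right block $R^\top\Sigma R$ and shows the resulting Schur complement's inverse decays like $k^{-2}$; your explicit treatment of the degenerate case $q=p$ is a welcome addition.
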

\begin{proof}
  Let $S\in\R^{p\times q}$
	denote the left singular vectors of $M_0$ associated to non-zero singular values --
	this implies that
	$S$ is an orthonormal basis for $\col(M_0)$ and $S^\top M_0 M_0^\top S = L \in \R^{q \times q}$ is a positive definite diagonal matrix.
  Define $Q = [S, R] \in \R^{p \times p}$ so that $QQ^\top = I_p$. Then, $B_k^{-1} 
  = ( Q Q^{\top} B_k Q Q^\top )^{-1} 
  = Q ( Q^\top B_k Q)^{-1} Q^\top$, and so 
  \begin{align*}
    Q^\top B_k Q 
    = 
    \begin{bmatrix}
      k^2 L + S^\top \Sigma S 
      & 
      S^\top \Sigma R\\
      (S^\top \Sigma R)^\top
      &
      R^\top \Sigma R
    \end{bmatrix}
    \eqqcolon
    \begin{bmatrix}
      A & B\\
      B^\top & D
    \end{bmatrix}
    \eqqcolon K.
  \end{align*}
  Using the block matrix inversion formula and the Schur complement $(K/D) \coloneqq A - BD^{-1}B^\top$, we get
  \begin{align}
    (Q^\top B_k Q)^{-1}
    = &\
    \begin{bmatrix}
      (K/D)^{-1}
      &
      -(K/D)^{-1}BD^{-1}\\
      -D^{-1}B^\top (K/D)^{-1}
      &
      D^{-1} + D^{-1}B^\top(K/D)^{-1}BD^{-1}
    \end{bmatrix},\text{ and}\label{eq:block-inverse}\\
    (K/D)^{-1} 
    = &\
    \left(k^2 L + S^\top \Sigma S - S^\top \Sigma R (R^\top \Sigma R)^{-1} R^\top \Sigma S\right)^{-1}\notag\\
    = &\ \frac{1}{k^2}\left(L + E / k^2 \right)^{-1},\notag
  \end{align}
  where $E \coloneqq S^\top \Sigma S - S^\top \Sigma R (R^\top \Sigma R)^{-1} R^\top \Sigma S$. 
  Since $E$ is symmetric and it is constant with respect to $k$, it has real valued eigenvalues. Therefore, there exists $k_0 \in \N$ such that for all $k > k_0$ the matrix $L + E/k^2$ is positive definite. By using the continuity of matrix inversion for non-singular matrices we get
  \begin{align*}
    \lim_{k \to \infty}\left(L + E / k^2 \right)^{-1} 
    =
    \left(L + \lim_{k \to \infty} E / k^2 \right)^{-1} = 
    L^{-1},
  \end{align*}
  and thus $\lim_{k \to \infty} (K / D)^{-1} = 0$. Since $B$ and $D$ are constant with respect to $k$, all factors in~\eqref{eq:block-inverse} containing $(K/D)^{-1}$ vanish as $k\to\infty$.
  Therefore, we obtain
  \begin{align*}
    \lim_{k\to\infty} (Q^\top B_k Q)^{-1}
    =
    \begin{bmatrix}
      0 & 0\\
      0 & (R^\top \Sigma R)^{-1}
    \end{bmatrix},
  \end{align*}
  which implies that
  \begin{align*}
    \lim_{k\to\infty} B_k^{-1}
    = 
    Q \lim_{k\to\infty}( Q^\top B_k Q)^{-1} Q^\top
    =
    R(R^\top \Sigma R)^{-1}R^\top.
  \end{align*}
\end{proof}

\subsection{Proof of Theorem~\ref{thm:minimax-1}}\label{proof:minimaxthm-1}
\begin{proof} 
  We first show
	that
	\begin{align*}
		\inf_{P \in \mathcal{P}_0} \Exp_P \left[ \left( f_{\cf}(X) - \Exp_P[Y \mid X] \right)^2 \right] = 0.
	\end{align*}
	Recall that if $q<p$ we have $P_{\train}$-a.s.
	$f_{\cf}(X) = f_0(X) + \Exp_{P_{\train}}[\gamma_0(V) \mid R^\top
			X]$. Therefore, by Assumption~\ref{ass:ident} it holds for $P\in\mathcal{P}_0$ that
	$P$-a.s. $f_{\cf}(X) = f_0(X) + \Exp_{P_{\train}}[\gamma_0(V) \mid
			R^\top X]$. Moreover, by \eqref{eq:sem} and
	Assumption~\ref{ass:ident} it also holds for all
	$P\in\mathcal{P}_0$ that $P$-a.s. that $\E_P[Y \mid X] = f_0(X) + \Exp_P[\gamma_0(V) \mid X]$.
	Hence, by Assumption~\ref{ass:condexp} it holds that
	\begin{align}\label{eq:thm1-1}
		\inf_{P \in \mathcal{P}_0} \Exp_P \left[ \left( f_{\cf}(X) - \Exp_P[Y \mid X] \right)^2 \right]
		=
		\inf_{P \in \mathcal{P}_0} \Exp_P \left[ \left(\Exp_{P_{\train}}[\gamma_0(V) \mid R^\top X] - \Exp_P[\gamma_0(V) \mid X] \right)^2 \right] = 0.
	\end{align}
	And similarly, for $q=p$ we get that
	\begin{align}\label{eq:thm1-1b}
		\inf_{P \in \mathcal{P}_0} \Exp_P \left[ \left( f_{\cf}(X) - \Exp_P[Y \mid X] \right)^2 \right]
		=
		\inf_{P \in \mathcal{P}_0} \Exp_P \left[ \left(\Exp_P[\gamma_0(V) \mid X] \right)^2 \right] = 0.
	\end{align}
	Now, recall that the \cfname\ has constant risk across
	$\mathcal{P}_0$ since it is invariant by Proposition~\ref{prop:cf-imp-are-imp}, that is
	\begin{align*}
		\risk(P_{\train}, f_\cf)= \E_{P_{\train}}\left[ \left( Y - f_\cf(X) \right)^2 \right] = \E_{P}\left[ \left( Y - f_\cf(X) \right)^2 \right],\quad\text{for all}\ P \in \mathcal{P}_0.
	\end{align*}
	We now want to show that for all $f \in \mathcal{F}$ there exists a $P \in \mathcal{P}_0$ such that
	\begin{align*}
		\Exp_{P}\left[ (Y - f_\cf(X))^2 \right] \leq
		\Exp_{P}\left[ (Y - f(X))^2 \right].
	\end{align*}
	Let $\varepsilon > 0$ and $f  \in \mathcal{F}$, and fix
	$\delta\coloneqq\min(\frac{\varepsilon}{2}, \frac{\varepsilon^2}{16\risk(P_{\train}, f_\cf)})>0$.
	Thus, by~\eqref{eq:thm1-1}~and~\eqref{eq:thm1-1b}, there exists a $P^* \in \mathcal{P}_0$ such that
	\begin{align*}
		\Exp_{P^*}\left[ \left( f_{\cf}(X) - \Exp_{P^*}
			[Y \mid X] \right)^2 \right] < \delta.
	\end{align*}
	Then, using the Cauchy-Schwarz inequality,
	\begin{align*}
		\Exp_{P^*}\left[ \left( Y - f_\cf(X) \right)^2 \right]
		=    & \ \Exp_{P^*} \left[ \left( Y - \Exp_{P^*}[Y \mid X] + \Exp_{P^*}[Y \mid X] - f_\cf(X)\right) ^2\right]                                   \\
		\leq & \ \Exp_{P^*} \left[ \left( Y - \Exp_{P^*}[Y \mid X] \right) ^2\right]
		+ \Exp_{P^*} \left[ \left( \Exp_{P^*}[Y \mid X] - f_\cf(X)\right) ^2\right]                                                                     \\
		     & + 2 \left(
		\Exp_{P^*} \left[ \left( \Exp_{P^*}[Y \mid X] - f_\cf(X)\right) ^2\right]
		\Exp_{P^*} \left[ \left( Y - \Exp_{P^*}[Y \mid X] \right) ^2\right] \right)^{1/2}                                                               \\
		<    & \ \Exp_{P^*} \left[ \left( Y - \Exp_{P^*}[Y \mid X] \right) ^2\right] + \delta + 2\left( \delta\ \risk(P_{\train}, f_\cf)  \right)^{1/2} \\
		\leq & \ \Exp_{P^*} \left[ \left( Y - \Exp_{P^*}[Y \mid X] \right) ^2\right]  + \varepsilon                                                     \\
		\leq & \ \Exp_{P^*} \left[ \left( Y - f(X) \right) ^2\right]  + \varepsilon.
	\end{align*}
	This completes the proof.
\end{proof}

\subsection{Proof of Corollary~\ref{cor:imp}}
\label{proof:imp}

\begin{proof}

	Using Theorem~\ref{thm:minimax-1} and
	$\mathcal{I}_0\subseteq \mathcal{F}$ (which holds by definition),
	we obtain
	\begin{align*} %
		\begin{split}
			\risk(P_{\train}, f_\cf)
			&= \sup_{P \in \mathcal{P}_0}\risk(P, f_\cf)
			=  \inf_{f\in\mathcal{F}}\sup_{P \in \mathcal{P}_0} \risk(P, f)\\
			&\leq  \inf_{f\in\mathcal{I}_0}\sup_{P \in \mathcal{P}_0} \risk(P, f)
			= \inf_{f\in\mathcal{I}_0}\risk(P_{\train}, f).
		\end{split}
	\end{align*}
	Also, since $f_\cf \in \mathcal{I}_0$
	it holds that
	\begin{align*}
		\risk(P_{\train}, f_\cf) \geq \inf_{f\in\mathcal{I}_0}\risk(P_{\train}, f).
	\end{align*}
\end{proof}

\begin{lemma}\label{lem:mse-sin}
Let $V \sim N(0, \sigma^2)$. Then
\begin{align*}
  \Exp[(V - \sin(V))^2] = \sigma^2 + \frac{(1 - e^{-2\sigma^2})}{2} - 2\sigma^2e^{-\sigma^2/2}.
\end{align*}
\end{lemma}
\begin{proof}
  First note that $\Exp[\sin(V)] = 0$ since the sine is an odd function.
  Also, recall that $\sin(x)^2 = (1 - \cos(2x))/2$, for all $x \in \R$.
  Using Euler's formula and the characteristic function for the Gaussian distribution, it holds for all $t \in \R$ that
  \begin{align*}
    \Exp[\cos(tV)]=\frac{1}{2}\Exp[e^{itV} + e^{-itV}] =  e^{-\sigma^2t^2/2}.
  \end{align*}
  Using Stein's lemma, we have that
  \begin{align*}
    \Exp[V \sin(V)] = \sigma^2 \Exp[\cos(V)] = \sigma^2 e^{-\sigma^2/2}.
  \end{align*}
  Putting everything together, we have that
  \begin{align*}
    \Exp[(V - \sin(V))^2]
    = &\ \Exp[V^2] + \Exp[\sin(V)^2] - 2 \Exp[V \sin (V)]\\
    = &\ \sigma^2 + \frac{(1 - e^{-2\sigma^2})}{2} - 2\sigma^2e^{-\sigma^2/2}.
  \end{align*}
\end{proof}

\subsection{Convergence rate of the BCF estimator with oracle first step}\label{sec:app_rates}
  Recall the definition of the BCF $f_{\star}(x) = f_{0}(x) + \Exp[\gamma_{0}(V) \mid R^{\top}X = R^{\top}x]$ , for $x \in \mathbb{R}^p$.
  We have a dataset  $\dataset$ consisting of iid copies $(X_i, Y_i, Z_i, U_{i}, V_i) \sim P$, with $i \in \{1, \dots, 2n\}$. Split the dataset into equally sized disjoint parts $\dataset_1$ and~$\dataset_2$. Estimate the BCF using sample splitting as follows.

  \begin{enumerate}
	\item (\emph{First split $\dataset_1$}).   On $\dataset_{1}$, fit the additive model $\hat{f}_n(x) + \hat{\gamma}_n(v)$ by regressing $Y$ on $(X, V)$. The function $\hat{f}_{n}+\hat{\gamma}_n$ estimates the conditional expectation $\Exp[Y \mid X = x, V = v] = f_0(x) + \gamma_0(v)$.

	\item (\emph{Second split $\dataset_{2}$}). On $\dataset_{2}$ evaluate the pseudo-outcomes  $\hat{\gamma}_n(V)$ and  fit the regression function $\hat{m}_{n}: \R^{p-q}\to\R$ by regressing $\hat{\gamma}_n(V)$ on $R^{\top}X$. The function $\hat{m}_{n}$ estimates the conditional expectation $m_n(w) \coloneqq \Exp[\hat{\gamma}_n(V)\mid R^\top X = w].$
  \end{enumerate}
  The resulting BCF estimator satisfies $\hat{f}_{\star}(x) = \hat{f}_n(x) + \hat{m}_n(R^{\top}x)$.
  Before stating the proposition we report the definition of the angle between two subspaces \citep[see][Definition~9.4]{deutsch2001best}.
    \begin{definition}[Angle between two subspaces]\label{def:angle}
	Let $\mathcal{H}$ be a Hilbert space and $\cM, \cN \subseteq \mathcal{H}$ two closed subspaces such that $\cM \cap \cN = \{0\}$.
	Following \citep[][Definition 9.4]{deutsch2001best}, define the angle $\theta \in [0, \pi/2]$ between $\cM$ and $\cN$ to be the angle whose cosine $\cos(\theta)$ is defined by
	\begin{align*}
	\cos(\theta) \coloneqq \sup \left\{ |\langle u, v\rangle| \colon u \in \cM, v \in \cN, \|u\| \leq 1, \|v\| \leq 1 \right\}.
	\end{align*}
  \end{definition}
  Definition~\ref{def:angle} is used in condition~(S3) of Proposition~\ref{prop:bcf-rates}, which is equivalent to require that $\cM + \cN$ is closed \citep[see][Theorem~9.35]{deutsch2001best}. This assumption is often used in the additive models' literature \citep[e.g.,]{buja1989linear}.
  In the following, all norms are $L_2(P)$ unless specified.
\begin{proposition}\label{prop:bcf-rates}
  Let $(\Omega, \mathcal{F}, \mathbb{P})$ be the probability space where the random vector $(X, Y, Z,$ $U, V) \sim P$ and the independent datasets $\dataset_{1}$ and $\dataset_{2}$ are defined.
  Define
  $\mathcal{H} \coloneqq L_{2}(\mathcal{F})$ with inner product $\langle A, B \rangle = \Exp[AB]$ and define the closed subspaces
  $\mathcal{H}_{X} \coloneqq L_2(\sigma(X))$,
  $\mathcal{H}_V \coloneqq L_2(\sigma(V))$, and $\mathcal{H}_W \coloneqq L_2(\sigma(W))$, where 
  $W \coloneqq R^{\top}X$.
  Consider the BCF and its estimator
  \begin{align*}
	f_{\star}(x) = f_0(x) +\Exp[\gamma_0(V) \mid W = R^{\top}x],\quad\quad
	\hat{f}_{\star}(x) = \hat{f}_n(x) + \hat{m}_n(R^{\top}x),
  \end{align*}
  and assume that $f_{\star}$ is identifiable according to Definition~\ref{def:ident-cf}. 
  Furthermore, assume the following conditions.
  \begin{enumerate}
	\item[(S1)] Let $r_{n}$ be a sequence of strictly positive real numbers. The additive function $\hat{f}_n(x) + \hat{\gamma}_n(v)$ fitted on $\dataset_{1}$ by regressing $Y$ on $(X, V)$ satisfies
	  \begin{align*}
  \|\hat{f}_n(X)+\hat\gamma_n(V) - f_0(X) - \gamma_0(V)\| = O_{p}(r_{n}),
	  \end{align*}
		  that is, for every $\varepsilon > 0$ there exist $M_1 > 0$ and $N_1 > 0$ such that for all $n > N_{1}$ it holds
		  \begin{align*}
		  \P \left( \|\hat{f}_n(X)+\hat\gamma_n(V) - f_0(X) - \gamma_0(V)\| > M_{1}r_{n}\right) < \varepsilon.
		  \end{align*}

	\item[(S2)] Let $s_{n}$ be a sequence of strictly positive real numbers. Conditional on
          $\cF_{1}:=\sigma(\dataset_{1})$,
          the regression function $\hat{m}_n\colon \R^{p-q}\to\R$ fitted on $\dataset_{2}$ satisfies
		  \begin{align*}
			\|\hat m_n - m_n\|
			= O_p(s_n),
		  \end{align*}
		  that is, for every $\varepsilon > 0$ there exist $M_2 > 0$ and $N_2 > 0$ not depending on $\cF_1$ such that for all $n > N_{2}$ it holds
		  \begin{align*}
			\mathbb{P}\left( \|\hat m_n - m_n\| > M_2 s_{n} \mid \cF_1 \right) < \varepsilon.
		  \end{align*}

	\item[(S3)] (\emph{Angle condition}). The angle $\theta \in [0, \pi/2]$ between $\mathcal{H}_{X}$ and $\mathcal{H}_{V} \cap \mathcal{H}_{W}^{\perp}$ as in Definition~\ref{def:angle} is such that $\theta > 0$.

\end{enumerate}
Then
\[
\|\hat f_\star - f_\star\| = O_p\big(r_n + s_n\big).
\]

\end{proposition}

\begin{proof}
  Since 
  the BCF $f_\star$ is identifiable,
  by Lemma~\ref{lem:ident-concurv} this implies $\mathcal{H}_{X} \cap (\mathcal{H}_V\cap\mathcal{H}_W^{\perp}) = \{0\}$.
  The trivial intersection between two subspaces is called ``no-concurvity'' in the additive models' literature \citep{buja1989linear} and plays the same role as the absence of multicollinearity in linear regression.
  Together with the angle condition~(S3), it will help us to upper bound the rate of the BCF estimator with the bounds of the additive model and the regression estimator $\hat{m}_{n}$.

  We have the following quantities.
  \begin{enumerate}
	\item[(Q1)] $\hat f_n, \hat \gamma_{n}$ are fitted on $\dataset_{1}$,
	\item[(Q2)] $\hat m_n$ is fitted on $\dataset_{2}$,
	\item[(Q3)] $m_{n}(W) = \Exp[\hat{\gamma}_n(V) \mid W]$,
	\item[(Q4)] $\tilde{\gamma}_n(V) \coloneqq \hat{\gamma}_n(V) - \hat{m}_n(W)$,
	\item[(Q5)] $\gamma^{\perp}(V) \coloneqq \tilde{\gamma}_n(V) - \Exp[\tilde{\gamma}_{n}(V) \mid W]$.
  \end{enumerate}
  Condition on $\cF_{12} \coloneqq \sigma(\dataset_1, \dataset_2)$.  We have that $\Exp[\tilde{\gamma}_n(V) \mid W] = m_n(W) - \hat{m}_n(W)$,  and therefore,
  \begin{align}
	\label{eq:bound-1}
	\gamma^{\perp}(V) - \tilde{\gamma}_n(V) = -\Exp[\tilde{\gamma}_n(V) \mid W] = \hat{m}_n(W) -m_n(W).
  \end{align}
  Define $\gamma_{\star}(V) \coloneqq \gamma_{0}(V) - \Exp[\gamma_{0}(V) \mid W]$ so that $f_{0}(X) + \gamma_0(V) = f_{\star}(X) + \gamma_{\star}(V)$.
  We want to bound
  \begin{align}
	\label{eq:bound-2}
  \hat{f}_{\star}(X) - f_{\star}(X) = \hat{f}_n(X) + \hat{m}_n(W) - f_{\star}(X).
  \end{align}
  Note that
  \begin{align}
	\begin{split}
	\label{eq:sn-def}
	S_n
	\coloneqq&\ \hat{f}_n(X) + \hat{\gamma}_n(V) - f_0(X) - \gamma_0(V)\\
	=&\ \hat{f}_n(X) + \hat{\gamma}_n(V) - f_{\star}(X) - \gamma_{\star}(V)\\
	\stackrel{(Q4)}{=}&\ \hat{f}_{n}(X) + \hat{m}_n(W) + \tilde{\gamma}_n(V) - f_{\star}(X) - \gamma_{\star}(V)\\
	=&\ \hat{f}_{n}(X) + m_n(W)  - f_{\star}(X) + \gamma^{\perp}(V)- \gamma_{\star}(V)\\
			 &+ \hat{m}_n(W) - m_n(W) + \tilde{\gamma}_n(V)  - \gamma^{\perp}(V)\\
	\stackrel{\clubsuit}{=}&\ \hat{f}_{n}(X) + m_n(W)  - f_{\star}(X) + \gamma^{\perp}(V)- \gamma_{\star}(V),
  \end{split}
  \end{align}
  where in $\clubsuit$ we used~\eqref{eq:bound-1}.
  Now, define $Q \coloneqq \hat{f}_{n}(X) + m_n(W)  - f_{\star}(X)$ and $T\coloneqq \gamma^{\perp}(V)- \gamma_{\star}(V)$, so that $S_n = Q + T$.
  Conditional on $\cF_{12}$, $Q$ is a measurable function of $X$ and so $Q \in \mathcal{H}_{X}$. Also, $T$ is a measurable function of $V$ with $\Exp[T \mid W] = 0$ so $T \in \mathcal{H}_{V}\cap\mathcal{H}_{W}^{\perp}$.
  Since  $\mathcal{H}_{X} \cap (\mathcal{H}_V\cap\mathcal{H}_W^{\perp}) = \{0\}$,  using Lemma~\ref{lem:hilbert-angle} and the fact that $\theta > 0$, by (S3) we get that
  \begin{align}
	\label{eq:hilb-angle}
  \|Q\| \leq \frac{\|S_n\|}{\sin(\theta)}.
  \end{align}
  Also,
  from~\eqref{eq:bound-2}, it holds
\begin{align}
  \label{eq:bound-4}
  \|\hat f_{\star}(X) - f_{\star}(X)\| \leq \|Q\| + \|\hat m_n(W) - m_n(W)\|.
\end{align}
Combining~\eqref{eq:hilb-angle} and~\eqref{eq:bound-4}, conditional on $\cF_{12}$, it holds that
\begin{align}\label{eq:bound-5}
  \|\hat f_{\star}(X) - f_{\star}(X)\| \leq \frac{\|S_{n}\|
  }{\sin(\theta)} + \|\hat m_n(W) - m_n(W)\|.
\end{align}
Fix $\varepsilon > 0$.  For $\tilde\varepsilon \coloneqq \varepsilon / 2$, there exist $M_1, N_1 > 0$ and $M_2, N_2 > 0$ such that~(S1) and~(S2) hold with $\tilde\varepsilon$, respectively.
Define $c_{\theta} \coloneqq 1/ \sin(\theta) > 0$ and fix $M \geq \max \{c_{\theta}M_{1}, M_{2}\}$.
Using~\eqref{eq:bound-5}, it holds that
\begin{align}\label{eq:bound-6}
  \begin{split}
  &\left\{ \|\hat{f}_{\star}(X) - f_{\star}(X)\| > M(r_n+s_n) \right\}\\
  &\quad\subseteq
  \left\{ \|S_n\|c_{\theta}+ \|\hat m_n(W) - m_n(W) \| > M(r_n+s_n) \right\}\\
  &\quad\subseteq
    \left\{  \|S_n\|c_{\theta} > M r_n \right\} \cup
    \left\{ \|\hat m_n(W)-m_n(W)\| > Ms_{n} \right\}\\
  &\quad\subseteq
    \left\{  \|S_n\| > M_1 r_n \right\} \cup
    \left\{ \|\hat m_n(W)-m_n(W)\| > M_2s_{n} \right\}.
  \end{split}
\end{align}
Conditional on $\cF_{12}$,  $\|S_n\|$ and $\|\hat m_n-m_n\|$ are deterministic quantities, and therefore using~\eqref{eq:bound-6} we get
\begin{align*}
  \P &\left( \|\hat f_{\star}(X) - f_{\star}(X)\| > M(r_n+s_n) \mid \cF_{12} \right)\\
  &\leq \mathbf{1} \left\{ \|S_{n}\| > M_1r_{n} \right\} + \mathbf{1} \left\{ \|\hat m_n - m_n\|  > M_2s_{n}\right\}.
\end{align*}
Using the law of iterated expectations twice we get, for all $n > \max\{N_{1}, N_{2}\}$,
\begin{align*}
  \P &\left( \|\hat f_{\star}(X) - f_{\star}(X)\| > M(r_n+s_n) \right)\\
	 &\leq \P \left( \|S_{n}\|> M_{1}r_{n} \right)
	   + \Exp\left[\P \left( \|\hat m_n - m_n\| > M_{2}s_{n} \mid \cF_{1}\right)\right]\leq \varepsilon.
\end{align*}
Since $\varepsilon > 0$ was arbitrary, this yields
\begin{align*}
\|\hat f_{\star}(X) - f_{\star}(X)\| = O_{p}(r_n+s_n).
\end{align*}
 \end{proof}

 \begin{lemma}\label{lem:ident-concurv}
   Let $\mathcal{H} = L_{2}(\mathcal{F})$ with inner product $\langle A, B \rangle = \Exp[AB]$, and consider the closed subspaces
  $\mathcal{H}_{X} = L_2(\sigma(X))$,
  $\mathcal{H}_V = L_2(\sigma(V))$, and $\mathcal{H}_W = L_2(\sigma(W))$.
  Suppose $f_{\star}$ is identifiable according to Definition~\ref{def:ident-cf}. Then, it holds that $\mathcal{H}_{X} \cap (\mathcal{H}_V\cap\mathcal{H}_W^{\perp}) = \{0\}$.
 \end{lemma}

  \begin{proof}
	Fix $\ell \in \mathcal{H}_{X} \cap (\mathcal{H}_V\cap\mathcal{H}_W^{\perp})$. Then, there exist $f(X) \in \mathcal{H}_{X}$ and $\gamma(V) \in \mathcal{H}_{V}\cap\mathcal{H}_W^{\perp}$ such that $\ell = f(X) = \gamma(V)$, $P_{\train}$-a.s., and therefore $f(X) - \gamma(V) = 0$,  $P_{\train}$-a.s..
	From Proposition~\ref{prop:ident-cf}, there exists a $\delta$ such that $f(X) = \delta(R^{\top}X) = \delta(W)$, $P_{\train}$-a.s..
	Conditioning on $W$ and using the fact that $f(X) - \gamma(V) = 0$, $P_{\train}$-a.s., we get
	\begin{align*}
	  0 = \Exp[f(X) - \gamma(V) \mid W]= \delta(W) - \Exp[\gamma(V) \mid W]
	  =  \delta(W) - 0 = \delta(W), \quad P_{\train}\text{-a.s.,}
	\end{align*}
	because $\Exp[\gamma(V) \mid W] = 0$ since $\gamma(V) \in \mathcal{H}_W^{\perp}$.
	Therefore, $\ell = f(X) = \gamma(V) = 0$ , $P_{\train}$-a.s. Since $\ell \in \mathcal{H}_{X} \cap (\mathcal{H}_V\cap\mathcal{H}_W^{\perp})$ was arbitrary, we conclude that $\mathcal{H}_{X} \cap (\mathcal{H}_V\cap\mathcal{H}_W^{\perp}) = \{0\}$.

  \end{proof}

  \begin{lemma}\label{lem:hilbert-angle}
	Let $\mathcal{H}$ be a Hilbert space and $\cM, \cN \subseteq \mathcal{H}$ two closed subspaces such that $\cM \cap \cN = \{0\}$.
	Let $\theta$ denote the angle between $\cM$ and $\cN$ as in Definition~\ref{def:angle}.
	Then, for all $u \in \cM$ and $v \in \cN$ it holds that
	\begin{align*}
	\|u\| \sin(\theta) \leq \|u + v\|.
	\end{align*}
  \end{lemma}
  \begin{proof}
	Fix $u \in \cM$ and $v \in \cN$ and note that by Definition~\ref{def:angle} it holds
	\begin{align}
	  \label{eq:uv-cos}
	  |\langle u, v \rangle | \leq \cos(\theta) \|u\| \|v\|.
	\end{align}
	Also note that
	\begin{align}
	  \begin{split}
		\label{eq:u+v}
		\|u+v\|^2
		= &\ \|u\|^2+\|v\|^2 + 2 \langle u, v \rangle\\
		\geq&\ \|u\|^2+\|v\|^2 - 2 |\langle u, v \rangle|\\
		\stackrel{\clubsuit}{\geq}&\ \|u\|^2+\|v\|^2 - 2 \cos(\theta)\|u\| \|v\|,
	  \end{split}
	\end{align}
	where in $\clubsuit$ we used~\eqref{eq:uv-cos}. The right-hand side of~\eqref{eq:u+v} is a quadratic function in $\|v\|$, which is minimized at $t^{*} = \cos(\theta) \|u\|$, therefore,
   \begin{align}
   \label{eq:u+v-ii}
	 \|u+v\|^2
	 \geq&\ \|u\|^2+\cos^2(\theta)\|u\|^2- 2\cos^2(\theta)\|u\|^2\\
	 = &\ \|u\|^2 \sin^2(\theta),
   \end{align}
   from which it follows that $\|u\|\sin(\theta) \leq \|u+v\|$.
  \end{proof}

\section{Comparison with Existing OOD Generalization Frameworks}
This section provides a discussion of existing families of methods for out-of-distribution (OOD) generalization, comparing their assumptions to those of BCF, and why they may fail under the types of distributional shifts considered in this work.

A first family of approaches addresses the problem of OOD generalization by optimizing the worst-case risk over a neighborhood of the training distribution.
In distributionally robust optimization (DRO) the robustness set is typically defined as a ball around the empirical training distribution with respect to a probability distance, such as the Wasserstein metric \citep{sinha2017certifying} or an $f$-divergence \citep{bagnell2005robust, hu2018does}.
Although such methods provide formal guarantees, they can be overly conservative. To address this, \citet{sagawa2019distributionally} proposed GroupDRO, which defines the robustness set as the convex hull of finitely many training distributions and minimizes the worst-case empirical risk across them.
As discussed in \citet{shen2023drig}, such models cannot guarantee robustness on perturbations outside the training support and do not provide a clear geometric characterization of the robustness set. For this reason, their connection to our setting remains unclear.

A second family of methods seeks predictors that satisfy a notion of invariance across different environments $Z$. An established approach is to find a representation  $x \mapsto \Phi(x)$ such that $\mathbb{P}[Y \mid \Phi(X) = \Phi(x)]$ remains invariant across perturbations in $Z$ \citep{magliacane2018domain, rojas2018invariant, krueger21a, arjovsky2020invariant}.
In our framework, these approaches may fail in the presence of hidden confounding between $X$ and $Y$, as we show in the next example.
For simplicity, we restrict our attention to linear representations  $x\mapsto \Phi(x) \coloneqq a + b x$, for $a, b \in \R$.
Consider the linear Gaussian structural causal model $X \coloneqq Z + V$, $Y \coloneqq X + U$,  where $(U, V) \sim N(0, \Sigma)$, and $Z \ind (U, V)$, and assume that the shifts are generated by arbitrary perturbations of the marginal distribution of $Z$.
We now show that for all $a, b \in \mathbb{R}$, there exists no invariant $\Phi$.
When $b = 0$, the representation $\Phi(x) \coloneqq a$, for $a \in \mathbb{R}$, is not invariant since interventions on $Z$ change the marginal distribution of $Y$.
When $b \neq 0$,
\begin{align*}
  \Exp[Y \mid \Phi(X) = x]
  = &\ \Exp\left[Y \mid X = \frac{x-a}{b} \right]
  = \frac{x-a}{b} + \Exp\left[U \mid X =\frac{x-a}{b} \right]\\
  = &\ \left(1 + \frac{\mathrm{Cov}(U, X)}{\mathrm{Var}(X)}\right) \frac{x-a}{b}
\end{align*}
is not invariant either, since it depends on $\mathrm{Var}(X) = \mathrm{Var}(Z) + \mathrm{Var}(V)$.

A distinct notion of invariance is the counterfactual invariance of~\citet{veitch2021inv}. In their definition, a predictor $f$ is counterfactually invariant to $Z$ if $f(X(z)) = f(X(z'))$ for all $z, z'$. This definition expressed in the potential outcome notation,  applies to both the causal and anticausal setting, including cases where $Z$ and $Y$ are confounded.
Within our SIMDG defined in~\eqref{eq:sem}--\eqref{eq:sem-3}, a counterfactually invariant predictor corresponds to a function of the form $f(X) = \delta(R^\top X, X^\top e)$, where $e_j=1$ if $X_j$ is not affected by $Z$ and $e_j =0$ otherwise.
If the structural function $f_0$ depends on components of $X$ affected by $Z$, then (i) our BCF predictor is not counterfactually invariant in the sense of~\citet{veitch2021inv}, and (ii) any counterfactually invariant predictor does not satisfy our invariance notion (see Definition~\ref{def:inv_funs}).

Several above approaches, including \citep{rojas2018invariant, arjovsky2020invariant, krueger21a,puli2021ood, veitch2021inv} can handle anti-causal settings, where the target $Y$ causes some of the observed covariates $X$.
In general, our BCF method does not handle anti-causal setting of the form
  \begin{align}\label{eq:anticausal}
  \begin{split}
    X =&\ M_0Z + V\\
    Y =&\ f_0(X) + U\\
    \tilde{X} =&\ g_0(Y) + \tilde{M}_0Z+ \tilde{V},
  \end{split}
  \end{align}
  because $Z$ influences $\tilde{X}$ via the composition $g_0(f_0(M_0 \cdot + v) + u)$, which may be nonlinear. Indeed, when $g_0$ and $f_0$ are nonlinear, \cite{christiansen2020causal} shows that generalization is impossible under arbitrary interventions on $Z$. However, if the composition  $g_0(f_0(M_0 \cdot + v) + u)$ happens to be linear, then our framework still applies since we can rewrite~\eqref{eq:anticausal} as a SIMDG  as defined in Definition~\ref{def:ivg}.

\section{Simulated Trees for Experiment 1 (Section~\ref{sec:experiment-1})}\label{app:experiment-1}
The function $f_0: \R^p \to \R$ is a
decision tree depending on the first $p_\text{eff} < p$
predictors.
For any $x \in \R^p$ it is defined by
\begin{align*}
	f_0(x) := \sum_{h = 1}^{2^d}\theta_h \mathbf{1}\{x \in t_h\},
\end{align*}
where $\theta_h$ denote the constant values over the rectangular
regions $t_h \subseteq \R^p$. We sample the constant values
independently according to
$\theta \sim N(0, 1.5^2)$. We build the rectangular regions $t_h$
recursively; for each $t_h$, we uniformly sample a predictor
$j \in \{1, \dots, p_\text{eff}\}$, where $p_\text{eff} < p$ denotes the number of effective predictors, and randomly choose the split point
$s_j \sim \mathrm{U}([-2, 2])$ to obtain two children
regions $t_{h, 0}$ and $t_{h, 1}$.  We set the number of effective sample predictors to $p_\text{eff} = 3$ and the tree-depth to $d = 3$.

\section{Fitting BCF with Neural Networks}\label{app:neural-nets}
In Section~\ref{sec:bcf-est}, we described how to estimate the BCF with nonparametric estimators. In particular, we introduced the ControlTwicing algorithm to estimate the conditional expectation $$\Exp_{P_\train}[Y \mid X = x, V = v] = f_0(x) + \gamma_0(v)$$
via nonparametric regression methods for both $f_0$ and $\gamma_0$. 
Here, we show how to estimate the BCF using neural networks.
The first step--which estimates the matrix $M_0 \in \R^{p \times q}$, its left null space matrix $R \in \R^{p \times (p-q)}$, and the control variables $V \in \R^p$--remains unchanged. The second and third steps are modified as follows. Given a sample of $n$ observations $(X_1, Y_1, V_1), \dots, (X_n, Y_n, V_n)$:
\begin{enumerate}
  \item  Estimate the conditional expectation $\Exp_{P_{\train}}[Y \mid X, V] = f_0(X) + \gamma_0(V)$ 
  by solving
  \begin{align}\label{eq:opt-nn}
  (\theta_1^*, \theta_2^*) \in \argmin_{\theta_1, \theta_2} \frac{1}{n} \sum_{i=1}^{n} (Y_i - f_{\theta_1}(X_i) - \gamma_{\theta_2}(V_i))^2,
  \end{align}  
  where $f_{\theta_1}$ and $\gamma_{\theta_2}$ are neural networks parametrized by $\theta_1$ and $\theta_2$, respectively.
  \item Estimate the conditional expectation $\Exp_{P_{\train}}[Y - f_{\theta_1^*}(X)\mid R^\top X]$ by solving
  \begin{align*}
  \theta_3^* \in \argmin_{\theta_3} \frac{1}{n} \sum_{i=1}^{n} (Y_i - f_{\theta_1}(X_i) - \delta_{\theta_3}(V_i))^2,
  \end{align*}  
  where $\delta_{\theta_3}$ is a neural network, parametrized by $\theta_3$.
\end{enumerate}
The two optimization problems can be solved with classical stochastic gradient descent algorithms \citep{cauchy1847methode, robbins1951stochastic} or adaptive optimizers, e.g., \citet{kingma2014adam, loshchilov2018decoupled}.
A ready-to-use implementation of BCF with neural networks is provided  in our repository \url{https://github.com/nicolagnecco/bcf-numerical-experiments} via the class \texttt{BCFMLP}.

\section{Additional Numerical Experiments}\label{app:num-exps}

\subsection{Data-Generating Process}\label{app:dgp}
  We consider a data-generating process similar to that in~\citet{saengkyongam2022exploiting}, defined as
  \begin{align}\label{eq:scm-nonlinear}
  \begin{split}
    X_1 =&\ Z_k + V_1 + \varepsilon_1,\\
    X_2 =&\ V_2 +\varepsilon_2,\\
    \quad Y =&\ f_{0}(X_1, X_2) + \gamma_0(V_1, V_2) + \varepsilon_Y,
  \end{split}
  \end{align}
  where $\varepsilon_1, \varepsilon_2, \varepsilon_Y \stackrel{iid}{\sim}N(0, 0.1)$, $V_1, V_2 \stackrel{iid}{\sim}N(0, 1)$, and $(Z_k, V_1, V_2, \varepsilon_1, \varepsilon_2, \varepsilon_Y)$ are mutually independent for $k \in (0, 4)$.
  We define two versions of the  exogenous variable $Z_k$, namely
  \begin{align}\label{eq:zk}
    Z_k \coloneqq
    \begin{cases}
      B_k U_{k, 4} + (1 - B_k) U_{0, k}, & \text{if continuous,}\\
      k B_2, & \text{if discrete,}
    \end{cases}
  \end{align}
   where $B_k \sim \mathop{Bern}(k/4)$ and $U_{a, b} \sim \mathop{Unif}(a, b)$ for $a, b \in \R$.
   Define the mixture of radial basis functions 
  \begin{align}\label{eq:sample-f0}
  \varphi(x)\coloneqq \sum_{i=1}^{10} w_{j} \exp\left(\frac{- \left\lVert x - c_{j}\right\rVert}{3}\right)^{2}, \ x \in \mathbb{R}^2,\end{align}
  where $c_{j} \sim Unif[-5, 5]$ and  $w_{j} \sim N(0, 4)$. 
  The structural function $f_0$ is defined by $f_0(x) \coloneqq \varphi(x)$, and the control function is defined by
  \begin{align}\label{eq:gamma0-exp}
    \gamma_0(v_1, v_2) \coloneqq
    \begin{cases}
      v_1 + v_2,& \text{if linear,}\\
      \alpha (\varphi(v_1, v_2) - \beta), & \text{if nonlinear,}
    \end{cases}
  \end{align}
  where $\alpha > 0$ and $\beta \in \R$ are such that $\Exp_{P_\train}[\gamma_0(V_1, V_2)] = 0$ and $\Exp_{P_\train}[\gamma_0(V_1, V_2)^2] = 2$.

  The parameter $k$ controls the strength of the perturbation on $Z_k$. The direction of the perturbations is spanned by $M_0 = (1, 0)^\top$ and the invariant space by $R = (0, 1)^\top$. Each $k \in (0, 4)$ induces a distribution $Q_k$ over $Z_k$ which in turns induces a distribution $P_k^{(X, Y)}$ over $(X, Y) \in \R^{(2 + 1)}$, via~\eqref{eq:scm-nonlinear}.
  
\subsection{Method Configurations}\label{app:method-config}
 In the experiments of Sections~\ref{app:robustness}, \ref{app:regularization}, and~\ref{app:identifiability} we consider two implementations of the BCF estimator, BCF-MLP and BCF-XGB. Both share the same logic as described in Algorithm~\ref{alg:bcf} and differ only in the choice of base regressors to estimate $f_0$, $\gamma_0$, and $\delta_0 \coloneqq \Exp_{P_{\train}}(\gamma_0(V) \mid R^\top X)$.
 
 \paragraph{BCF-MLP.} Each component function is modelled by a one-hidden layer neural network with 64 hidden units and a sigmoid activation function.
 The parameters of $f_{\theta_1}$ and $\gamma_{\theta_2}$ are optimized for 1000 epochs with learning rate $10^{-3}$ and weight decay $2.5 \times 10^{-3}$. The parameters of $\delta_{\theta_3}$ are optimized for 1500 epochs with a smaller learning rate $10^{-4}$ and no weight decay. All optimizations use the AdamW optimizer \citep{loshchilov2018decoupled}, which decouples the learning rate from the weight decay.
 The larger number of epochs to train $\delta_{\theta_3}$ helps to capture as much of the invariant signal from Step~\ref{alg:line-4} of the BCF algorithm (Algorithm~\ref{alg:bcf}).
 A reference implementation is provided in our repository via the class \texttt{BCFMLP}.

 \paragraph{BCF-XGB.} Each component function is modelled by an XGBoost regressor \citep{tianqi2016} with learning rate of 0.05 for estimating $\hat{f}$ and $\hat{\gamma}$ and of  0.25 for estimating $\hat{\delta}$.
 The larger learning rate for $\hat{\delta}$ helps to capture as much of the invariant signal from Step~\ref{alg:line-4} of the BCF algorithm (Algorithm~\ref{alg:bcf}).
 The number of passes in the ControlTwicing algorithm (see Algorithm~\ref{alg:controltwicing}) is set to $J = 75$ with an early stopping rule satisfied if
 \begin{align*}
  \frac{\norm{\hat{f}_{j+1}(\bfX) - \hat{f}_{j}(\bfX)}_2^2}{\norm{\bfY - \bar{\bfY}}_2^2} < 5 \times 10^{-3},
 \end{align*}
 where the $\hat{f}_{j}$ denotes the gradient boosted tree at iteration $j$.
 The large number of passes, the early stopping rule, and the relatively low learning rate for $\hat{f}$ and $\hat{\gamma}$ help the ControlTwicing algorithm to learn the highly nonlinear $f_0$ defined in~\eqref{eq:sample-f0} while reducing the risk of overfitting.
 A reference implementation is provided in our repository via the class \texttt{BCF}.

 In addition to the BCF implementations, we consider the following baselines.

    \paragraph{CF-MLP.} This baseline implements the standard control function approach, where the component functions $f_0$ and $\gamma_0$ are modelled by a one-hidden layer neural network with 64~hidden units and sigmoid activation function.
    The method returns the learned function $f_{\theta_1}$.
    The configuration is identical to BCF-MLP, except that the step to estimate $\delta_{\theta_3}$ is omitted.
    A reference implementation is provided in our repository via the class \texttt{BCFMLP} with parameter \texttt{predict\_imp=False}.

 \paragraph{CF-XGB.} This baseline implements the standard control function approach, where the component functions $f_0$ and $\gamma_0$ are modelled by XGBoost regressors~\citep{tianqi2016}. 
 The method returns the learned function $\hat{f}$.
 The configuration is identical to BCF-XGB, except that the step to estimate $\hat{\delta}$ is omitted.
 A reference implementation is provided in our repository via the class \texttt{BCF} with parameter \texttt{predict\_imp=False}.

  \paragraph{LS-MLP.} Implementation of the least squares regression estimator using a one-hidden layer neural network with 64 hidden units and sigmoid activation function.
  The parameters of the network are optimized for 1000 epochs with a learning rate $10^{-3}$ and a weight decay $2.5 \times 10^{-3}$ using the AdamW optimizer \citep{loshchilov2018decoupled}.
  A reference implementation is provided in our repository via the class \texttt{OLS-MLP}.

 \paragraph{LS-XGB.} Implementation of the least squares regression estimator using an XGBoost regressor \citep{tianqi2016} with learning rate of 0.05.
 A reference implementation is provided in our repository via the class \texttt{OLS}.

\subsection{Experiment on the Efficiency and Robustness of BCF}\label{app:robustness}
This experiment studies how the performance of the BCF estimator evolves with the training sample size $n$ and with perturbations $k$ in the exogenous variable~$Z_k$.
Varying~$n$ tests the estimator’s sample efficiency, namely its ability to achieve good predictive accuracy with limited data. Increasing~$k$ measures the estimator's robustness, that is, its stability across shifts in the distribution of~$Z$.

We use the data-generating process described in Section~\ref{app:dgp} with continuous~$Z_k$ and linear control function $\gamma_0(v_1,v_2)=v_1+v_2$.
For each training sample size $n \in \{500, 1000, 2500\}$, we repeat the following procedure ten times. 
\begin{enumerate}
  \item Draw a random realization of the structural function $f_0 = \varphi(x)$ as in~\eqref{eq:sample-f0}.
  \item Generate $n$ training observations $\{(X_{i}, Y_{i}, Z_{ki})\}_{i=1}^{n}$ according to~\eqref{eq:scm-nonlinear} with $k = 0.5$.
  \item Estimate the BCF using the BCF-XGB and the BCF-MLP configurations described in Section~\ref{app:method-config}.
  \item As oracle baselines, consider (i) the structural function $f_0$, and (ii) the IMP function $f_\cf(x) \coloneqq f_0(x) + \Exp_{P_\train}[\gamma_0(V1, V2) \mid R^\top X = x]$, where $R = (0, 1)^\top$. 
  \item For each perturbation strength $k \in \{0.5, 1, 1.5, \dots, 3.5, 3.99\}$, generate $n_\test = 1000$ test samples $\{(X_{i}, Y_{i})\}_{i=1}^{n_\test}$ from~\eqref{eq:scm-nonlinear} with the same $f_0$ and $\gamma_0$ and denote by $P_{k}^{(X, Y)}$ the resulting distribution of $(X, Y)$.
  \item Evaluate each method by its mean squared error (MSE) under $P_{k}^{(X, Y)}$.
\end{enumerate} 
Figure~\ref{fig:robustness-exp} displays the test MSE of each method as a function of the perturbation strength~$k$, for training sample sizes $n \in \{500, 1000, 2500\}$.
The solid lines show the average MSE across ten repetitions, while the shaded ribbons indicate the pointwise minimum and maximum MSE attained across runs.
The dotted and dashed lines correspond to the oracle risks of the structural~$f_0$ and the IMP~$f_\cf$, respectively.

We assess sample efficiency by looking at how the mean and variability of the MSE evolve with~$n$ for fixed $k$. As $n$ increases, the shaded ribbons shrink and the mean curves move closer to the population baseline, indicating reduced variance and estimation error.
In this setting, the BCF-MLP converges slightly faster toward the oracle than BCF-XGB.
We assess robustness by looking at how the average MSE evolves with $k$ for fixed $n$. For smaller sample sizes ($n=500$ and $n =1000$), BCF-MLP shows greater robustness than BCF-XGB. For $n =2500$, both estimators show nearly constant MSE across perturbation strengths, approaching the invariant risk of $f_{\cf}$.
\begin{figure}[h]
  \centering
  \includegraphics[width=\textwidth]{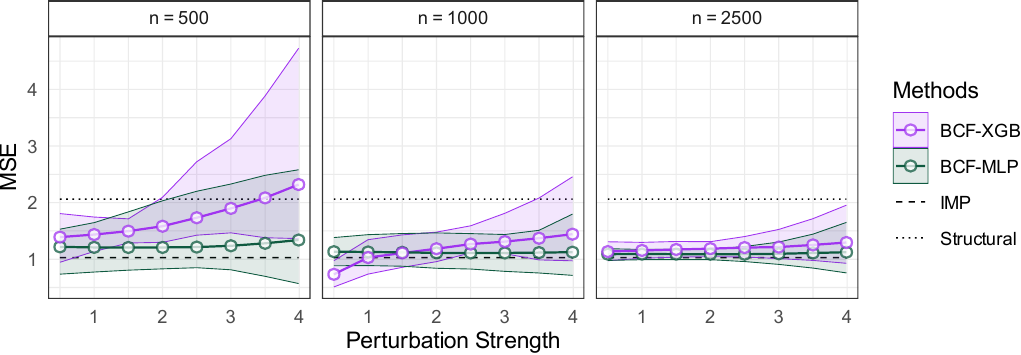}
  \caption{Test mean squared error (MSE) of different estimators as a function of the perturbation strength~$k$.
  Solid lines show the average MSE over ten repetitions; shaded ribbons indicate the pointwise minimum and maximum MSE across runs.
  Dotted and dashed lines correspond to the oracle predictors $f_0$ and $f_\cf$.
  As $n$ increases, both methods show a reduced variability in the MSE and their average MSEs move closer to the population baseline. As $k$ increases, BCF-MLP shows better robustness than BCF-XGB for small to medium sample sizes. For large $n$, both estimators show nearly constant average MSE across perturbation strengths.}
  \label{fig:robustness-exp}
\end{figure}

\subsection{Experiment on the Effect of Regularization on Control Function Method}\label{app:regularization}
Compared to nonparametric regression methods, such as random forests or boosted trees, neural networks  offer a direct way to regularize the weights $\theta$ of a predictive function $f_\theta$ through a weight decay factor~$\lambda > 0$. In this experiment, we regularize the neural networks' weights with the adaptive AdamW optimizer \citep{loshchilov2018decoupled}, which decouples the weight decay from the learning  rate.

Recall Step~\ref{alg:line-3} of the BCF algorithm, where we estimate the conditional expectation 
$$\Exp_{P_\train}[Y \mid X, V] = f_0(X) + \gamma_0(V).$$
As discussed in Example~\ref{ex:ident}, $f_0$ and $\gamma_0$ may not be identifiable from $P_\train$; there can exist infinitely many pairs $(f, \gamma)$ such that $\Exp_{P_\train}[Y \mid X, V] = f_0(X) + \gamma_0(V)$, $P_\train^{X, V}$-almost surely.
When estimating this conditional expectation by $f_{\theta_1} + \gamma_{\theta_2}$ via neural networks, the particular solution $(\theta_1^*, \theta_2^*)$ obtained during training depends on the capacity of the two estimators.
In this experiment, we study how regularizing $(\theta_1, \theta_2)$ influences the predictive performance and invariance of
$f_{\theta_1^*}$.

We consider the data-generating process described in Section~\ref{app:dgp} with continuous~$Z_k$ and linear control function $\gamma_0(v_1,v_2)=v_1+v_2$.
We repeat the following procedure ten times. 
\begin{enumerate}
  \item Draw a random realization of the structural function $f_0 = \varphi(x)$ as in~\eqref{eq:sample-f0}.
  \item Generate $n=1000$ training observations $\{(X_{i}, Y_{i}, Z_{ki})\}_{i=1}^{n}$ according to~\eqref{eq:scm-nonlinear} with $k = 0.5$.
  \item Estimate the BCF using the BCF-MLP configurations described in Section~\ref{app:method-config}.
  \item As baselines consider (i) the control function estimator CF-MLP described in Section~\ref{app:method-config}, with  weight decay parameters  $\lambda \in \{0.0025, 0.025, 0.25, 2.5\}$ for both~$f_{\theta_1}$ and~$\gamma_{\theta_2}$; (ii) the least squares estimator LS-MLP configured as in Section~\ref{app:method-config}; (iii) the structural function $f_0$; (iv) the IMP function $f_\cf(x) \coloneqq f_0(x) + \Exp_{P_\train}[\gamma_0(V1, V2) \mid R^\top X = x]$, where $R = (0, 1)^\top$.
  \item For each perturbation strength $k \in \{0.5, 1, 1.5, \dots, 3.5, 3.99\}$, generate $n_\test = 1000$ test samples $\{(X_{i}, Y_{i})\}_{i=1}^{n_\test}$ from~\eqref{eq:scm-nonlinear} with the same $f_0$ and $\gamma_0$ and denote by $P_{k}^{(X, Y)}$ the resulting distribution of $(X, Y)$.
  \item Evaluate each method by its mean squared error (MSE) under $P_{k}^{(X, Y)}$.
\end{enumerate} 
Figure~\ref{fig:regularization} shows the test MSE of each method as a function of the perturbation strength~$k$. 
The solid lines show the average MSE across ten repetitions, while the dotted and dashed lines correspond to the oracle risks of the structural~$f_0$ and the IMP~$f_\cf$, respectively.

In this specific setting, moderate regularization, $\lambda \in  \{0.0025, 0.025, 0.25\}$, stabilizes the control function estimator, yielding an approximately invariant MSE across different perturbation strengths. However, for strong regularization, $\lambda = 2.5$, the  invariance property degrades due to underfitting of $f_{\theta_1^*}$.
In the same setting, BCF-MLP trained with $\lambda = 0.0025$ achieves both low and invariant MSE across~$k$, approaching the oracle risk of the IMP~$f_\cf$.

\begin{figure}[t]
  \centering
  \includegraphics[scale=1]{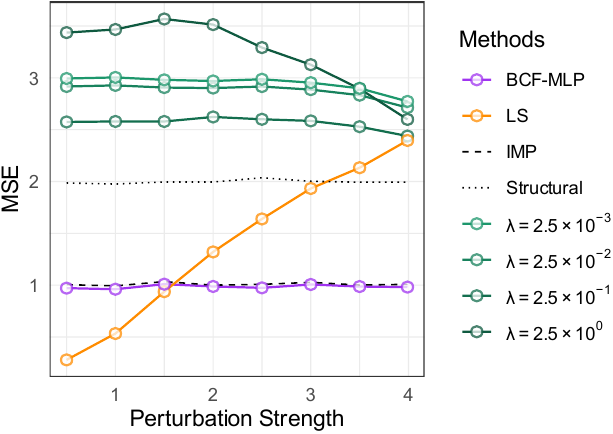}
  \caption{
	Test mean squared error (MSE) as a function of the perturbation strength~$k$ for different methods.
	Solid lines show the average MSE across ten repetitions.
	Dotted and dashed lines indicate the oracle predictors $f_0$ and~$f_\cf$.
	The green lines correspond to the standard control function estimator (CF-MLP) with weight decay parameter $\lambda$ applied to both $f_{\theta_1}$ and $\gamma_{\theta_2}$.
	Moderate regularization stabilizes CF-MLP, while strong regularization ($\lambda=2.5\times10^{0}$) leads to non-invariant MSE.
	BCF-MLP achieves a low and nearly invariant MSE across~$k$, close to the oracle IMP.}
  \label{fig:regularization}
\end{figure}

\subsection{Experiment on Identifiability Assumptions}\label{app:identifiability}
In this experiment, we study the performance of the BCF estimator in finite samples under the two identifiability assumptions discussed in Section~\ref{sec:ident} (Assumptions~\ref{ass:linear-cf} and~\ref{ass:diffble-cf}).
Assumption~\ref{ass:linear-cf} requires the control function $\gamma_0$ to be linear and the exogenous variable $Z_k$ to be discrete at training time, while Assumption~\ref{ass:diffble-cf} requires~$f_0$ and~$\gamma_0$ to be differentiable and~$Z_k$ to be continuous. For each training sample size $n \in \{1000, 5000\}$ and each of the two assumptions, we repeat the following experiment ten times.

\begin{enumerate}
  \item Draw a random realization of the structural function $f_0=\phi(x)$ as in~\eqref{eq:sample-f0}. Under Assumption~\ref{ass:linear-cf}, fix $\gamma_0(v_1, v_2) = v_1 + v_2$; otherwise draw a random realization $\gamma_0 = \alpha(\phi(v_1, v_2)- \beta)$ as in~\eqref{eq:gamma0-exp}.
  \item Generate $n=1000$ training observations $\{(X_{i}, Y_{i}, Z_{ki})\}_{i=1}^{n}$ according to~\eqref{eq:scm-nonlinear} with $k = 0.5$. Under Assumption~\ref{ass:linear-cf}, $Z_k$ is discrete; otherwise it is continuous as described in~\eqref{eq:zk}.
  \item Estimate the BCF using the BCF-XGB configurations described in Section~\ref{app:method-config}.
  \item As baselines, consider CF-XGB and LS-XGB as configured in Section~\ref{app:method-config}.
  As oracle  baselines, we consider (i) the structural function $f_0$, and (ii) the oracle-BCF function $\hat{f}_{\cf}(x)\coloneqq f_0(x) + \hat{\Exp}_{P_\train}[\gamma_0(V1, V2) \mid R^\top X = x]$,  where the second term is estimated with an XGBoost regressor, and $R = (0, 1)^\top$.
  \item For each perturbation strength $k \in \{0.5, 1, 1.5, \dots, 3.5, 3.99\}$, generate $n_\test = 1000$ test samples $\{(X_{i}, Y_{i})\}_{i=1}^{n_\test}$ from~\eqref{eq:scm-nonlinear} with the same $f_0$ and $\gamma_0$ and $Z_k$ continuous as in~\eqref{eq:zk}. Denote by $P_{k}^{(X, Y)}$ the resulting distribution of $(X, Y)$.
  \item Evaluate each method by its mean squared error (MSE) under $P_{k}^{(X, Y)}$.
\end{enumerate}
Figure~\ref{fig:identifiability} shows the test MSE of each estimator as a function of the perturbation strength~$k$ for both assumptions  and sample sizes $n \in \{1000, 5000\}$.
The solid lines show the average MSE across ten repetitions.
In this numerical experiment, across all settings, BCF-XGB outperforms the standard control function estimator CF-XGB and achieves lower MSE than LS-XGB for moderate to large perturbation strengths $k > 2$. Under both identifiability assumptions, the MSE curve of BCF-XGB approaches its oracle counterpart as the training sample size increases.

\begin{figure}[t]
  \centering
  \includegraphics[width=.7\textwidth]{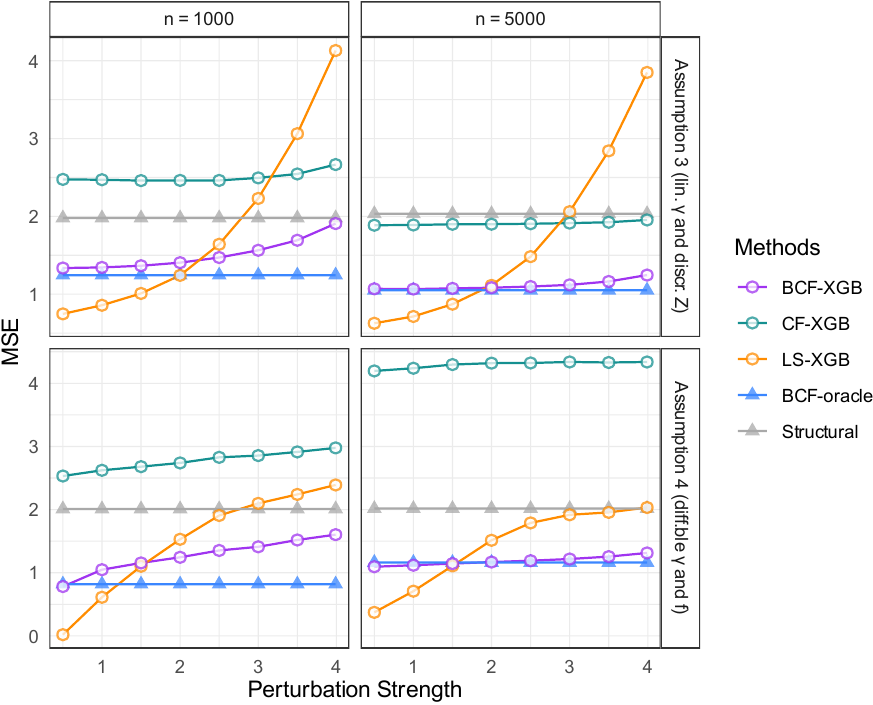}
  \caption{
	Test mean squared error (MSE) as a function of the perturbation strength~$k$ for different methods. Each panel shows different sample sizes and assumptions.
	Solid lines show the average MSE across ten repetitions.
	BCF-XGB outperforms CF-XGB under all settings and LS-XGB for moderate to large perturbation sizes. Under both assumptions, as $n$ increases, BCF-XGB approaches the performance of its oracle counterpart BCF-oracle.
	}
  \label{fig:identifiability}
\end{figure}

\section{Guidelines on When BCF Can Be Helpful in Practice}
In practice, using BCF can be helpful when (1)  unobserved variables $H$ influence both $X$ and $Y$, and (2) the practitioner has access to variables $Z$ that are (i) exogenous, and (ii) unavailable at test time.
The choice of an exogenous $Z$ typically relies on domain knowledge. The unavailability of $Z$ at test time can arise for practical reasons. For example, this can occur when $Z$ is categorical and new categories appear at test time, making $Z$ unusable as a covariate. In other cases, such as the housing datasets described in Section~\ref{sec:california-housing},  $Z$ is continuous and available at test time, but the conditional distribution of $Y \mid X, Z$ may not extrapolate to unseen values of $Z$.

Once such a variable $Z$ has been identified, a simple pipeline to motivate the use of BCF is as follows. First, split the training data according to the values of $Z$ (for categorical $Z$, use distinct categories; for continuous $Z$, create bins or buckets).
As a baseline, fit a least squares model, and evaluate whether its MSE remains stable across held-out splits. If it does not, this suggests that $Z$ might induce distribution shifts in $(X, Y)$, and applying BCF might be beneficial.
Finally, to heuristically assess BCF's robustness, verify that its MSE remains stable across different test splits.

\newpage
\bibliography{ref}

\begin{thebibliography}{63}
\providecommand{\natexlab}[1]{#1}
\providecommand{\url}[1]{\texttt{#1}}
\expandafter\ifx\csname urlstyle\endcsname\relax
  \providecommand{\doi}[1]{doi: #1}\else
  \providecommand{\doi}{doi: \begingroup \urlstyle{rm}\Url}\fi

\bibitem[Aldrich(1989)]{aldrich1989}
J.~Aldrich.
\newblock Autonomy.
\newblock \emph{Oxford Economic Papers}, 41\penalty0 (1):\penalty0 15--34, 1989.

\bibitem[Amemiya(1985)]{amemiya1985advanced}
T.~Amemiya.
\newblock \emph{Advanced Econometrics}.
\newblock Harvard University Press, 1985.

\bibitem[Angrist and Krueger(1991)]{angrist1991}
J.~D. Angrist and A.~B. Krueger.
\newblock Does compulsory school attendance affect schooling and earnings?
\newblock \emph{The Quarterly Journal of Economics}, 106\penalty0 (4):\penalty0 979--1014, 1991.

\bibitem[Angrist and Lavy(1999)]{angrist1999using}
J.~D. Angrist and V.~Lavy.
\newblock Using {Maimonides'} rule to estimate the effect of class size on scholastic achievement.
\newblock \emph{The Quarterly Journal of Economics}, 114\penalty0 (2):\penalty0 533--575, 1999.

\bibitem[Angrist et~al.(1996)Angrist, Imbens, and Rubin]{angrist1996identification}
J.~D. Angrist, G.~W. Imbens, and D.~B. Rubin.
\newblock Identification of causal effects using instrumental variables.
\newblock \emph{Journal of the American Statistical Association}, 91\penalty0 (434):\penalty0 444--455, 1996.

\bibitem[Arjovsky et~al.(2020)Arjovsky, Bottou, Gulrajani, and Lopez-Paz]{arjovsky2020invariant}
M.~Arjovsky, L.~Bottou, I.~Gulrajani, and D.~Lopez-Paz.
\newblock Invariant risk minimization.
\newblock \emph{arXiv preprint arXiv:1907.02893}, 2020.

\bibitem[Athey and Imbens(2016)]{atheyimbens2016}
S.~Athey and G.~Imbens.
\newblock Recursive partitioning for heterogeneous causal effects.
\newblock \emph{Proceedings of the National Academy of Sciences}, 113\penalty0 (27):\penalty0 7353--7360, 2016.

\bibitem[Bagnell(2005)]{bagnell2005robust}
J.~A. Bagnell.
\newblock Robust supervised learning.
\newblock In \emph{Proceedings of 20th National Conference on Artificial Intelligence ({AAAI})}, pages 714--719. American Association for Artifical Intelligence, 2005.

\bibitem[Bandi et~al.(2018)Bandi, Geessink, Manson, Van~Dijk, Balkenhol, Hermsen, Bejnordi, Lee, Paeng, Zhong, et~al.]{bandi2018detection}
P.~Bandi, O.~Geessink, Q.~Manson, M.~Van~Dijk, M.~Balkenhol, M.~Hermsen, B.~E. Bejnordi, B.~Lee, K.~Paeng, A.~Zhong, et~al.
\newblock From detection of individual metastases to classification of lymph node status at the patient level: the camelyon17 challenge.
\newblock \emph{IEEE Transactions on Medical Imaging}, 2018.

\bibitem[Bongers et~al.(2021)Bongers, Forr{\'e}, Peters, and Mooij]{bongers2016structural}
S.~Bongers, P.~Forr{\'e}, J.~Peters, and J.~M. Mooij.
\newblock Foundations of structural causal models with cycles and latent variables.
\newblock \emph{The Annals of Statistics}, 49\penalty0 (5):\penalty0 2885--2915, 2021.

\bibitem[Breiman and Friedman(1985)]{breiman1985estimating}
L.~Breiman and J.~H. Friedman.
\newblock Estimating optimal transformations for multiple regression and correlation.
\newblock \emph{Journal of the American Statistical Association}, 80\penalty0 (391):\penalty0 580--598, 1985.

\bibitem[B{\"u}hlmann(2020)]{buhlmann2020invariance}
P.~B{\"u}hlmann.
\newblock Invariance, causality and robustness.
\newblock \emph{Statistical Science}, 35\penalty0 (3):\penalty0 404--426, 2020.

\bibitem[Buja et~al.(1989)Buja, Hastie, and Tibshirani]{buja1989linear}
A.~Buja, T.~Hastie, and R.~Tibshirani.
\newblock Linear smoothers and additive models.
\newblock \emph{The Annals of Statistics}, pages 453--510, 1989.

\bibitem[Bunea et~al.(2011)Bunea, She, and Wegkamp]{bunea2011}
F.~Bunea, Y.~She, and M.~H. Wegkamp.
\newblock Optimal selection of reduced rank estimators of high-dimensional matrices.
\newblock \emph{The Annals of Statistics}, 39\penalty0 (2):\penalty0 1282--1309, 2011.

\bibitem[Cauchy et~al.(1847)]{cauchy1847methode}
A.~Cauchy et~al.
\newblock M{\'e}thode g{\'e}n{\'e}rale pour la r{\'e}solution des systemes d’{\'e}quations simultan{\'e}es.
\newblock \emph{Comp. Rend. Sci. Paris}, 25\penalty0 (1847):\penalty0 536--538, 1847.

\bibitem[Chen and Guestrin(2016)]{tianqi2016}
T.~Chen and C.~Guestrin.
\newblock Xgboost: A scalable tree boosting system.
\newblock In \emph{Proceedings of the 22nd ACM SIGKDD International Conference on Knowledge Discovery and Data Mining}, page 785–794, 2016.

\bibitem[Cheng et~al.(2021)Cheng, Wei, and Chen]{cheng2021}
C.~Cheng, Y.~Wei, and Y.~Chen.
\newblock Tackling small eigen-gaps: fine-grained eigenvector estimation and inference under heteroscedastic noise.
\newblock \emph{IEEE Transactions on Information Theory}, 67\penalty0 (11):\penalty0 7380--7419, 2021.

\bibitem[Christgau and Hansen(2024)]{christgau2023}
A.~M. Christgau and N.~R. Hansen.
\newblock Efficient adjustment for complex covariates: Gaining efficiency with dope.
\newblock \emph{arXiv preprint arXiv:2402.12980}, 2024.

\bibitem[Christiansen et~al.(2022)Christiansen, Pfister, Jakobsen, Gnecco, and Peters]{christiansen2020causal}
R.~Christiansen, N.~Pfister, M.~Jakobsen, N.~Gnecco, and J.~Peters.
\newblock A causal framework for distribution generalization.
\newblock \emph{IEEE Transactions on Pattern Analysis \& Machine Intelligence}, 44\penalty0 (10):\penalty0 6614--6630, 2022.

\bibitem[Chung(2001)]{chung2001course}
K.~L. Chung.
\newblock \emph{A course in probability theory}.
\newblock Academic Press, 2001.

\bibitem[Deutsch(2001)]{deutsch2001best}
F.~Deutsch.
\newblock \emph{Best approximation in inner product spaces}, volume~7.
\newblock Springer, 2001.

\bibitem[Dunker(2021)]{dunker2021adaptive}
F.~Dunker.
\newblock Adaptive estimation for some nonparametric instrumental variable models with full independence.
\newblock \emph{Electronic Journal of Statistics}, 15\penalty0 (2):\penalty0 6151--6190, 2021.

\bibitem[Dunker et~al.(2014)Dunker, Florens, Hohage, Johannes, and Mammen]{dunker2014iterative}
F.~Dunker, J.-P. Florens, T.~Hohage, J.~Johannes, and E.~Mammen.
\newblock Iterative estimation of solutions to noisy nonlinear operator equations in nonparametric instrumental regression.
\newblock \emph{Journal of Econometrics}, 178:\penalty0 444--455, 2014.

\bibitem[Gama et~al.(2014)Gama, {\v{Z}}liobait{\.e}, Bifet, Pechenizkiy, and Bouchachia]{gama2014survey}
J.~Gama, I.~{\v{Z}}liobait{\.e}, A.~Bifet, M.~Pechenizkiy, and A.~Bouchachia.
\newblock A survey on concept drift adaptation.
\newblock \emph{ACM computing surveys (CSUR)}, 46\penalty0 (4):\penalty0 1--37, 2014.

\bibitem[Haavelmo(1944)]{haavelmo1944probability}
T.~Haavelmo.
\newblock The probability approach in econometrics.
\newblock \emph{Econometrica}, 12:\penalty0 iii--115, 1944.

\bibitem[Hill(2011)]{hill2011}
J.~L. Hill.
\newblock Bayesian nonparametric modeling for causal inference.
\newblock \emph{Journal of Computational and Graphical Statistics}, 20\penalty0 (1):\penalty0 217--240, 2011.

\bibitem[Hu et~al.(2018)Hu, Niu, Sato, and Sugiyama]{hu2018does}
W.~Hu, G.~Niu, I.~Sato, and M.~Sugiyama.
\newblock Does distributionally robust supervised learning give robust classifiers?
\newblock In \emph{International Conference on Machine Learning ({ICML})}, pages 2029--2037. PMLR, 2018.

\bibitem[Jakobsen and Peters(2022)]{jakobsen2022distributional}
M.~E. Jakobsen and J.~Peters.
\newblock Distributional robustness of {K}-class estimators and the {PULSE}.
\newblock \emph{The Econometrics Journal}, 25\penalty0 (2):\penalty0 404--432, 2022.

\bibitem[Kingma and Ba(2015)]{kingma2014adam}
D.~P. Kingma and J.~Ba.
\newblock Adam: {A} method for stochastic optimization.
\newblock \emph{International Conference on Learning Representations ({ICLR})}, 2015.

\bibitem[Kleinberg et~al.(2015)Kleinberg, Ludwig, Mullainathan, and Obermeyer]{kleinberg2015}
J.~Kleinberg, J.~Ludwig, S.~Mullainathan, and Z.~Obermeyer.
\newblock Prediction policy problems.
\newblock \emph{American Economic Review}, 105\penalty0 (5):\penalty0 491--95, 2015.

\bibitem[Krueger et~al.(2021)Krueger, Caballero, Jacobsen, Zhang, Binas, Zhang, Priol, and Courville]{krueger21a}
D.~Krueger, E.~Caballero, J.-H. Jacobsen, A.~Zhang, J.~Binas, D.~Zhang, R.~L. Priol, and A.~Courville.
\newblock Out-of-distribution generalization via risk extrapolation ({REx}).
\newblock In \emph{International Conference on Machine Learning ({ICML})}, pages 5815--5826, 2021.

\bibitem[Künzel et~al.(2019)Künzel, Sekhon, Bickel, and Yu]{kunzel2019}
S.~R. Künzel, J.~S. Sekhon, P.~J. Bickel, and B.~Yu.
\newblock Metalearners for estimating heterogeneous treatment effects using machine learning.
\newblock \emph{Proceedings of the National Academy of Sciences}, 116\penalty0 (10):\penalty0 4156--4165, 2019.

\bibitem[Loh(2023)]{loh2023}
I.~Loh.
\newblock Nonparametric identification and estimation with discrete instruments and regressors.
\newblock \emph{Journal of Econometrics}, 235\penalty0 (2):\penalty0 1257--1279, 2023.

\bibitem[Londschien et~al.(2025)Londschien, Burger, R{\"a}tsch, and B{\"u}hlmann]{londschien2025domain}
M.~Londschien, M.~Burger, G.~R{\"a}tsch, and P.~B{\"u}hlmann.
\newblock Domain generalization and adaptation in intensive care with anchor regression.
\newblock \emph{arXiv preprint arXiv:2507.21783}, 2025.

\bibitem[Loshchilov and Hutter(2019)]{loshchilov2018decoupled}
I.~Loshchilov and F.~Hutter.
\newblock Decoupled weight decay regularization.
\newblock In \emph{International Conference on Learning Representations}, 2019.

\bibitem[Magliacane et~al.(2018)Magliacane, Van~Ommen, Claassen, Bongers, Versteeg, and Mooij]{magliacane2018domain}
S.~Magliacane, T.~Van~Ommen, T.~Claassen, S.~Bongers, P.~Versteeg, and J.~M. Mooij.
\newblock Domain adaptation by using causal inference to predict invariant conditional distributions.
\newblock \emph{Advances in neural information processing systems ({NeurIPS})}, 31, 2018.

\bibitem[Mammen et~al.(2012)Mammen, Rothe, and Schienle]{mammen2012}
E.~Mammen, C.~Rothe, and M.~Schienle.
\newblock {Nonparametric regression with nonparametrically generated covariates}.
\newblock \emph{The Annals of Statistics}, 40\penalty0 (2):\penalty0 1132 -- 1170, 2012.

\bibitem[Meinshausen and Bühlmann(2015)]{meinshausen2015}
N.~Meinshausen and P.~Bühlmann.
\newblock Maximin effects in inhomogeneous large-scale data.
\newblock \emph{The Annals of Statistics}, 43\penalty0 (4):\penalty0 1801--1830, 2015.

\bibitem[Newey and Powell(2003)]{newey2003instrumental}
W.~K. Newey and J.~L. Powell.
\newblock Instrumental variable estimation of nonparametric models.
\newblock \emph{Econometrica}, 71\penalty0 (5):\penalty0 1565--1578, 2003.

\bibitem[Newey et~al.(1999)Newey, Powell, and Vella]{newey1999}
W.~K. Newey, J.~L. Powell, and F.~Vella.
\newblock Nonparametric estimation of triangular simultaneous equations models.
\newblock \emph{Econometrica}, 67\penalty0 (3):\penalty0 565--603, 1999.

\bibitem[Ng and Pinkse(1995)]{ng1995nonparametric}
S.~Ng and J.~Pinkse.
\newblock Nonparametric-two-step estimation of unknown regression functions when the regressors and the regression error are not independent.
\newblock \emph{Cahier de recherche}, 9551, 1995.

\bibitem[Pace and Barry(1997)]{pace1997sparse}
R.~K. Pace and R.~Barry.
\newblock Sparse spatial autoregressions.
\newblock \emph{Statistics \& Probability Letters}, 33\penalty0 (3):\penalty0 291--297, 1997.

\bibitem[Pearl(2009)]{pearl2009causality}
J.~Pearl.
\newblock \emph{Causality}.
\newblock Cambridge University Press, New York, USA, 2nd edition, 2009.

\bibitem[Peters et~al.(2016)Peters, B{\"u}hlmann, and Meinshausen]{peters2016causal}
J.~Peters, P.~B{\"u}hlmann, and N.~Meinshausen.
\newblock Causal inference by using invariant prediction: identification and confidence intervals.
\newblock \emph{Journal of the Royal Statistical Society: Series B (Statistical Methodology)}, 78\penalty0 (5):\penalty0 947--1012, 2016.

\bibitem[Pfister and Peters(2022)]{pfister22a}
N.~Pfister and J.~Peters.
\newblock Identifiability of sparse causal effects using instrumental variables.
\newblock In \emph{Proceedings of the Thirty-Eighth Conference on Uncertainty in Artificial Intelligence}, pages 1613--1622, 2022.

\bibitem[PRISM(2020)]{prism2020}
PRISM.
\newblock Prism climate data.
\newblock \url{https://prism.oregonstate.edu}, 2020.
\newblock Data created Dec 2022, accessed Sep 2023.

\bibitem[Puli et~al.(2021)Puli, Zhang, Oermann, and Ranganath]{puli2021ood}
A.~Puli, L.~H. Zhang, E.~K. Oermann, and R.~Ranganath.
\newblock Out-of-distribution generalization in the presence of nuisance-induced spurious correlations.
\newblock In \emph{International Conference on Learning Representations}, 2021.

\bibitem[Qui{\~n}onero-Candela et~al.(2009)Qui{\~n}onero-Candela, Sugiyama, Lawrence, and Schwaighofer]{quinonero2009dataset}
J.~Qui{\~n}onero-Candela, M.~Sugiyama, N.~D. Lawrence, and A.~Schwaighofer.
\newblock \emph{Dataset shift in machine learning}.
\newblock Mit Press, 2009.

\bibitem[Reinsel and Velu(1998)]{reinsel1998}
G.~C. Reinsel and R.~P. Velu.
\newblock \emph{Multivariate reduced-rank regression: Theory and Applications}, volume 136 of \emph{Lecture Notes in Statistics}.
\newblock Springer-Verlag, New York, 1998.

\bibitem[Robbins and Monro(1951)]{robbins1951stochastic}
H.~Robbins and S.~Monro.
\newblock A stochastic approximation method.
\newblock \emph{The Annals of Mathematical Statistics}, pages 400--407, 1951.

\bibitem[Rojas-Carulla et~al.(2018)Rojas-Carulla, Sch{\"o}lkopf, Turner, and Peters]{rojas2018invariant}
M.~Rojas-Carulla, B.~Sch{\"o}lkopf, R.~Turner, and J.~Peters.
\newblock Invariant models for causal transfer learning.
\newblock \emph{The Journal of Machine Learning Research}, 19\penalty0 (1):\penalty0 1309--1342, 2018.

\bibitem[Rothenh{\"a}usler et~al.(2021)Rothenh{\"a}usler, Meinshausen, B{\"u}hlmann, and Peters]{rothenhausler2021anchor}
D.~Rothenh{\"a}usler, N.~Meinshausen, P.~B{\"u}hlmann, and J.~Peters.
\newblock Anchor regression: Heterogeneous data meet causality.
\newblock \emph{Journal of the Royal Statistical Society: Series B (Statistical Methodology)}, 83\penalty0 (2):\penalty0 215--246, 2021.

\bibitem[Saengkyongam et~al.(2022)Saengkyongam, Henckel, Pfister, and Peters]{saengkyongam2022exploiting}
S.~Saengkyongam, L.~Henckel, N.~Pfister, and J.~Peters.
\newblock Exploiting independent instruments: Identification and distribution generalization.
\newblock In \emph{International Conference on Machine Learning ({ICML})}, 2022.

\bibitem[Sagawa et~al.(2020)Sagawa, Koh, Hashimoto, and Liang]{sagawa2019distributionally}
S.~Sagawa, P.~W. Koh, T.~B. Hashimoto, and P.~Liang.
\newblock Distributionally robust neural networks for group shifts: On the importance of regularization for worst-case generalization.
\newblock In \emph{International Conference on Learning Representations (ICLR)}, 2020.

\bibitem[Shen et~al.(2025)Shen, Bühlmann, and Taeb]{shen2023drig}
X.~Shen, P.~Bühlmann, and A.~Taeb.
\newblock Causality-oriented robustness: exploiting general noise interventions in linear structural causal models.
\newblock \emph{Journal of the American Statistical Association}, 0\penalty0 (online):\penalty0 1--20, 2025.

\bibitem[Shimodaira(2000)]{shimodaira2000}
H.~Shimodaira.
\newblock Improving predictive inference under covariate shift by weighting the log-likelihood function.
\newblock \emph{Journal of Statistical Planning and Inference}, 90\penalty0 (2):\penalty0 227 -- 244, 2000.

\bibitem[Sinha et~al.(2018)Sinha, Namkoong, and Duchi]{sinha2017certifying}
A.~Sinha, H.~Namkoong, and J.~Duchi.
\newblock Certifiable distributional robustness with principled adversarial training.
\newblock In \emph{International Conference on Learning Representations ({ICLR})}, 2018.

\bibitem[Stock and Watson(2006)]{stock2016}
J.~H. Stock and M.~W. Watson.
\newblock Chapter 10: {Forecasting} with {Many} {Predictors}.
\newblock Handbook of Economic Forecasting, pages 515--554. Elsevier, 2006.

\bibitem[Sugiyama et~al.(2007)Sugiyama, Krauledat, and M{\"u}ller]{sugiyama2007covariate}
M.~Sugiyama, M.~Krauledat, and K.-R. M{\"u}ller.
\newblock Covariate shift adaptation by importance weighted cross validation.
\newblock \emph{Journal of Machine Learning Research}, 8\penalty0 (5), 2007.

\bibitem[Tsai et~al.(2024)Tsai, R~Pfohl, Salaudeen, Chiou, Kusner, D'Amour, Koyejo, and Gretton]{tsai24b}
K.~Tsai, S.~R~Pfohl, O.~Salaudeen, N.~Chiou, M.~Kusner, A.~D'Amour, S.~Koyejo, and A.~Gretton.
\newblock Proxy methods for domain adaptation.
\newblock \emph{Proceedings of The 27th International Conference on Artificial Intelligence and Statistics}, 238, 2024.

\bibitem[Tukey et~al.(1977)]{tukey1977exploratory}
J.~W. Tukey et~al.
\newblock \emph{Exploratory data analysis}, volume~2.
\newblock Reading, MA, 1977.

\bibitem[Veitch et~al.(2021)Veitch, D\textquotesingle~Amour, Yadlowsky, and Eisenstein]{veitch2021inv}
V.~Veitch, A.~D\textquotesingle~Amour, S.~Yadlowsky, and J.~Eisenstein.
\newblock Counterfactual invariance to spurious correlations in text classification.
\newblock \emph{Advances in neural information processing systems ({NeurIPS})}, 34, 2021.

\bibitem[Wainwright(2019)]{wainwright2019}
M.~J. Wainwright.
\newblock \emph{High-dimensional statistics: A Non-Asymptotic Viewpoint}, volume~48 of \emph{Cambridge Series in Statistical and Probabilistic Mathematics}.
\newblock Cambridge University Press, Cambridge, 2019.

\end{thebibliography}

\end{document}